\newtheorem{thm}{Theorem}
\newtheorem{defn}[thm]{Definition}
\newtheorem{prop}[thm]{Proposition}
\newtheorem{ex}[thm]{Example}
\newtheorem{rem}[thm]{Remark}
\DeclareMathOperator*{\argmin}{argmin}
\DeclareMathOperator*{\argmax}{argmax}
\DeclareMathOperator{\tc}{\Pi_{TC}}
\renewcommand{\ALG@name}{Workflow}
\newcommand{\Pitc}{\Pi_{\mbox{\tiny TC}}}
\newcommand{\oj}{\mathcal{S}_{\tiny c}}
\newcommand{\ojk}{\mathcal{S}_{\tiny c_k}}
\newcommand{\bbR}{\mathbb{R}}
\newcommand{\bbN}{\mathbb{N}}
\newcommand{\cA}{\mathcal{A}}
\newcommand{\cB}{\mathcal{B}}
\newcommand{\cX}{\mathcal{X}}
\newcommand{\cY}{\mathcal{Y}}
\newcommand{\eqd}{\stackrel{\mbox{\tiny d}}{=}}
\newcommand{\real}{\mathbb{R}}
\newcommand{\In}{\mathbb{I}}
\newcommand{\E}{\mathbb{E}}
\newcommand{\bP}{\mbox{P}}
\newcommand{\bp}{\mbox{p}}
\newcommand{\bQ}{\mbox{Q}}
\newcommand{\bq}{\mbox{q}}
\newcommand{\bR}{\mbox{R}}
\title{\textbf{Alignment and Comparison of Directed Networks via Transition Couplings of Random Walks}}
\author{
Bongsoo Yi\thanks{Bongsoo Yi and Kevin O'Connor are co-first authors for the paper.} \\
Department of Statistics and Operations Research, \\
University of North Carolina at Chapel Hill,
Chapel Hill, North Carolina,
USA
\and 
Kevin O'Connor$^{*}$   \\
Department of Statistics and Operations Research, \\
University of North Carolina at Chapel Hill,
Chapel Hill, North Carolina,
USA
\and 
Kevin McGoff \\
Department of Mathematics and Statistics, \\
 University of North Carolina at Charlotte, Charlotte, North Carolina, USA
\and 
Andrew B. Nobel\footnote{Corresponding author; E-mail: \href{mailto:nobel@email.unc.edu}{nobel@email.unc.edu}}\\
Department of Statistics and Operations Research, \\
University of North Carolina at Chapel Hill,
Chapel Hill, North Carolina,
USA
}
\date{}
\begin{document}

\maketitle

\begin{abstract}
We describe and study a transport based procedure called NetOTC (network optimal transition coupling)
for the comparison and alignment of two networks. 
The networks of interest may be directed or undirected, weighted or unweighted, and may have distinct vertex sets of different sizes.   
Given two networks and a cost function relating their vertices, NetOTC finds a transition coupling of their associated random 
walks having minimum expected cost. 
The minimizing cost quantifies the difference between the networks, while the optimal transport plan itself
provides alignments of both the vertices and the edges of the two networks. 
Coupling of the full random walks, rather than their marginal distributions, ensures that NetOTC captures 
local and global information about the networks, and preserves edges.  NetOTC has no free parameters,
and does not rely on randomization.
We investigate a number of theoretical properties of NetOTC and present experiments establishing its empirical performance.
\end{abstract}

\small\textbf{Keywords:} graph alignment, graph comparison, graph factor, optimal transport

\newpage

\section{Introduction}

Networks have long been used as a means of representing and studying the pairwise interactions between a set of individuals or objects under study.  More recently, networks themselves have become objects of study, including exploratory analysis and statistical modeling. When an application of interest involves multiple networks, analysis often begins with the problem of network alignment or comparison, tasks that have been studied in a number of fields. 
Network alignment compares and finds correspondences among nodes or edges within multiple networks. The aim is to recognize similar substructures, unveiling hidden relationships and functional similarities that exist within different networks.
In the simplest version of the network alignment problem, one is given two networks with vertex sets of equal size and seeks to find a bijection between the vertex sets that maximizes the number of aligned edges. 
Numerous approaches to network alignment have been considered in the literature, e.g. \cite{Kelley2003ConservedPW, Kuchaiev2010TopologicalNA, Kuchaiev2011IntegrativeNA, Kalaev2008FastAA, Klau2009ANG}.
By contrast, the goal of network comparison is more general: given two networks of different size or structure, identify and quantify similarities between them in a rigorous manner.  Perhaps the simplest form of comparison is a numerical measure of similarity between networks, which one might hope to have the properties of a metric. Potentially more informative measures of comparison include soft alignment of vertices and edges in the two networks.

Network alignment and comparison arise in a number of disciplines where network models are common.
In biology, networks have been used to represent protein-protein interactions and gene regulatory systems.
Network alignment has been used to identify interaction structures and topological similarities, facilitating the transfer of biological insights from familiar to unexplored species~\citep{Singh2008global,Elmsallati2016GlobalAO,ma2020review}.
In connectomics and neuroimaging, networks are used to model connectivity and interactions of different brain regions. 
Network alignment and comparison methods have been used to compare the connectomes of healthy and diseased 
individuals~\citep{zalesky2010network, Milano2017AnEA} as an initial step in identifying potential indicators of disease, 
understanding disease origins, and identifying specific locations within the brain that could influence the progression or onset of the disease.
Beyond biology and neuroscience, network comparison methods have found applications in economics
~\citep{fagiolo2010evolution, Engel2021}, where they have been used to compare trade networks over time,
and to identify shifts and trends in economic dynamics. 
In social science, comparisons of social networks have offered insights into the relationships and interactions between different groups~\citep{jackson2014social,mislove2007measurement}.


In this paper, we propose and analyze NetOTC, a procedure for the comparison and soft alignment of two networks. 
The networks of interest may be directed or undirected, weighted or unweighted, and may have distinct vertex sets 
of different sizes.
NetOTC, which is short for network optimal transition coupling, is based on applying a process-level
optimal transport method to the random walks arising from each network.   
In more detail, the NetOTC procedure takes as input two connected networks $G_1$ and $G_2$ 
with non-negative edge weights, which may be
directed or undirected, and a cost function relating their vertices.  
Each network gives rise to a stationary random walk on its vertex set whose transition probabilities are determined by
normalizing the edge weights at each node.  
NetOTC proceeds by finding a joint chain on the product of the vertex sets of $G_1$ and $G_2$
that has the following properties: the joint chain is stationary and Markov; 
the joint chain minimizes the expected value of the cost function 
at any fixed time point; and the transition distribution from each state $(u,v)$ in the joint chain is obtained by coupling 
the transition distribution of $u$ in $G_1$ with the transition distribution of $v$ in $G_2$.  The latter condition, which gives rise to
transition couplings, ensures that the joint chain is a coupling of the initial chains on the individual graphs.  More information
about process couplings and transition couplings can be found in Section \ref{sec:transition_couplings}.  We note that
minimization of the expected cost subject to the constraints given above is carried out analytically, and not through 
Monte Carlo methods.

As described above, the optimal transport plan arising from NetOTC is a stationary random walk on the product of the given networks 
that favors low cost pairs of vertices, while maintaining the marginal structure of the random walks on each individual network.
The cost function used in NetOTC is specified by the user, and will, in general, be application dependent.  
Cost functions can be based, for example, on the difference between externally specified vertex attributes, the distance between Euclidean embeddings of the vertices, or the difference between the degrees of the vertices.  If the given networks have the same vertex set, 
a cost function based on vertex identity may be used as well.

The NetOTC procedure has a number of desirable methodological and theoretical properties.
On the methodological side, NetOTC applies to directed and undirected networks, and readily handles networks with different sizes and 
connectivity structures.  
NetOTC has no free parameters and does not make use of randomization or Monte Carlo techniques.  
As NetOTC considers process-level couplings of random walks, the optimal transport plan
captures global information reflected in the stationary distributions of the random walks, 
as well as local information that is present in the transition probabilities between vertices. 
The expected cost of the optimal transport plan provides a numerical measure of the difference between the networks.  
The distribution of the optimal transport plan at a single time point yields a soft, probabilistic alignment of the vertices in the given networks.  
Moreover, the distribution of the optimal transport plan at two successive time points yields a soft, probabilistic alignment of the {\em edges} of the given networks.
To the best of our knowledge, native alignment of edges is unique to NetOTC among existing alignment and comparison methods.  
Once the vertex cost function has been specified, the exact version of the NetOTC procedure has no free parameters and does
not make use of randomization.  

On the theoretical side, we establish several key properties of the NetOTC procedure that support its use in comparison and alignment tasks.
The edge alignment of NetOTC respects edges, in the sense that vertex pairs in the given networks are aligned with positive probability 
only if each pair is connected by an edge in its respective network.
Although the NetOTC optimal transport plan minimizes the expected cost between vertices at time zero,
stationarity ensures that the same is true at any other fixed time, and that
the coupled random walk has a low average cost between vertices across time.
The NetOTC similarity measure is sensitive to differences in the $k$-step behavior of
the random walks on $G_1$ and $G_2$ if these differences affect the cumulative cost.  
For undirected networks with a common vertex set,
the NetOTC similarity is a metric on equivalence classes of networks 
having identical random walks if the cost $c$ is a metric.  
For the zero-one cost, the NetOTC similarity is lower bounded by the $\ell_1$-difference of the network 
degree sequences, and the $\ell_1$-difference of the network weight functions.

In addition, we study the structure of the NetOTC method through network factors.  
Our definition of factor, which arises naturally when considering functions of Markov chains, differs from other definitions in the network theory literature.
Informally, a network $G_2$ is a factor of a network $G_1$ if the vertices of $G_2$ can be associated, 
via a vertex map $f$, with disjoint
sets of vertices in $G_1$ between which aggregate weights can be consistently defined.
We show that if $G_2$ is a factor of $G_1$, then the vertex map
$f$ yields a deterministic transition coupling of their associated random walks.
Under suitable compatibility conditions on the cost function, this coupling will be optimal and will provide a solution to the NetOTC problem.  
The resulting expected cost, vertex alignment, and edge alignment are fully determined by the structure of $G_1$ and the map $f$.
Importantly, the existence and precise nature of the map $f$ need not be known to the NetOTC procedure.
These results extend to paired factors: an optimal transition coupling for a pair of networks can be mapped in a deterministic fashion to an optimal transition coupling of their factors
when the cost functions for each pair are compatible with the factor maps.  

As a complement to the theory, we carry out a number of simulations and numerical experiments to assess the performance of NetOTC and compare it with other optimal transport-based comparison methods in the literature.  
NetOTC is competitive with other methods on a number of network classification tasks.
In an extensive experiment on pairs of isomorphic networks with small to moderate sizes, NetOTC was consistently able to recover the isomorphism using a local (degree-based) cost function, substantially outperforming other methods.
When applied to stochastic block models (with equivalent blocks of different sizes) using a degree-based cost function, NetOTC was competitive with other methods in its ability to align vertices in equivalent blocks, and substantially better at aligning edges.  
We also considered the problem of comparing a network to an exact or approximate factor using a distance-based cost derived from Euclidean vertex embeddings of the given networks.
NetOTC outperforms other methods in its ability to align vertices in the parent and factor networks.  While the performance gap is modest for exact factors, it increases as one considers approximate factors.

\subsection{Outline of the Paper}

The next section gives an overview of existing work on optimal transport and related approaches to network comparison and alignment.  Section \ref{sec:transition_couplings} 
provides background concerning random walks on directed networks, optimal transport for Markov chains, 
and transition couplings.  
The NetOTC procedure is described in Section \ref{sec:graphotc}, including 
computation, the optimal transport cost, and vertex and edge alignment.
Section \ref{sec:theory} is devoted to the formal statement and discussion of the theoretical properties of NetOTC.
Proofs are given in Section \ref{sec:proofs}.
Section \ref{sec:experiments} contains a number of simulations and a number of experiments that demonstrate the flexibility and potential utility of NetOTC. 
Additional details concerning the experiments are given in Appendix \ref{app:exp_details}. 

\vskip.3in

\section{Related Work}
\label{sec:related_work}


The problems of network alignment and comparison have received a lot of recent attention in the literature.  Approaches using optimal transport ideas can be divided into several groups: spectral methods, variants of Gromov-Wasserstein, and methods involving random walks and Markov chains.  Other approaches make use of quadratic programming
and continuous approximations.  This related work is discussed below.

\paragraph{Spectral Methods.} 

One line of work \citep{dong2020copt, maretic2019got, maretic2020wasserstein} uses techniques from spectral graph theory to define optimal transport (OT) problems for networks.
In particular, this approach associates to each network a multivariate Gaussian with zero mean and covariance matrix equal to the pseudo-inverse of the graph Laplacian.
The Wasserstein distance between Gaussian distributions of the same dimension may be computed analytically 
in terms of their respective covariance matrices.
For networks with different numbers of vertices, \cite{maretic2020wasserstein} and \cite{dong2020copt} propose to optimize this distance over soft many-to-one assignments between vertices in either network.
At present, this family of approaches is unable to incorporate available feature information or underlying cost functions, relying only on their intrinsic structure.

\paragraph{Variants of Gromov-Wasserstein.}
Another line of work \citep{memoli2011gromov, peyre2016gromov, titouan2019optimal,vayer2019sliced,vayer2020fused} considers the Gromov-Wasserstein (GW) distance and related extensions.
In this work, one tries to couple distributions on the vertices in each network so as to minimize an expected transport cost between vertices while minimizing changes in edges between the two networks.
This approach allows one to capture differences in both features and structure between networks.
We refer the reader to \cite{dong2020copt} for a discussion on the differences between spectral-based network OT methods and GW distances.
A number of variants of the GW distance have been proposed for a variety of tasks including cross-domain alignment \citep{chen2020graph}, graph partitioning \citep{xu2019scalable}, graph matching \citep{xu2019scalable, xu2019gromov}, and node embedding \citep{xu2019gromov}.
The work \citep{barbe2020graph} proposes to incorporate global structure into the Wasserstein and Fused GW (FGW) distances by applying heat diffusion to the vertex features before computing the cost matrix.

\paragraph{Methods involving random walks and Markov chains.}
It is well-known that a weighted network $G$ with non-negative edge weights can be viewed as a Markov 
chain $X = \{X_k\}_{k=1}^{\infty}$, and there is previous work that uses this perspective to align or compare networks. 
The paper~\citep{svn2010graph} studies a flexible family of kernels for comparing two given networks. 
Given networks $G$ and $H$ with associated 
transition matrices $P$ and $Q$ the 
kernels take the form $\kappa(G,H) = \sum_{k \geq 1} \mu_k q^t (P \otimes Q)^k p$, where $p$ and $q$ are
starting and stopping distributions, $\mu_k$ are non-negative weights, and 
$(P \otimes Q)^k$ is the k-step transition matrix of the independent coupling of the
random walks on $G$ and $H$. 
The free parameters $p, q, \{\mu_k\}$ are user specified; appropriate choices 
allow for efficient computation. 
The kernels $\kappa(G,H)$ are distinct from NetOTC, as they employ only independent couplings, and do not involve the use of optimal transport.

Several recent papers~\citep{chen2022weisfeiler, Chen2023TheWD} have studied a Markov chain-based distance function
for networks that has close connections to the classical Weisfeiler-Lehman (WL) test for graph isomorphism. 
Given two weighted networks $G$ and $H$ and a fixed time $k \geq 1$, the $k$-step WL-distance 
$d_{\text{WL}}^k(G,H)$ is equal to the minimum expected cost $\min \mathbb{E}c(\tilde{X}_k,\tilde{Y}_k)$ at time $k$, where the minimum is taken over 
all (possibly time-inhomogeneous) Markovian couplings of $X$ and $Y$. 
The transition couplings used in the present work have the additional requirement that they must be 
time-homogeneous and stationary (see Section \ref{sec:transition_coupling}). 
The family $\{d_{\text{WL}}^k : k \geq 1\}$ 
is further investigated in~\citep{Brugere2023DistancesFM}, where it is shown (in Proposition 23) that 
\begin{equation} \label{d-infinity}
d_{\text{WL}}^{\infty}(G,H) := \lim_k d_{\text{WL}}^k(G,H) = \sup_k d_{\text{WL}}^k(G,H).
\end{equation} 
Moreover, it is shown in Proposition 5 of~\citep{Brugere2023DistancesFM} 
that 
\begin{equation} \label{d-WL-inequality}
d_{\text{WL}}^{\infty}(G,H) \leq d_{\text{OTC}}(G,H),
\end{equation} 
where $d_{\text{OTC}}$ is the NetOTC distance studied in this paper.
Using (\ref{d-infinity}) and (\ref{d-WL-inequality}) one may verify that the $k$-step WL-distance is not in general equal 
to $d_{\text{OTC}}$. 
Furthermore, (\ref{d-WL-inequality}) ensures that the NetOTC distance has at least as much discriminatory power as 
the WL-test in the graph isomorphism detection problem. 
We refer the reader to \cite{Brugere2023DistancesFM} for more details.

Lastly, let us mention that there is a constrained optimal transport method called Causal Optimal Transport (COT) that has been used in finance and machine learning~\citep{Lassalle2013CausalTP}. By Lemma 3.11 in \cite{Chen2023TheWD}, any Markovian coupling is also a bi-causal coupling, and therefore the transition couplings considered in this paper are also bi-causal. However, Theorem 3.12 from \cite{Chen2023TheWD} states that the $k$-step WL distance is equal to the bi-causal transport distance in which the given cost function is evaluated at time $k$. Thus, the relationship between the OTC distance and COT is the same as the relationship between OTC and the WL-distances: they are distinct notions, with $d_{\text{OTC}}$ greater than or equal to bi-causal transport distance at any fixed time. 

\paragraph{Other Methods for Network Alignment and Comparison.}
There is also a large body of work devoted to network alignment and comparison that does not use optimal transport methods.
The network alignment problem can be generally defined as a quadratic programming problem under discrete and doubly stochastic constraints \citep{yan2016ashort, jiang2017graph, loiola2007asurvey, cho2010reweighted, cour2007balanced}.
However, as the optimal network alignment problem is well known to be an NP-hard problem \citep{garey1990computers}, it is computationally challenging to obtain an optimal alignment for networks. 
For this reason, many authors have proposed approximate solutions for network alignment \citep{cho2010reweighted, zhou2016factorized, yu2018generalizing, enqvist2009optimal, van2004apocs, zaslavskiy2009apath}. 
Among these approximate methods, most of the successful algorithms start with relaxing the discrete constraints to create a continuous condition. Several authors
\citep{schellewald2005probabilistic, torr2003solving} relax the discrete conditions to form a positive semi-definite problem. A non-convex quadratic programming problem was adopted in \cite{gold1996agraduated, cho2010reweighted, zhou2016factorized}. In another direction, \cite{leordeanu2005aspectral} introduces spectral matching as a simple relaxation, and \cite{cour2007balanced} strengthens this approach by giving an affine constraint.
Also, \cite{jiang2017graph} proposes an algorithm that can efficiently solve a general quadratic programming problem with doubly stochastic constraints. Each step of the algorithm is easy to implement and the convergence is guaranteed. Generally, after finding the optimal solution for the relaxed continuous problem, the discrete alignment is attained through a final discretization process \citep{cho2010reweighted, leordeanu2005aspectral, leordeanu2009aninteger}.
We note that these approaches may find a solution that is locally optimal but not globally optimal.

Another line of research is devoted to the statistical and probabilistic analysis of graph matching when the 
given graphs are generated at random but are correlated with one another, e.g., each is a random 
perturbation of a given graph \citep{Ding2020EfficientRG, Barak2019NearlyEA, Cullina2020PartialRO, Cullina2017ExactAR, Feizi2020SpectralAO, Korula2014AnER, Lyzinski2014SeededGM, Yartseva2013OnTP}. 
Much of this work investigates various matching procedures under specific random graph models, 
such as correlated Erdos-Renyi graphs,
and is concerned with the information-theoretic threshold for exact recovery, and the time complexity 
of the matching procedures.

\vskip.3in

\section{Transition Couplings of Random Walks on Networks}
\label{sec:transition_couplings}

This section provides background for the detailed description of the NetOTC procedure. 
We begin by recalling how a weighted network gives rise to a random walk on its vertex set and reviewing the definition and framework
of optimal transport. 
We then consider transition couplings of random walks, which preserve stationarity and the Markov property.  
The computation of optimal transition couplings is the basis for the NetOTC procedure, which is described in Section \ref{sec:graphotc} below. 

\subsection{Random Walks on Directed Networks}

Let $G = (U, E, w)$ denote a network with finite vertex set $U$, edge set
$E \subseteq U \times U$, and non-negative weight function $w: U \times U \to \real_+$.
An ordered pair $(u,u') \in E$ represents a directed edge from $u$ to $u'$ with weight $w(u,u')$.  
We assume in what follows that $w(u,u') > 0$ if and only if $(u,u') \in E$.  
For any vertex $u \in U$, we let $d(u) = \sum_{u' \in U} w(u,u')$ be the weighted out-degree of $u$. 
An undirected network is represented by a directed network in which
$w(u,u') = w(u',u)$ for each $u, u' \in U$. 
A path in $G$ is an ordered sequence of vertices $u_0,\dots,u_n \in U$ such that $(u_{i-1},u_{i}) \in E$ for each $i=1,\dots,n$.  
A network $G$ is strongly connected if for each ordered pair $(u,v) \in U \times U$ there exists a path 
$u_0, \dots, u_n$ in $G$ such that $u_0 = u$ and $u_n = v$.  In this case $d(u) > 0$ for each vertex $u \in U$.

To any network $G = (U,E,w)$, one may associate a Markov transition kernel $\bP( \cdot \mid \cdot)$ 
with state space $U$ as follows: for each pair of vertices $u$ and $u'$, the probability of transitioning from $u$ to $u'$ is 
\begin{equation}\label{eq:net_to_trans}
\bP( u' \mid u) = \frac{w(u,u')}{ d(u) }.
\end{equation}
Recall that a probability distribution $\mbox{p}$ on $U$ is said to be stationary for $\bP(\cdot \mid \cdot)$ if
$\mbox{p}(u') = \sum_{u \in U} \mbox{p}(u) \, \bP( u' \mid u)$ for all $u' \in U$.
It follows from the Perron-Frobenius theorem that the transition kernel $\bP(\cdot \mid \cdot)$ admits at least one stationary distribution $\mbox{p}$. 
Together, $\bP$ and $\mbox{p}$ define a stationary 
Markov chain $X = X_0, X_1, X_2, \dots \in U$ such that for any $u_0,\dots,u_n$ in $U$ 
\begin{equation*}
\mathbb{P} \bigl( X_0 = u_0,\dots,X_n = u_n) \ = \ \mbox{p}(u_0) \, \prod_{i=0}^{n-1} \bP(u_{i+1} \mid u_i).
\end{equation*}
The Markov chain $X$ is commonly referred to as a random walk on $G$. 
When $G$ is strongly connected, the kernel $\bP( \cdot \mid \cdot)$ admits a \textit{unique} stationary distribution,
and we refer to $X$ as \textit{the} random walk on $G$.  Random walks on networks have
been studied extensively in the probability literature, and have found numerous applications in fields ranging
from genomics to computer science, including 
recent work on network embedding \citep{HamiltonGraph, Grover2016node2vecSF, Perozzi2014DeepWalkOL}.
In what follows, we will generally assume that the networks under consideration are strongly connected.  
In this case, the random walk $X$ on $G$ is an irreducible Markov chain (Chapter 4 of \cite{blum2020foundations}).

\subsection{Optimal Transport}

Let $X$ and $Y$ be random objects taking values in sets $\cX$ and $\cY$, respectively.
In what follows we are primarily interested in the case that $X$ and $Y$ are processes.
A \textit{coupling} of $X$ and $Y$ is a jointly distributed pair $(\tilde{X}, \tilde{Y})$ of random objects
taking values in $\cX \times \cY$ with the property that $\tilde{X} \eqd X$ and $\tilde{Y} \eqd Y$.
Here $\tilde{X} \eqd X$ means that $\tilde{X}$ and $X$ have the same distribution on $\cX$,
and $\tilde{Y} \eqd Y$ is interpreted similarly.  The distinction between $X, Y$ and
$\tilde{X}, \tilde{Y}$ arises from the fact that $X$ and $Y$ are understood and specified 
through their individual distributions, whereas $\tilde{X}$ and $\tilde{Y}$ are understood and specified as
a jointly distributed pair.  (In general, $X$ and $Y$ may be defined on different probability spaces,
whereas $\tilde{X}$ and $\tilde{Y}$ are necessarily defined on the same probability space.)
Couplings have been widely studied in the probability literature, 
and are the basic objects of interest in optimal transport. 

Let $\Pi(X,Y)$ denote the set of all couplings $(\tilde{X}, \tilde{Y})$ of $X$ and $Y$.  
Note that $\Pi(X,Y)$ is not empty, as it always contains the independent coupling $(\tilde{X}, \tilde{Y})$ in which 
$\tilde{X}$ and $\tilde{Y}$ are independent copies of $X$ and $Y$, respectively.
Each coupling $(\tilde{X},\tilde{Y}) \in \Pi(X,Y)$ is associated with a joint distribution $\pi$ on 
$(\cX \times \cY, \cA \times \cB)$ that can be viewed conditionally as a plan for transporting the distribution 
of $X$ to that of $Y$ and vice versa.
Let $c : \cX \times \cY \to \mathbb{R}_+$ be a measurable, non-negative cost function relating the 
elements of $\cX$ and $\cY$. 
The optimal transport problem is to minimize the expected value of the cost function over all couplings of $X$ and $Y$, namely
\begin{equation*}
\mbox{minimize } \ \mathbb{E}c(\tilde{X},\tilde{Y}) \ \mbox{ over } \ (\tilde{X},\tilde{Y}) \in \Pi(X,Y).
\end{equation*}
A minimizer of the optimal transport problem is called an optimal coupling of $X$ and $Y$, or an optimal transport plan.  
The theory and applications of 
optimal transport are active areas of research. 
See \cite{peyre2019computational, Villani2008OptimalTO} for further reading and more details.

\subsection{Transition Couplings}\label{sec:transition_coupling}

Let $G_1 = (U, E_1, w_1)$ and $G_2 = (V, E_2, w_2)$ be weighted directed networks, and 
let $c: U \times V \to \real_+$ be a cost function relating their vertex sets.   
As described above, the network $G_1$ is associated with a random walk
$X = X_0, X_1, \ldots$ on the vertex set $U$.  We may regard the process $X$ as a random element of  
the set $\cX = U^{\bbN}$ equipped with the Borel sigma-field arising from the usual product topology on $U^{\bbN}$. 
Similarly, the network $G_2$ is associated with a random walk $Y = Y_0, Y_1, \ldots$ taking values in 
$\cY = V^{\bbN}$.
A coupling of the processes $X$ and $Y$ is a joint process 
\[
(\tilde{X}, \tilde{Y}) = (\tilde{X}_0, \tilde{Y}_0), (\tilde{X}_1, \tilde{Y}_1), (\tilde{X}_2, \tilde{Y}_2), \ldots
\]
with values in $U \times V$
such that $\tilde{X} = \tilde{X}_0, \tilde{X}_1, \ldots \eqd X$ and $\tilde{Y} = \tilde{Y}_0, \tilde{Y}_1, \ldots \eqd Y$,
where $\eqd$ indicates equality of distribution.  
In general, a coupling of $X$ and $Y$ need not be stationary
or Markov, an issue that we take up below.

In studying optimal transport of the random walks $X$ and $Y$ we make use of the single letter cost 
$\tilde{c}: \cX \times \cY \to \real$ defined by $\tilde{c}(x,y) = c(x_0,y_0)$.
The standard optimal transport problem with the cost $\tilde{c}$ seeks to minimize 
$\E c(\tilde{X}_0, \tilde{Y}_0)$ over the family $\Pi(X,Y)$ of all couplings of 
the Markov chains $X$ and $Y$. 
However, for most purposes $\Pi(X,Y)$ is too large: in general, it will include
couplings that are non-stationary, and not Markov of any order.  
Without further restrictions, an optimal coupling will minimize the expected cost 
only at time zero, after which the processes $\tilde{X}$ and $\tilde{Y}$ may evolve independently 
(and potentially have a large realized cost).
Restricting attention to stationary couplings addresses some of these 
issues \citep{o2020optimal}.  We note that 
stationary couplings of stationary processes, also known as joinings,
have been widely studied in the ergodic theory literature (see
\cite{de2005introduction, glasner2003ergodic, ornstein1973application} and the references therein).

When considering random walks $X$ and $Y$ on graphs, which are Markov chains,
it is natural to consider couplings $(\tilde{X}, \tilde{Y})$ 
that are themselves Markov chains, so that the structure of the couplings matches that of the walks.
Unfortunately, even the family of stationary first order Markov couplings presents some difficulties: 
there is no fast method for computing optimal couplings, and the optimal expected cost need not
have the properties of a metric even when the cost $c$ does \citep{ellis1976thedj, ellis1978distances}.
For these reasons, we restrict attention to the subfamily of transition couplings, which are defined below.

\begin{defn} \label{def:TC}
Let $X$ be a stationary Markov chain with values in $U$ and transition kernel $\bP$,
and let $Y$ be a stationary Markov chain with values in $V$ and transition kernel $\bQ$. 
A stationary Markov chain $(\tilde{X},\tilde{Y})$ with values in $U \times V$ is a 
{\em transition coupling} of $X$ and $Y$ 
if it is a coupling of $X$ and $Y$ and if it has a transition kernel $\bR$ such that for every $u_0,u_1 \in U$ and $v_0,v_1 \in V$,
\begin{equation} \label{defn:TC}
\sum_{v \in V} \bR(u_1, v  \, | \, u_0, v_0)  =  \bP(u_1 \, | \, u_0)
\hskip.2in \text{and} \hskip.2in
\sum_{u \in U} \bR(u , v_1  \, | \, u_0, v_0)  =  \bQ(v_1 \, | \, v_0).
\end{equation} 
Let $\Pitc(X,Y)$ denote the set of all transition couplings of $X$ and $Y$. 
When (\ref{defn:TC}) holds, we will also say that $\bR$ is a transition coupling of $\bP$ and $\bQ$.
\end{defn}

The transition coupling condition \eqref{defn:TC} can be stated equivalently as follows:
for every state $(u_0,v_0) \in U \times V$ of the joint chain, the distribution 
$\bR(\cdot  \mid  u_0, v_0)$ of the next state
is a coupling of the next state distributions $\bP(\cdot  \mid  u_0)$ and $\bQ(\cdot  \mid  v_0)$ 
of the individual chains.  
The set of transition couplings $\tc(X,Y)$ is non-empty, as the independent coupling of 
 $X$ and $Y$, with transition kernel 
 $\bR(u', \hspace{-.01in} v' \mid u, v) = \bP(u' \hspace{-.01in} \mid u) \, \bQ(v' \hspace{-.01in} \mid v)$, is a transition coupling. 

Couplings have long been employed in the analysis of Markov chains, often to study the rate at which the marginal 
distribution of a chain started from a particular state converges to the stationary distribution of the chain. 
In a typical analysis, two versions of a chain are run from different initial conditions until they reach the same state, 
after which they coincide.
The transition couplings considered here are couplings of two distinct processes, one for each of the given networks. 
Transition couplings as defined in Definition \ref{def:TC} are sometimes called Markovian couplings in the probability literature~\citep{levin2017markov}, 
but the use of this terminology is not standardized.  
The term transition coupling that is used here was introduced in \cite{o2020optimal}.
\cite{Chen2023TheWD} use the term Markovian coupling to refer to the class of time-inhomogeneous (non-stationary) Markov couplings 
in which transition probabilities may vary from time point to time point, and for which the initial distribution is a coupling of the initial distribution of the
given processes.

\vskip.3in

\section{NetOTC}
\label{sec:graphotc}

In this section, we describe the NetOTC procedure in more detail, including a statement of the 
NetOTC problem, as well as exact and approximate computational methods for its solution.

\subsection{The NetOTC Problem}

Let $G_1$ and $G_2$ be strongly connected networks of interest.  
Each network gives rise to a unique random walk on its vertex set, whose transition probabilities
are determined by their connectivity and edge weights; the stationary distribution of the walk 
reflects the global structure of the network, while the transition probabilities of the walk reflect the local structure of the network.  
Let $X$ and $Y$ be the walks associated with $G_1$ and $G_2$, respectively. 
In the NetOTC problem, we seek to minimize the expected cost $\E c(\tilde{X}_0, \tilde{Y}_0)$ over all transition couplings
$(\tilde{X}, \tilde{Y})$ of $X$ and $Y$.   In particular, we wish to identify both the minimizing value
\begin{equation} \label{eqn:GOTCmin}
\rho(G_1, G_2) \ = \ 
\min_{(\tilde{X}, \tilde{Y}) \, \in \, \tc(X,Y)} \E c(\tilde{X}_0, \tilde{Y}_0),
\end{equation}
and an associated optimal transition coupling
\begin{equation} \label{eqn:GOTCargmin}
(X^*,Y^*) \ \in \ \argmin_{(\tilde{X}, \tilde{Y}) \, \in \, \tc(X,Y)} \E c(\tilde{X}_0, \tilde{Y}_0). 
\end{equation}
An optimal transition coupling is a stationary random walk
\[
(X^* \hspace{-.02in},Y^*) = (X_0^* \hspace{-.02in},Y_0^*) , (X_1^* \hspace{-.02in},Y_1^*), \ldots
\] 
on the product $U \times V$ that preserves the marginal behavior of the walks $X$ and $Y$, 
while favoring pairs $u,v$ with low cost.  
In particular, $(X^* \hspace{-.02in},Y^*)$ is an optimal coupling of the {\it processes} $X$ and $Y$, not just
their one-dimensional (stationary) distributions.
As such, the optimal transport plan identified by NetOTC 
captures and links the local and global structure of the given networks.

As noted above, the set $\tc(X,Y)$ of transition couplings is non-empty. 
We endow $\tc(X,Y)$ with the standard topology (inherited as a subset of the weak* topology on the space 
of finite-valued stochastic processes) under which it is compact and the expected cost function 
$(\tilde{X}_0,\tilde{Y}_0) \mapsto \mathbb{E} c(\tilde{X}_0,\tilde{Y}_0)$ is continuous.
Thus, the minimum in (\ref{eqn:GOTCmin}) is achieved, and there exists an optimal transition coupling 
in (\ref{eqn:GOTCargmin}).  In general, there may be many solutions to the NetOTC problem. 
For example, if the cost function is constant, then all transition couplings are optimal.

While the objective function of the NetOTC problem involves only the first time point 
of the joint process $(\tilde{X},\tilde{Y})$, 
the restriction to transition couplings ensures that 
the optimal coupling performs well on average over multiple time points
(see Proposition \ref{prop:gotceq} below), and that it captures the dynamics of the individual chains.
In general, the minimizing value of $\E c(\tilde{X}_0, \tilde{Y}_0)$ will (strictly) 
decrease as one moves from transition couplings to general Markov couplings, 
from Markov couplings to stationary couplings, and from stationary couplings to general couplings 
\citep{ellis1976thedj, ellis1978distances, ellis1980conditions, ellis1980kamae, o2020optimal, o2021estimation}.

We note that the NetOTC problem is {\em not} equivalent to the problem of finding a one-step 
optimal transition coupling, which is considered in \citep{song2016measuring, zhang2000existence}.  
In the latter problem one
finds, for every $u \in U$ and $v \in V$, a coupling $(\tilde{X}_0, \tilde{Y}_0)$
of $X_0 \sim \bP(\cdot | u)$ and $Y_0 \sim \bQ(\cdot | v)$ minimizing $\E c(\tilde{X}_0, \tilde{Y}_0)$.   
A one-step optimal transition coupling does not necessarily exhibit good performance over multiple time steps,  
as it does not account for the global structure of the given networks.

\subsection{Cost Functions}\label{sec:cost_function}

In practice, the specification of a cost function depends on the goals of the network alignment or comparison problem.
The cost function is typically based on prior information about the vertex sets of the given networks, including vertex features 
and embeddings, if these are available. 
If $U = V$ we may use the 0-1 cost $c(u,v) = \In(u \neq v)$.
If the vertices of $G_1$ and $G_2$ are associated with features or attributes in a common, discrete set $\mathcal{S}$ then 
one may take $c(u,v) = \rho(\tilde{u}, \tilde{v})$ where $\rho$ is a cost function relating the 
elements of $\mathcal{S}$, and $\tilde{u}, \tilde{v} \in \mathcal{S}$ are the features associated with vertices $u$ and $v$, respectively.
If $\mathcal{S}$ is a finite set, the zero-one cost $c(u,v) = \In(\tilde{u} \neq \tilde{v} )$ is often a good choice.
If the vertices of $G_1$ and $G_2$ are embedded in a common Euclidean space $\real^d$ via embeddings 
$h_1: U \to \real^d$ and $h_2: V \to \real^d$, then it is natural to use an embedding-based cost such as
$c(u,v) = || h_1(u) - h_2(v) ||$ or $c(u,v) = || h_1(u) - h_2(v) ||^2$.
In cases where such prior maps are unavailable, one may consider costs defined in 
terms of intrinsic properties of the networks of interest, or embed the vertices in a 
Euclidean space a priori using methods such as Laplacian eigenmaps \citep{belkin2003laplacian}.

A cost function that is applicable in general is the degree-based cost: $c(u,v) = (\mbox{deg}(u) - \mbox{deg}(v))^2$. 
For undirected networks, $\mbox{deg}(u)$ is sum 
of weights of all edges adjacent to $u$.
For directed networks, one may use in-degree, out-degree, or a combination of these.
Unless otherwise specified, we use out-degree in this paper.
One may also use the standardized degree $\overline{d}(u) = \mbox{deg}(u)/ \sum_{u' \in U} \mbox{deg}(u')$ 
when comparing networks of significantly different sizes.
Extending this idea, one may employ cost $c(u,v)$ based on the degree distributions of a fixed local neighborhood of $u$ and $v$.

\subsection{Computation of NetOTCs}

\begin{algorithm*}[t]
\caption{Solving the NetOTC Problem}\label{alg:netotc}
\vskip.05in
\textbf{Input:} Networks $G_1 = (U, E_1, w_1)$ and $G_2= (V, E_2, w_2)$. Cost function $c(u,v)$. 
\begin{algorithmic}
\State \textbf{Step 1.} Compute the transition probabilities $P$ and $Q$ of the random walks associated with $G_1$ and $G_2$ according to \eqref{eq:net_to_trans} 
\vskip.05in
\State \textbf{Step 2.} Pass $P$ and $Q$ to the procedure of \cite{o2020optimal}, which yields the optimal cost $\rho$, as well as the
stationary distribution $\pi$ and transition kernel $\bR$ of an optimal transition coupling
\vskip.05in
\State \textbf{Step 3.} Calculate vertex alignment as in (\ref{eqn:vtxalign}) and edge alignment as in (\ref{eqn:edgealign})
.\end{algorithmic}
\vskip.05in
\textbf{Output:} NetOTC cost $\rho$, Vertex alignment $\pi_{\text{v}}$, Edge alignment $\pi_{\text{e}}$
\end{algorithm*}

A workflow for NetOTC is given in Workflow~\ref{alg:netotc}.
The NetOTC procedure does not rely on randomization, and has no free parameters: 
its output is fully determined by the given networks and the cost function $c$.
Finding an optimal transition coupling (OTC) of the random walks $X$ and $Y$ 
derived from the given networks is a non-convex, constrained optimization problem that is not amenable to standard
optimization routines.    
Instead, NetOTC uses the method of~\citep{o2020optimal}, in which the problem of finding an OTC 
is reframed as a Markov decision process (MDP)
with state space $\mathcal{S} = \cX \times \cY$, where the
admissible actions in state $s = (x,y)$ correspond to couplings $r_s$ of the transition distributions $P(x,\cdot)$ and $Q(y, \cdot)$.
The transition distribution of the MDP in state $s$ with action $r_s$ is simply given by $r_s$, while the reward function of the MDP is simply the negative of the cost function
$R(s, r_s) = - c(x,y)$, where $s = (x,y)$.
Reformulated in this way, the OTC problem corresponds to finding an optimal policy for the MDP, a problem to which policy 
iteration~\citep{howard1960dynamic} may be applied (with standard optimal transport solvers used in the policy update steps). 
The algorithm requires $\mathcal{O}((|U||V|)^3)$ operations per iteration.
In practice, it converges after fewer than 5 iterations.
\cite{o2020optimal} also describes a more efficient algorithm based on entropic regularization and Sinkhorn iterations.
When applied to NetOTC, the regularized algorithm requires $\mathcal{O}((|U| |V|)^2)$ operations per iteration 
(up to poly-logarithmic factors), which is nearly-linear in the dimension of the couplings under consideration, 
and in this sense comparable to the state-of-the-art for entropic OT algorithms \citep{peyre2019computational}.
Pseudocode and more details on the method can be found in Section 4 of \cite{o2020optimal}. 

In general, NetOTC problem may have multiple solutions.  The NetOTC algorithm is only guaranteed to 
return a single minimizer, which is not guaranteed to be irreducible.  
On the other hand, the entropically regularized problem has a unique minimizer, 
which is aperiodic and irreducible when $X$ and $Y$ are aperiodic and irreducible.
The current implementation of NetOTC can handle networks with up to 200 vertices.  Research on faster computation
of OTCs is currently ongoing.

\subsection{NetOTC Deliverables}

%
%

\noindent{\bf Difference measure for networks.}
The solution of the NetOTC problem yields the minimizing value $\rho(G_1, G_2)$ of the expected cost, 
and the associated optimal transition plan $(X^*,Y^*)$. 
The minimum cost $\rho(G_1, G_2)$ measures the difference between $G_1$ and $G_2$ and can be 
utilized for network comparison tasks. 
For undirected networks with the same vertex set, $\rho( \cdot, \, \cdot)$ is a metric if the cost function is a metric 
(see Proposition \ref{prop:gotcismetric}).

\vskip.1in

\noindent{\bf Vertex alignment.}
The optimal transport plan $(X^*, Y^*)$ itself provides soft, probabilistic alignments of the vertices and 
edges of $G_1$ and $G_2$ based on the joint distribution of pairs in the coupled chain. 
The vertex alignment $\pi_{\text{v}}$ produced by NetOTC is derived from the stationary distribution
of the optimal transport plan $(X^*, Y^*)$, which is the distribution of the pair $(X_0^*, Y_0^*)$.  
The vertex alignment is defined by
\begin{align}
\label{eqn:vtxalign}
\pi_{\text{v}}(u,v) = \mathbb{P}((X_0^*, Y_0^*)= (u,v)).
\end{align}
One may define probabilistic vertex alignments in a similar manner for other OT-based network comparison methods (see Section \ref{sec:graph_isomorphism} and \ref{sec:sbm_alignment}).

\vskip.1in

\noindent{\bf Edge alignment.}
A unique feature of NetOTC is that the optimal transport plan naturally yields a probabilistic alignment of 
the edges of the given networks.  The edge alignment is obtained from the first two pairs $\{ (X_0^* , Y_0^*), (X_1^*, Y_1^*) \}$
in the optimal transport plan.  For $u, u' \in U$ and $v, v' \in V$ the edge alignment is defined by
\begin{align}
\label{eqn:edgealign}
    \pi_{\text{e}}((u,u'),(v,v')) = \mathbb{P}((X_0^* , Y_0^*) = (u,v), (X_1^*, Y_1^*)= (u',v')).
\end{align}
It is straightforward to show (see Proposition \ref{prop:edge_preservation}) that vertex pairs aligned with positive probability
must be adjacent in their respective networks.  
In other words, NetOTC aligns only existing edges; it does not create new ones.
By contrast, most alignment methods in the literature have as their primary focus the matching of vertices, 
with edges functioning primarily as a means of evaluating matchings.  Alignments arising in this way can map adjacent vertices 
in $G_1$ to non-adjacent vertices in $G_2$, or vice versa.

\vskip.3in

\section{Theoretical Properties of NetOTC}
\label{sec:theory}

In this section, we explore some theoretical properties of NetOTC, beginning 
with the edge-alignment property and several results concerning the behavior of NetOTC under average cost criteria.
For undirected networks with a common vertex set, we establish that the NetOTC cost has the properties of a metric 
when the cost $c$ does, and we investigate the sensitivity of NetOTC to local information arising from degree and 
weight functions.
We then define a notion of \textit{network factor} that captures the idea of one network being
``folded'' or ``compressed'' to produce another.  Although the literature contains several definitions of 
network factors or factor networks, the definition given here appears to be new.  We establish a 
close connection between factors and transition couplings, and then we
present two results describing the behavior of NetOTC in the presence of this type of network factor structure.
All proofs of the results in this section may be found in Section \ref{sec:proofs}.

\subsection{NetOTC Edge Alignment}

The NetOTC edge alignment function $\pi_{\text{e}}$ respects edges, in the sense that vertex pairs aligned with 
positive probability must be adjacent in the given networks.

\begin{restatable}[]{prop}{edgepreservation}
\label{prop:edge_preservation}
Let $\pi_{\text{e}}$ be the NetOTC edge alignment of networks 
$G_1 = (U, E_1, w_1)$ and $G_2 = (V, E_2, w_2)$ based on the optimal transport plan $(X^*, Y^*)$.
If $\pi_{\text{e}}((u,u'),(v,v')) > 0$ then $(u,u') \in E_1$ and $(v,v') \in E_2$. 
\end{restatable}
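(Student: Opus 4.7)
The plan is to unpack the definition of $\pi_{\text{e}}$ in terms of the transition kernel of the optimal transport plan, and then apply the marginal constraints \eqref{defn:TC} to pass from joint transition probabilities to the underlying transition kernels $\bP$ and $\bQ$ of the random walks on $G_1$ and $G_2$. From there, nonzero transition probabilities translate directly into nonzero edge weights, which is the definition of edges in the given networks.

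First, I would write the optimal transition coupling $(X^*, Y^*)$ in terms of its stationary distribution and transition kernel $\bR$, so that
\[
\pi_{\text{e}}((u,u'),(v,v')) \ = \ \mathbb{P}((X_0^*, Y_0^*) = (u,v)) \cdot \bR(u', v' \mid u, v).
\]
If $\pi_{\text{e}}((u,u'),(v,v')) > 0$, then in particular $\bR(u', v' \mid u, v) > 0$.

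Next, I would invoke the transition coupling condition \eqref{defn:TC}: summing $\bR(u', \tilde{v} \mid u, v)$ over $\tilde{v} \in V$ yields $\bP(u' \mid u)$, and summing $\bR(\tilde{u}, v' \mid u, v)$ over $\tilde{u} \in U$ yields $\bQ(v' \mid v)$. Since a single term of a nonnegative sum is strictly positive, both marginals are bounded below by $\bR(u', v' \mid u, v) > 0$, so $\bP(u' \mid u) > 0$ and $\bQ(v' \mid v) > 0$.

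Finally, recalling that $\bP(u' \mid u) = w_1(u, u') / \sum_{\tilde{u}} w_1(u, \tilde{u})$ and analogously for $\bQ$, we conclude $w_1(u, u') > 0$ and $w_2(v, v') > 0$. By the standing convention that positive weights correspond exactly to edges, this gives $(u, u') \in E_1$ and $(v, v') \in E_2$, as required. There is no real obstacle here — the proof is essentially a direct unwinding of the definitions of $\pi_{\text{e}}$, of transition couplings, and of random walks on networks; the only subtle point is noting that conditioning on $(X_0^*, Y_0^*) = (u,v)$ is legitimate because the joint probability of that event is positive (it must be, since it dominates $\pi_{\text{e}}((u,u'),(v,v')) > 0$).
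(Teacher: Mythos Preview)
Your proposal is correct and matches the paper's own proof essentially line for line: both factor $\pi_{\text{e}}$ as the stationary mass times the coupled transition kernel, use the transition coupling constraints \eqref{defn:TC} to bound $\bP(u'\mid u)$ and $\bQ(v'\mid v)$ below by $\bR(u',v'\mid u,v)>0$, and then read off edge membership from the definition of the random walk.
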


\subsection{NetOTC and Multistep Cost}\label{sec:multistep_cost}

The NetOTC cost $\rho(G_1,G_2)$ is the expected cost
$\E c(\tilde{X}_0,\tilde{Y}_0)$ at the initial state of the optimal transport plan.   
Stationarity ensures that the NetOTC problem is equivalent to minimizing the 
long-term average cost over the set of transition couplings.


\begin{defn}
For an infinite sequence $x_0, x_1, \ldots$ and integers $0 \leq i \leq j$ let $x_i^j = (x_i, \dots, x_j)$.  
For each $k \geq 1$ define the $k$-step average cost 
$c_k(x_0^{k-1}, y_0^{k-1}) = k^{-1} \sum_{j=0}^{k-1} c(x_j, y_j)$ and the limiting average cost
$\overline{c}(x,y) = \limsup_{k \to \infty} c_k(x_0^{k-1}, y_0^{k-1})$.
\end{defn}

\begin{restatable}[]{prop}{gotceq}
\label{prop:gotceq}
Let $G_1$ and $G_2$ be networks with associated random walks $X$ and $Y$.
Then
\[
\rho(G_1, G_2) 
\ = \
\min_{(\tilde{X}, \tilde{Y}) \in \tc(X,Y)} \E \overline{c}(\tilde{X}, \tilde{Y}),
\]
and the optimal transport plans minimizing $\E \overline{c}(\tilde{X}, \tilde{Y})$ coincide with those
minimizing $\E c(\tilde{X}_0, \tilde{Y}_0)$.
\end{restatable}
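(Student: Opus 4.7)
The plan is to prove the pointwise identity
\[
\E \overline{c}(\tilde X, \tilde Y) \ = \ \E c(\tilde X_0, \tilde Y_0)
\]
for every transition coupling $(\tilde X, \tilde Y) \in \tc(X,Y)$. Once this is established, both conclusions of the proposition are immediate: the two objective functions coincide on all of $\tc(X,Y)$, so the minimum values agree and the arg-min sets are identical (and the minimum on the right-hand side is attained because it is attained on the left, as already discussed in the exposition preceding the statement).

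To prove the pointwise identity, fix a transition coupling $(\tilde X, \tilde Y) \in \tc(X,Y)$. Stationarity of the joint chain gives $\E c(\tilde X_j, \tilde Y_j) = \E c(\tilde X_0, \tilde Y_0)$ for every $j \geq 0$, and hence
\[
\E\bigl[c_k(\tilde X_0^{k-1}, \tilde Y_0^{k-1})\bigr] \ = \ \frac{1}{k}\sum_{j=0}^{k-1} \E c(\tilde X_j, \tilde Y_j) \ = \ \E c(\tilde X_0, \tilde Y_0)
\]
for every $k \geq 1$. What remains is to pass to the limit inside the expectation on the left. For this I would invoke Birkhoff's pointwise ergodic theorem applied to the stationary process $(\tilde X, \tilde Y)$ and the bounded observable $(x_0,y_0) \mapsto c(x_0,y_0)$: the Cesàro averages converge almost surely to $L := \E[c(\tilde X_0, \tilde Y_0) \mid \mathcal{I}]$, where $\mathcal{I}$ is the shift-invariant $\sigma$-algebra of the joint process. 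Because the limit exists a.s., $\overline{c}(\tilde X, \tilde Y) = L$ almost surely. Since $U \times V$ is finite, the Cesàro averages are uniformly bounded by $\|c\|_\infty$, so the bounded convergence theorem gives $\E L = \lim_k \E c_k = \E c(\tilde X_0, \tilde Y_0)$, which is the desired identity.

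There is no substantive obstacle here. The content of the argument reduces to a basic feature of stationarity (time-averages of any bounded function have the same expectation as a single-time evaluation), and the only subtlety worth flagging is that Birkhoff is being applied to a stationary but not necessarily ergodic process, so the almost-sure limit $L$ is a random variable rather than a constant. This does not affect the proof because we only need the expectation of $L$, and $\E L = \E c(\tilde X_0, \tilde Y_0)$ holds regardless of ergodicity.
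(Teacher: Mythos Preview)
Your proposal is correct and follows essentially the same approach as the paper: both arguments apply the pointwise ergodic theorem to the stationary transition coupling to obtain almost-sure convergence of the Ces\`aro averages, then use boundedness to conclude that $\E \overline{c}(\tilde X,\tilde Y) = \E c(\tilde X_0,\tilde Y_0)$ for every transition coupling, from which the proposition follows by taking minima. Your version is slightly more explicit about the non-ergodic case and the use of bounded convergence, but the substance is the same.
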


The long-term behavior of the random walks $X$ and $Y$ encodes information about 
the global structure of the networks $G_1$ and $G_2$, respectively. 
The next result shows that NetOTC also captures local information arising from the finite time behavior 
of the walks $X$ and $Y$. 
For example, if $G_1$ and $G_2$ are distinguishable based on optimal transport of their $k$-step random walks, 
then they are distinguishable by NetOTC. 

\begin{restatable}[]{prop}{gotclowerbound}
\label{prop:gotclowerbound}
Let $G_1$ and $G_2$ be networks with associated random walks $X$ and $Y$.
For each $k \geq 1$
\[
\rho(G_1, G_2) 
\ \geq \ 
\min \ \E \, c_k(\tilde{X}_0^{k-1}, \tilde{Y}_0^{k-1}) ,
\]
where the minimum is over the family of all couplings of $X_0^{k-1}$ and $Y_0^{k-1}$.
\end{restatable}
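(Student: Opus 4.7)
The plan is to use the optimal transition coupling itself, restricted to its first $k$ time points, as a witness coupling for the right-hand side. The key observation is that any transition coupling of $X$ and $Y$ is, by definition, a coupling of the full processes, so its finite-dimensional marginal at times $0,1,\dots,k-1$ is a coupling of $X_0^{k-1}$ and $Y_0^{k-1}$.

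Concretely, I would first invoke the existence of an optimal transition coupling $(X^*, Y^*) \in \Pitc(X,Y)$ achieving $\rho(G_1,G_2) = \E c(X^*_0, Y^*_0)$, as guaranteed by the discussion following \eqref{eqn:GOTCargmin}. Then I would use stationarity of $(X^*,Y^*)$ to note that $\E c(X^*_j, Y^*_j) = \E c(X^*_0, Y^*_0) = \rho(G_1,G_2)$ for every $j \geq 0$. Averaging over $j = 0, 1, \dots, k-1$ gives
\[
\E c_k\bigl(X^{*,k-1}_0, Y^{*,k-1}_0\bigr) \;=\; \frac{1}{k}\sum_{j=0}^{k-1}\E c(X^*_j, Y^*_j) \;=\; \rho(G_1,G_2).
\]

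Next, since $(X^*, Y^*)$ is a coupling of the processes $X$ and $Y$, the marginal distributions satisfy $X^{*,k-1}_0 \eqd X_0^{k-1}$ and $Y^{*,k-1}_0 \eqd Y_0^{k-1}$. Hence $(X^{*,k-1}_0, Y^{*,k-1}_0)$ is an element of the family of couplings of $X_0^{k-1}$ and $Y_0^{k-1}$ over which the right-hand minimum is taken. Taking the minimum over that family gives
\[
\min\,\E c_k(\tilde{X}_0^{k-1}, \tilde{Y}_0^{k-1}) \;\leq\; \E c_k\bigl(X^{*,k-1}_0, Y^{*,k-1}_0\bigr) \;=\; \rho(G_1,G_2),
\]
which is the desired inequality.

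There is no genuine obstacle here: the whole proof is a two-line consequence of stationarity plus the fact that a process coupling induces couplings of its finite-dimensional marginals. The only subtlety worth stating explicitly in the write-up is that the minimum on the right is over a strictly larger class (all couplings of the $k$-tuples, not just those arising from transition couplings of the full processes), which is precisely why the inequality can be strict and why this bound is informative.
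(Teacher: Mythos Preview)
Your proof is correct and follows essentially the same approach as the paper: use stationarity to identify $\rho(G_1,G_2)$ with the expected $k$-step average cost under the optimal (transition) coupling, then observe that its $k$-dimensional marginal is a coupling of $X_0^{k-1}$ and $Y_0^{k-1}$. The paper routes the argument through the intermediate optimal joining cost $\oj(X,Y)$, but this is a cosmetic detour; your direct version is slightly cleaner.
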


\subsection{Undirected Networks with a Common Vertex Set}

In this section, we consider undirected networks $G_1 = (U, E_1, w_1)$ and $G_2 = (U, E_2, w_2)$ with the same vertex set,
but potentially different edge sets and weight functions.  We assume throughout that the networks are connected.
We begin by defining a natural equivalence relation on such networks.

\begin{defn}
Undirected networks $G_1$ and $G_2$ are
\emph{equivalent}, denoted by $G_1 \sim G_2$, if they have the same vertex $U$, the same edge set $E$, and there exists a constant 
$C > 0$ such that $w_1(u, u') = C \, w_2(u, u')$ for every $u, u' \in U$.
\end{defn}

The following result relates the equivalence of networks to their random walks. 

\begin{restatable}[]{prop}{graphequiv}
\label{prop:equiv}
Connected undirected networks $G_1$ and $G_2$ are equivalent 
if and only if their respective random walks are identical.
\end{restatable}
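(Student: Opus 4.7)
The plan is to prove both directions by direct computation using the explicit formulas for the transition kernel and the stationary distribution of a random walk on a connected undirected weighted network. Recall that for such a network $G = (U, E, w)$, the transition kernel is $\bP(u' \mid u) = w(u, u') / d(u)$ with $d(u) = \sum_{\tilde u \in U} w(u, \tilde u)$, and (since $G$ is connected) the unique stationary distribution is $\bp(u) = d(u) / \sum_{\tilde u \in U} d(\tilde u)$. So two random walks are identical precisely when their transition kernels agree and their stationary distributions agree.

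For the forward direction ($\Rightarrow$), I assume $G_1 \sim G_2$ with scalar $C > 0$ and the same edge set $E$. Then $d_1(u) = C\, d_2(u)$ for every $u \in U$, so $\bP_1(u' \mid u) = C w_2(u,u') / (C d_2(u)) = \bP_2(u' \mid u)$ and similarly $\bp_1 = \bp_2$ after the constants cancel. This step is routine.

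For the reverse direction ($\Leftarrow$), I assume the two random walks coincide, so $\bP_1(u' \mid u) = \bP_2(u' \mid u)$ for every $u, u' \in U$. First, since $\bP_i(u' \mid u) > 0$ iff $w_i(u, u') > 0$ iff $(u, u') \in E_i$, equality of the kernels forces $E_1 = E_2 =: E$. For any edge $(u, u') \in E$, rearranging $w_1(u,u')/d_1(u) = w_2(u,u')/d_2(u)$ gives
\[
\frac{w_1(u, u')}{w_2(u, u')} \;=\; \frac{d_1(u)}{d_2(u)} \;=:\; \lambda(u),
\]
so the edge weight ratio depends only on the source vertex. The key point is to use undirectedness: $w_i(u, u') = w_i(u', u)$, so the same ratio also equals $\lambda(u')$, giving $\lambda(u) = \lambda(u')$ whenever $(u, u') \in E$. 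Since $G_1$ (equivalently $G_2$) is connected, $\lambda$ must be constant on $U$; call this constant $C > 0$. Then $w_1(u, u') = C\, w_2(u, u')$ holds on $E$ and trivially off $E$ (where both sides vanish), establishing $G_1 \sim G_2$.

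The only subtle step is the passage from ``edge weight ratio is constant along every edge'' to ``constant on $U$,'' which is where both the undirectedness assumption and the connectedness hypothesis are used; everything else is algebraic manipulation of the formulas above. There is no substantive obstacle beyond being careful to treat non-edges (zero weights) as a separate case when extracting ratios.
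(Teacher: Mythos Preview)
Your proposal is correct and follows essentially the same argument as the paper's proof: both directions use the formula $\bP(u'\mid u)=w(u,u')/d(u)$, and the reverse direction introduces the vertex-dependent ratio $d_1(u)/d_2(u)$, uses undirectedness to show it is constant along edges, and then connectedness to conclude it is constant on $U$. The only minor difference is that you make the step $E_1=E_2$ explicit (via positivity of the kernels), whereas the paper leaves this implicit.
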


Whatever the underlying cost $c$, the cost $\rho(G_1,G_2)$ and optimal transport plan arising from 
NetOTC depends only on the equivalence classes of the networks $G_1$ and $G_2$.  In particular,
NetOTC is invariant under (positive) scaling of weight functions.  
When the underlying cost $c$ is a metric on $U$, the NetOTC cost is a metric on these equivalence classes.

\begin{restatable}[]{prop}{gotcismetric}
\label{prop:gotcismetric}
If the cost function $c: U \times U \rightarrow \mathbb{R}_+$ satisfies the properties of a metric on $U$,
then $\rho$ is a metric on the equivalence classes of undirected networks with vertex set contained in $U$ defined by $\sim$.
\end{restatable}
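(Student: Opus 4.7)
The plan is to verify the metric axioms in turn. Well-definedness on $\sim$-equivalence classes is immediate from Proposition \ref{prop:equiv}, since equivalent networks give rise to identical random walks and hence to the same set $\tc(X,Y)$ and the same value $\rho$. Non-negativity is inherited from $c \geq 0$. Symmetry follows because swapping the two coordinates of any $(\tilde X, \tilde Y) \in \tc(X,Y)$ produces an element of $\tc(Y,X)$ with identical expected cost, using $c(u,v) = c(v,u)$. For identity of indiscernibles, if $G_1 \sim G_2$ then the diagonal coupling $(\tilde X,\tilde Y) = (X,X)$ is a transition coupling with cost $0$; conversely, if $\rho(G_1,G_2) = 0$, an optimal coupling satisfies $c(\tilde X_0,\tilde Y_0) = 0$ almost surely, forcing $\tilde X_0 = \tilde Y_0$ almost surely since $c$ is a metric. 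Proposition \ref{prop:gotceq} applied to the same coupling gives $\E \overline{c}(\tilde X, \tilde Y) = 0$, so $\tilde X_n = \tilde Y_n$ almost surely for every $n$; the processes $X$ and $Y$ therefore have the same law, and Proposition \ref{prop:equiv} returns $G_1 \sim G_2$.

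The substantive step is the triangle inequality $\rho(G_1,G_3) \leq \rho(G_1,G_2) + \rho(G_2,G_3)$. Let $X,Y,Z$ be the random walks of $G_1,G_2,G_3$ with transition kernels $P,Q,S$, and let $(R_1,\pi_1)$ and $(R_2,\pi_2)$ be optimal transition couplings realizing $\alpha = \rho(G_1,G_2)$ and $\beta = \rho(G_2,G_3)$; both $\pi_1$ and $\pi_2$ have $q$, the stationary distribution of $Q$, as their shared middle marginal. I would form the glued kernel on $U^3$
\[
R\bigl((u', v', w') \mid (u, v, w)\bigr) \ = \ \frac{R_1(u', v' \mid u, v) \, R_2(v', w' \mid v, w)}{Q(v' \mid v)},
\]
interpreted as zero when $Q(v'\mid v) = 0$ (in which case the $R_1$ and $R_2$ factors also vanish). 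A direct calculation shows that the marginal kernels of $R$ on the first-two and last-two coordinates recover $R_1$ and $R_2$. Starting the associated chain from $\mu_0(u,v,w) = \pi_1(u,v)\pi_2(v,w)/q(v)$, the $(U,V)$-projection evolves as a stationary $R_1$-chain with law $\pi_1$ and similarly on the $(V,W)$ side, so the $n$-step distribution $\mu_n$ preserves $\pi_1$ and $\pi_2$ as its two two-dimensional marginals for every $n$. The Cesaro average of the $\mu_n$ converges (finite state space) to a distribution $\nu$ stationary for $R$ with $\nu_{U,V} = \pi_1$ and $\nu_{V,W} = \pi_2$. Applying the triangle inequality for $c$ under $\E_\nu$ then gives
\[
\E_\nu c(\tilde X, \tilde W) \ \leq \ \E_\nu c(\tilde X, \tilde V) + \E_\nu c(\tilde V, \tilde W) \ = \ \E_{\pi_1} c + \E_{\pi_2} c \ = \ \alpha + \beta.
\]
To cash this in as a transition coupling of $X$ and $Z$, I define
\[
R'(u', w' \mid u, w) \ = \ \sum_{v, v'} \nu(v \mid u, w) \, R\bigl((u', v', w') \mid (u, v, w)\bigr),
\]
together with initial distribution $\nu_{U,W}$. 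Short computations from the marginal identities for $R$ give $\sum_{w'} R' = P$ and $\sum_{u'} R' = S$, and stationarity of $\nu$ under $R$ gives that $\nu_{U,W}$ is stationary for $R'$. Thus $R'$ defines a transition coupling of $X$ and $Z$ with expected cost at most $\alpha + \beta$, proving the triangle inequality.

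The main obstacle is precisely the step that constructs $R'$: the natural marginal process $(\tilde X_n,\tilde W_n)$ of the glued three-coordinate chain is not Markov in general, so one cannot simply project out the middle coordinate to obtain a transition coupling. The Cesaro average is needed because the product-style initial distribution $\mu_0$ need not itself be stationary for $R$; it produces a stationary measure of the glued kernel with the correct two-dimensional marginals, and conditioning on $V$ under this $\nu$ then manufactures a genuine Markov kernel on $U \times U$. The remaining work is the bookkeeping to verify that $R'$ meets the marginal-kernel and stationarity conditions of Definition \ref{def:TC}, using only the marginal identities built into $R$ by construction and the stationarity of $\nu$.
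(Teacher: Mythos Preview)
Your argument is correct, but it differs from the paper's in two places, and the comparison is worth recording.

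For the triangle inequality, the paper simply invokes an external result (Proposition~25 of \cite{o2020optimal}) asserting that the optimal transition coupling cost inherits the triangle inequality from $c$. Your gluing construction via the conditionally-independent kernel $R$ on $U^3$, followed by Ces\`aro averaging to obtain a stationary $\nu$ and then the conditional-averaging step producing a bona fide transition kernel $R'$ on $U\times U$, is a self-contained derivation of that same fact. The subtlety you flag---that the $(\tilde X,\tilde W)$-marginal of the triple chain is not itself Markov, so one must \emph{build} a Markov kernel $R'$ rather than merely project---is exactly the point that makes this nontrivial, and your handling of it is correct. This is essentially the relatively-independent-coupling construction the paper uses later in Proposition~\ref{prop:RelInd} for a different purpose.

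For the direction $\rho(G_1,G_2)=0 \Rightarrow G_1\sim G_2$, the paper argues contrapositively via the optimal \emph{joining} distance $\oj$: if $G_1\nsim G_2$ then the random walks differ, so $\oj(X,Y)>0$ by the metric property of joinings, and then $\rho \geq \oj$ (inequality~\eqref{eq:oj_otc_bound}) forces $\rho>0$. Your direct argument---zero expected cost at time $0$ forces $\tilde X_0=\tilde Y_0$ a.s., and stationarity propagates this to all $n$, hence $X\eqd Y$---is cleaner and avoids importing the joining machinery. A minor point: you do not need Proposition~\ref{prop:gotceq} here; stationarity of the coupling already gives $\E c(\tilde X_n,\tilde Y_n)=\E c(\tilde X_0,\tilde Y_0)=0$ for every $n$ directly.

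In short: both routes are valid; yours is more self-contained, the paper's is terser but leans on the companion reference and on the $\oj$ theory.
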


We now investigate the sensitivity of NetOTC to differences between the degree and weight functions
of the given networks.

\begin{defn}
The degree function of an undirected network $G = (U, E, w)$ is given by 
$d(u) = \sum_{u' \in U} w(u,u')$ for $u \in U$.  Let $D = \sum_{u \in U} d(u)$ denote the total degree of $G$.
\end{defn}

The next proposition strengthens the general result of Proposition \ref{prop:gotclowerbound}.

\begin{restatable}[]{prop}{lowerbound}
\label{prop:locallowerbound}
Let $G_1$ and $G_2$ be undirected networks with the same vertex set.  
Let $d_1(u)$ and $d_2(u)$ be the degree functions of $G_1$ and $G_2$, respectively, 
and assume that each network has a total degree of $D$. 
Then under the zero-one cost $c(u, u') = \In(u \neq u')$, 
\begin{itemize}
\item $\rho(G_1, G_2) \geq \frac{1}{2D} \sum\limits_{u \in U} |d_1(u) - d_2(u)|$
\item $\rho(G_1, G_2) \geq \frac{1}{4D} \sum\limits_{u, u' \in U} |w_1(u,u') - w_2(u,u')|$ 
\end{itemize}
\end{restatable}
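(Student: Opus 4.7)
The plan is to reduce both inequalities to bounds on total variation distances between natural one- and two-step marginals of the random walks, using Proposition~\ref{prop:gotclowerbound} to handle the multistep case. The key computational observation is that for an undirected network $G_i = (U, E_i, w_i)$ with degree function $d_i$ and total degree $D$, the stationary distribution of the random walk $X^{(i)}$ is $p_i(u) = d_i(u)/D$, and the joint distribution of two consecutive states is $q_i(u,u') = p_i(u)\, \bP_i(u'\mid u) = w_i(u,u')/D$. I would verify stationarity directly from the definition, noting that the undirected assumption $w_i(u,u') = w_i(u',u)$ is what makes $p_i \propto d_i$ stationary.

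For the first bound, I would apply the observation with $k=1$. Any transition coupling $(\tilde X,\tilde Y) \in \tc(X,Y)$ has marginal laws $p_1$ and $p_2$ at time zero, so
\begin{equation*}
\E c(\tilde X_0, \tilde Y_0) \;=\; \mathbb{P}(\tilde X_0 \neq \tilde Y_0) \;\geq\; \tfrac12 \sum_{u \in U} |p_1(u) - p_2(u)| \;=\; \frac{1}{2D}\sum_{u \in U} |d_1(u) - d_2(u)|,
\end{equation*}
where the middle inequality is the standard coupling lower bound by total variation distance. Taking the minimum over transition couplings gives the first claim.

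For the second bound, I would invoke Proposition~\ref{prop:gotclowerbound} with $k=2$. For any coupling of $X_0^1$ and $Y_0^1$, a union bound gives
\begin{equation*}
\E c_2(\tilde X_0^1, \tilde Y_0^1) \;=\; \tfrac12\bigl(\mathbb{P}(\tilde X_0 \neq \tilde Y_0) + \mathbb{P}(\tilde X_1 \neq \tilde Y_1)\bigr) \;\geq\; \tfrac12\, \mathbb{P}\bigl((\tilde X_0, \tilde X_1) \neq (\tilde Y_0, \tilde Y_1)\bigr).
\end{equation*}
Since $(\tilde X_0,\tilde X_1)\sim q_1$ and $(\tilde Y_0,\tilde Y_1)\sim q_2$, the right-hand side is at least $\tfrac12$ times the total variation distance between $q_1$ and $q_2$, giving
\begin{equation*}
\E c_2(\tilde X_0^1, \tilde Y_0^1) \;\geq\; \tfrac14 \sum_{u,u' \in U} |q_1(u,u') - q_2(u,u')| \;=\; \frac{1}{4D} \sum_{u,u' \in U} |w_1(u,u') - w_2(u,u')|.
\end{equation*}
Taking the minimum over couplings of $X_0^1$ and $Y_0^1$ and applying Proposition~\ref{prop:gotclowerbound} yields the second claim.

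Neither step is delicate; the only subtlety is the identification of the two-step marginal $q_i(u,u') = w_i(u,u')/D$, which uses both the form of the stationary distribution and the definition of the transition kernel. The main thing to get right is the bookkeeping of the $1/2$ from the $k=2$ average cost and the $1/2$ from the total-variation identity, which together account for the $1/(4D)$ factor in the second inequality.
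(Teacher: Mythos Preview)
Your proposal is correct and follows essentially the same route as the paper's proof: the first bound comes from the coupling/total-variation inequality for the stationary distributions $p_i(u)=d_i(u)/D$, and the second from Proposition~\ref{prop:gotclowerbound} with $k=2$ together with the union-bound $c_2 \geq \tfrac12\,\In((\tilde X_0,\tilde X_1)\neq(\tilde Y_0,\tilde Y_1))$ and the identification $q_i(u,u')=w_i(u,u')/D$. The only cosmetic difference is that for the first inequality you argue directly that any transition coupling induces a coupling of $(p_1,p_2)$, whereas the paper routes this through Proposition~\ref{prop:gotclowerbound} with $k=1$; both are equivalent.
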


\begin{rem}
    Proposition~\ref{prop:locallowerbound} can be readily extended to cases where $G_1$ and $G_2$ have different total degrees $D_1$ and $D_2$, respectively.
    Nevertheless, when comparing two undirected networks using NetOTC, we may assume that they share equal total degrees. As indicated in Proposition~\ref{prop:equiv}, for a connected undirected network $G_2 = (U, E_2, w_2)$ with  total degree  $D_2$, an equivalent graph $G_2' = \left(U, E_2, \frac{D_1}{D_2} w_2\right)$  with total degree $D_1$ can be constructed.
\end{rem}

\subsection{Deterministic Transition Couplings and Factor Maps}
\label{sec:factors}

The graph theory and network literature contain several definitions of ``network factor'' and ``factor network''. 
A network factor of $G$ is often defined to be any spanning subnetwork of $G$, while   
the term factor network is used in the context of message passing algorithms and error-correcting codes to refer to a 
bipartite network that captures the factorization of a function or a probability distribution.
Here we define a notion of network factor that appears to be different than existing definitions in the literature,
see for example the survey \citep{plummer2007graph}.
We show that there is a close connection between factors and transition couplings, and we use this to rigorously
study the behavior of NetOTC when factor structure is present.  Our results establish a close link between
NetOTC and factors, behavior that distinguishes NetOTC from other comparison and alignment methods.

\begin{defn}
\label{def:factor}
Let $G_1 = (U,E_1,w_1)$ and $G_2 = (V,E_2,w_2)$ be strongly connected, weighted directed networks with out-degree functions $d_1$ and $d_2$, respectively. 
A map $f : U \to V$ is  a \emph{factor map} if 
for all $v,v' \in V$ and $u \in f^{-1}(v)$, 
\begin{equation} \label{eq:factor_def}
\sum_{u' \in f^{-1}(v')} w_1(u,u') \ = \ \frac{d_1(u)}{d_2(v)} \, w_2(v,v').
\end{equation}
In this case, we will say that $G_2$ is a \emph{factor} of $G_1$, and that $G_1$ is an \emph{extension} of $G_2$.
\end{defn}

\begin{ex}\label{ex:factor1}
Consider the networks $G_1$ and $G_2$ drawn in Figure \ref{fig:factor_ex1} with vertices embedded in $\bbR^2$.
$G_2$ is a factor of $G_1$ with respect to the map $f$ that takes $(-1, 1)$ and $(-1, -1)$ to $(-1, 0)$, $(0, 0)$ to $(0, 0)$, and $(1, 0)$ and $(1, -1)$ to $(1, 0)$.
\end{ex}

The definition of factor arises naturally from the random walk perspective.  If $G_1$ and $G_2$ have associated
random walks $X$ and $Y$, and $G_2$ is a factor of $G_1$ under the map $f$, then the process 
$f(X) := f(X_0), f(X_1), \ldots$ is equal in distribution to $Y$, see 
Theorem \ref{thm:factortc} below for more details. 
Factors have been well studied in ergodic theory and symbolic dynamics.
The existence of a factor map (in the sense above) ensures that $Y$ is a stationary coding of $X$, which is a special case 
of a factor relationship in ergodic theory. 
Moreover, if $G_2$ is a factor of $G_1$, then the subshift of finite type (SFT) consisting of all bi-infinite walks on 
$G_2$ is a topological factor of the SFT associated with $G_1$ given by a $1$-block code
(see \cite{lind1995anintroductiontosymbolicdynamics} for detailed definitions). 
Our definition of factor has points of contact with compressed representations of weighted networks, 
explored in \cite{toivonen2011compression}, but in general the relationship \eqref{eq:factor_def} need not hold for compressed representations.

\begin{figure}[t]
\centering
\begin{subfigure}{0.45\textwidth}
\centering
\scalebox{0.5}{
\begin{tikzpicture}[->,>=stealth',shorten >=1pt,auto,node distance=3cm,
                    thick,main node/.style={circle,draw,font=\sffamily\Large\bfseries}]

  \node[main node] (3) {(0,0)};
  \node[main node] (1) [above of=3, left of=3]{(-1,1)};
  \node[main node] (2) [left of=3, below of=3] {(-1,-1)};
  \node[main node] (4) [right of=3] {(1,0)};
  \node[main node] (5) [below of=4] {(1,-1)};

  \path[every node/.style={font=\sffamily\small}]
    (1) edge node[left] {1} (2)
    (2) edge (1)
    (2) edge [bend right] node[below] {1} (3)
    (3) edge [left] node[above, left] {1} (2)
    (3) edge [right] node[below, left] {1} (1)
    (1) edge [bend left] node[above] {1} (3)
    (3) edge [left] node[below] {1} (4)
    (4) edge [bend right] node[above] {1} (3)
    (3) edge [right] node[above, right] {1} (5)
    (5) edge [bend left] node[below] {1} (3)
    (4) edge node[right] {1} (5)
    (5) edge (4);
\end{tikzpicture}
}
\caption{$G_1$}
\label{fig:factor_ex1a}
\end{subfigure}
\begin{subfigure}{0.45\textwidth}
\centering
\scalebox{0.5}{
\begin{tikzpicture}[->,>=stealth',shorten >=1pt,auto,node distance=3cm,
                    thick,main node/.style={circle,draw,font=\sffamily\Large\bfseries}]

  \node[main node] (3) {(0,0)};
  \node[main node] (1) [left of=3]{(-1,0)};
  \node[main node] (4) [right of=3] {(1,0)};

  \path[every node/.style={font=\sffamily\small}]
    (3) edge node[below] {2} (1)
    (1) edge (3)
    (3) edge node[below] {2} (4)
    (4) edge (3)
    (1) edge [loop left] node[left] {2} (1)
    (4) edge [loop right] node[right] {2} (4);
\end{tikzpicture}
}\vspace{5mm}
\caption{$G_2$}
\label{fig:factor_ex1b}
\end{subfigure}
\caption{An example of two networks related by a factor map. Here $G_2$ is a factor of $G_1$ via 
the map that collapses vertices along vertical lines.}
\label{fig:factor_ex1}
\end{figure}
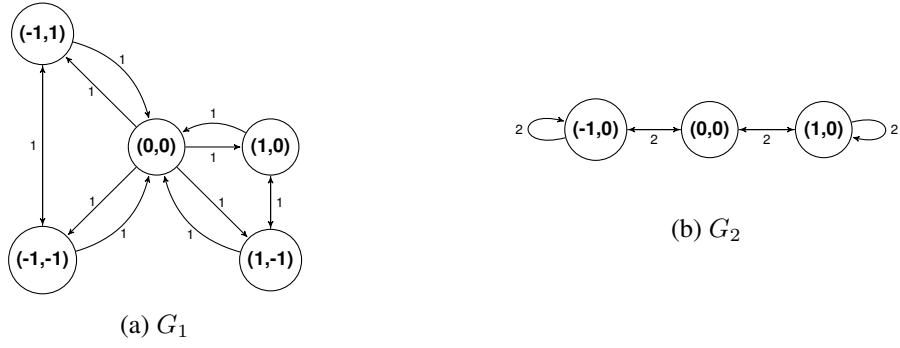

The definition of factor formalizes the idea that $G_2$ (the factor) is a collapsed or 
compressed version of $G_1$ (the extension).
The factor map $f$ associates the vertices in $G_2$ with a partition of the vertices in $G_1$. 
Condition \eqref{eq:factor_def} ensures that the partitioning of the vertices is consistent 
with the transition probabilities of the random walk on $G_1$. 
If $\bP$ and $\bQ$ are the transition kernels for $G_1$ and $G_2$, then Condition \eqref{eq:factor_def} is equivalent to the statement that 
for all $v,v' \in V$ and $u \in f^{-1}(v)$, 
\begin{equation} \label{eq:factor_trans_probs}
\sum_{u' \in f^{-1}(v')} \bP( u' \mid u) = \bQ( v' \mid v).
\end{equation}
This is also equivalent to the condition $\mathbb{P}( f(X_1) = v' \mid X_0 = u) = \mathbb{P}( Y_1 = v' \mid Y_0 = v)$,
where $X$ and $Y$ are the random walks associated with $G_1$ and $G_2$.
Since $\bP$ and $\bQ$ are irreducible, Condition \eqref{eq:factor_def} implies that the (unique) stationary distributions 
$\bp$ on $G_1$ and $\bq$ on $G_2$ are such that
for all $v \in V$, 
\begin{equation} \label{eq:stat_dist_factor_def}
\sum_{u \in f^{-1}(v)} \bp(u) = \bq(v),
\end{equation}
which is equivalent to $f(X_0) \eqd Y_0$. 
The factor relationship can also be expressed in matrix form. If $f$ is a factor map from $G_1$ to $G_2$, 
then \eqref{eq:factor_def} is equivalent to the condition $\bP F = F \bQ$ where
$F \in \mathbb{R}^{U \times V}$ is defined by $F(u,v) = 1$ if $f(u) = v$ and $F(u,v) = 0$ otherwise. 
Furthermore equation \eqref{eq:stat_dist_factor_def} is equivalent to $\bp F = \bq$.

\vskip.05in

The next proposition establishes a close connection between transition couplings and factor maps. 
Let $G_1$ and $G_2$ be strongly connected, weighted directed networks with associated random walks $X$ and $Y$.  
Note that any stationary Markov coupling $(\tilde{X},\tilde{Y})$ of $X$ and $Y$ corresponds to a weighted, directed 
network $H$ with vertex set $W \subset U \times V$.
Let $\pi_U : U \times V \to U$ and $\pi_V : U \times V \to V$ be the standard projections onto the first and second coordinates, respectively.

\begin{restatable}[]{prop}{TCinTermsFactors}
If $(\tilde{X},\tilde{Y})$ is a stationary Markov coupling corresponding to a strongly connected network $H$ with vertex set $W$, 
then $(\tilde{X},\tilde{Y})$ is a transition coupling of $X$ and $Y$ if and only if the restriction of $\pi_U$ to $W$ is a factor map from 
$H$ to $G_1$ and the restriction of $\pi_V$ to $W$ is a factor map from $H$ to $G_2$. 
\end{restatable}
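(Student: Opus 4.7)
The approach is to observe that the transition coupling condition in Definition \ref{def:TC} is, once translated from transition kernels to edge weights, precisely the factor map condition \eqref{eq:factor_def} applied to the projections $\pi_U|_W$ and $\pi_V|_W$. The excerpt already spells out the equivalence between \eqref{eq:factor_def} and its transition-kernel form \eqref{eq:factor_trans_probs}, so the proof amounts to a careful matching of definitions.

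First I would set up notation. Write $\bR$ for the transition kernel of $(\tilde{X},\tilde{Y})$, and recall that $\bR((u',v')\mid(u,v))=w_H((u,v),(u',v'))/d_H(u,v)$, with $\bR$ effectively supported on $W$. Since $(\tilde{X},\tilde{Y})$ is a coupling of $X$ and $Y$, the $U$- and $V$-marginals of the stationary distribution of $H$ are $\bp$ and $\bq$, which by strong connectedness of $G_1$ and $G_2$ are fully supported. Hence both $\pi_U|_W:W\to U$ and $\pi_V|_W:W\to V$ are surjective, so each is a well-posed candidate factor map from $H$ to $G_1$ and from $H$ to $G_2$, respectively.

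For the forward direction, assume $(\tilde{X},\tilde{Y})\in\Pitc(X,Y)$. For any $(u,v)\in W$ and any $u'\in U$, the first identity in \eqref{defn:TC} can be rewritten as
\[
\sum_{(u',\tilde{v})\in\pi_U^{-1}(u')\cap W}\bR\bigl((u',\tilde{v})\mid(u,v)\bigr)\ =\ \bP(u'\mid u),
\]
because $\bR$ vanishes outside $W$. This is exactly \eqref{eq:factor_trans_probs} for $f=\pi_U|_W$ viewed as a map from the parent network $H$ to the factor network $G_1$; by the equivalence of \eqref{eq:factor_def} and \eqref{eq:factor_trans_probs} noted in the excerpt, $\pi_U|_W$ is a factor map from $H$ to $G_1$. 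The symmetric argument using the second identity of \eqref{defn:TC} shows $\pi_V|_W$ is a factor map from $H$ to $G_2$. The reverse direction is the same calculation run backwards: each factor map condition, translated into transition-kernel form, reproduces one of the marginal identities in \eqref{defn:TC}, so $\bR$ satisfies the transition coupling condition and $(\tilde{X},\tilde{Y})\in\Pitc(X,Y)$.

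The main subtlety, rather than a deep obstacle, is keeping track of where the various conditions live: the factor-map condition \eqref{eq:factor_def} is only required on the ``parent'' vertex set $W$, and the transition-kernel identities in \eqref{defn:TC} are only meaningful on states in the support of the coupling. Once one recognizes that $\pi_U^{-1}(u')\cap W$ and the cross-section $\{u'\}\times V$ carry the same $\bR(\cdot\mid(u,v))$-mass, the two conditions coincide termwise and the equivalence is immediate.
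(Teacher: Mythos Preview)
Your proposal is correct and follows exactly the same approach as the paper: the paper's proof is a one-sentence assertion that the transition-coupling condition in Definition~\ref{def:TC} is precisely equivalent to Condition~\eqref{eq:factor_def} for the restrictions of $\pi_U$ and $\pi_V$ to $W$, and you have simply expanded that assertion with explicit bookkeeping (surjectivity of the projections, the passage between \eqref{eq:factor_def} and \eqref{eq:factor_trans_probs}, and the restriction to $W$).
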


We next investigate connections between factors and deterministic transition couplings.
 
\begin{defn} \label{def:DetTC}
Suppose $G_1 = (U, E_1, w_1)$ and $G_2 = (V, E_2, w_2)$ are two strongly connected, weighted, directed networks with associated Markov chains $X$ and $Y$, respectively. A transition coupling $(\tilde{X},\tilde{Y})$ is said to be deterministic from $X$ to $Y$ if for each $u$ in $U$ there exists $v \in V$ such that $\mathbb{P}( \tilde{Y}_0 = v \mid \tilde{X}_0 = u) = 1$. 
\end{defn}
In optimal transport theory, deterministic couplings are associated with the so-called Monge problem, see \cite{Villani2008OptimalTO} for more context and discussion.
A deterministic coupling $(\tilde{X}, \tilde{Y})$ from $X$ to $Y$ is associated with a map $f : U \to V$, where $f(u)$ is the 
(necessarily unique) element $v \in V$ for which $\mathbb{P}( \tilde{Y}_0 = v \mid \tilde{X}_0 = u) = 1$.
In particular, $(\tilde{X},\tilde{Y}) \eqd (X,f(X))$.  
Moreover, as $G_2$ is strongly connected, $\mathbb{P}(Y_0 = v) > 0$ for each $v \in V$, and
the edge-alignment property (Proposition \ref{prop:edge_preservation}) ensures that $f$ is a surjective 
graph homomorphism from $G_1$ to $G_2$.

\begin{restatable}[]{thm}{factortc}
\label{thm:factortc}
Suppose $G_1$ and $G_2$ are strongly connected, weighted directed networks with associated random walks $X$ and $Y$, respectively. 
\begin{enumerate}
\item If $G_2$ is a factor of $G_1$ with factor map $f$, then
$Y \eqd f(X)$, and 
$(X,f(X))$ is a deterministic transition coupling from $X$ to $Y$.
\item If $(\tilde{X},\tilde{Y})$ is a deterministic transition coupling from $X$ to $Y$, then the induced map $f : U \to V$ is a factor map from $G_1$ to $G_2$.
\end{enumerate}
\end{restatable}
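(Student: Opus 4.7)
The plan for Part 1 is to exhibit $(X, f(X))$ explicitly as a deterministic transition coupling and then identify its second marginal with $Y$. Since $f(X_n)$ is a function of $X_n$, the pair $(X, f(X))$ is automatically a stationary Markov chain on $U \times V$ supported on $\{(u, f(u)) : u \in U\}$, with transition kernel
\[
\bR\bigl((u', v') \mid (u, f(u))\bigr) \ = \ \bP(u' \mid u) \, \1[v' = f(u')].
\]
Determinism is immediate, and the first marginal is $X$ by construction. To verify the transition coupling conditions \eqref{defn:TC}, I would sum over $v'$ to recover $\bP(u' \mid u)$ trivially, and sum over $u'$ to obtain $\sum_{u' \in f^{-1}(v')} \bP(u' \mid u)$, which equals $\bQ(v' \mid f(u))$ by the equivalent form \eqref{eq:factor_trans_probs} of the factor condition.

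To finish Part 1, I need to check that $f(X) \eqd Y$. The computation above shows that $\bbP(f(X_{n+1}) = v' \mid X_n = u)$ depends on $u$ only through $f(u)$, so by the standard Dynkin-type criterion for a function of a Markov chain to itself be Markov, $f(X)$ is Markov with transition kernel $\bQ$; its stationary distribution is $\bq$ by \eqref{eq:stat_dist_factor_def}, so $f(X) \eqd Y$, which furnishes the second marginal.

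For Part 2, I would let $f : U \to V$ be the map induced by the deterministic transition coupling $(\tilde{X}, \tilde{Y})$, defined by $\bbP(\tilde{Y}_0 = f(u) \mid \tilde{X}_0 = u) = 1$. Since $X$ is irreducible, $\bbP(\tilde{X}_0 = u) > 0$ for every $u \in U$, so the joint kernel $\bR$ is meaningful at each state $(u, f(u))$, and by stationarity $\tilde{Y}_1 = f(\tilde{X}_1)$ almost surely forces $\bR((u', v') \mid (u, f(u))) = 0$ unless $v' = f(u')$. Applying the $U$-marginal in \eqref{defn:TC} then yields $\bR((u', f(u')) \mid (u, f(u))) = \bP(u' \mid u)$, and the $V$-marginal reads
\[
\bQ(v' \mid f(u)) \ = \ \sum_{u' \in U} \bR\bigl((u', v') \mid (u, f(u))\bigr) \ = \ \sum_{u' \in f^{-1}(v')} \bP(u' \mid u),
\]
which is exactly condition \eqref{eq:factor_trans_probs}, equivalent to the factor relation \eqref{eq:factor_def}. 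Hence $f$ is a factor map from $G_1$ to $G_2$.

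The hard part will be the identification $f(X) \eqd Y$ in Part 1: one must argue that the conditional distribution of $f(X_{n+1})$ given the entire past $f(X_0), \ldots, f(X_n)$ reduces to $\bQ(\cdot \mid f(X_n))$, i.e., that $f(X)$ is genuinely Markov with the right kernel. This is precisely where the factor condition \eqref{eq:factor_trans_probs} does essential work, by ensuring that the conditional distribution of $f(X_{n+1})$ given $X_n$ depends on $X_n$ only through $f(X_n)$. The remainder of both directions will reduce to routine bookkeeping with the transition-coupling constraints \eqref{defn:TC} and the factor definition.
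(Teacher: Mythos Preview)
Your proposal is correct and follows essentially the same route as the paper. The only minor variations are that (i) you invoke the Dynkin/lumpability criterion to conclude $f(X)$ is Markov with kernel $\bQ$, whereas the paper proves $\bbP(f(X_0^n)=v_0^n)=\bbP(Y_0^n=v_0^n)$ by explicit induction on $n$, and (ii) the paper extends $\bR$ to off-diagonal states $(u,v)$ with $f(u)\neq v$ via the independent coupling so that Definition~\ref{def:TC} is satisfied at \emph{every} $(u,v)\in U\times V$---a routine completion you should add.
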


When $G_2$ is a factor of $G_1$ under $f$, Theorem \ref{thm:factortc} ensures that $(X,f(X))$ is a transition coupling of
their random walks.  If the cost function $c$ is such that $c(u,v)$ is minimized by $v = f(u)$ then, as the next result shows, 
this coupling is also optimal, and there is a deterministic solution to the NetOTC problem.

\begin{defn}
Let $f$ be a factor map from $G_1$ to $G_2$.  A cost function $c$ is {\em compatible} with $f$ if
$c(u,f(u)) \leq c(u,v)$ for each $u \in U$ and $v \in V$.
\end{defn}

One may verify that the cost compatibility condition is satisfied in Example \ref{ex:factor1} under an Euclidean metric cost.

\begin{restatable}[]{cor}{factor}
\label{cor:factor} Suppose $G_1$ and $G_2$ are strongly connected, weighted directed networks and $f$ is a factor map from $G_1$ to $G_2$. 
If $c$ is compatible with $f$ then $(X,f(X))$ is an OTC of $X$ and $Y$.
\end{restatable}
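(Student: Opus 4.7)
The plan is to combine Theorem \ref{thm:factortc} with the compatibility condition to conclude that the deterministic coupling $(X,f(X))$ both lies in $\Pitc(X,Y)$ and achieves the pointwise minimum of the cost over choices of the second coordinate.

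First, I would invoke Theorem \ref{thm:factortc} (part 1) to conclude that $Y \eqd f(X)$ and that $(X,f(X))$ is a deterministic transition coupling of $X$ and $Y$; in particular $(X,f(X)) \in \Pitc(X,Y)$, so its expected cost at time $0$, namely $\E c(X_0, f(X_0))$, is an upper bound on the optimal NetOTC value $\rho(G_1,G_2)$.

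Next, I would establish the matching lower bound. Let $(\tilde{X},\tilde{Y}) \in \Pitc(X,Y)$ be arbitrary. Since $\tilde{X} \eqd X$, we have $\tilde{X}_0 \eqd X_0$. Compatibility of $c$ with $f$ gives the pointwise inequality $c(\tilde{X}_0, \tilde{Y}_0) \geq c(\tilde{X}_0, f(\tilde{X}_0))$ almost surely. Taking expectations and using the marginal identity yields
\[
\E c(\tilde{X}_0, \tilde{Y}_0) \ \geq \ \E c(\tilde{X}_0, f(\tilde{X}_0)) \ = \ \E c(X_0, f(X_0)).
\]
Since this holds for every transition coupling, the right-hand side equals $\rho(G_1,G_2)$, and the deterministic coupling $(X,f(X))$ attains it.

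There is really no serious obstacle here — the proof is essentially a one-line consequence of Theorem \ref{thm:factortc} (which supplies transition-coupling membership) together with the pointwise definition of compatibility. The only care needed is to note that compatibility is a pointwise statement about the cost, so it transfers directly to expectations without requiring any coupling argument for the second marginal.
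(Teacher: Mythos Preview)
Your proposal is correct and essentially identical to the paper's own proof: invoke Theorem~\ref{thm:factortc}(1) to place $(X,f(X))$ in $\Pitc(X,Y)$, then use the pointwise compatibility inequality $c(\tilde{X}_0,\tilde{Y}_0)\ge c(\tilde{X}_0,f(\tilde{X}_0))$ together with $\tilde{X}_0\eqd X_0$ to bound every competitor from below by $\E c(X_0,f(X_0))$.
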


An example illustrating Corollary \ref{cor:factor} is given in Figure \ref{fig:factor}.
Corollary \ref{cor:factor} provides some insight into the structure of the NetOTC problem.
If $G_1$ and $G_2$ are related by a factor map $f: U \rightarrow V$, then $G_2$ is essentially a compressed version of the network $G_1$.
Corollary \ref{cor:factor} ensures that an optimal coupling of the random walks on $G_1$ and $G_2$ is obtained by running the random
walk on $G_1$ and mapping every state $u \in U$ in this chain to the corresponding state $f(u) \in V$.
In practice, the conclusion of Corollary \ref{cor:factor} approximately holds when the factor condition \eqref{eq:factor_def} 
approximately holds; the results of experiments involving exact and approximate factors are given in Section \ref{sec:graph_factors}. In such situations, NetOTC can be used to identify (approximate) factor maps between $G_1$ and $G_2$.

\begin{figure}[t]
\begin{center}
\begin{subfigure}{0.31\linewidth}
\includegraphics[width=\linewidth, trim=3.2cm 2.5cm 2.5cm 1.5cm, clip]{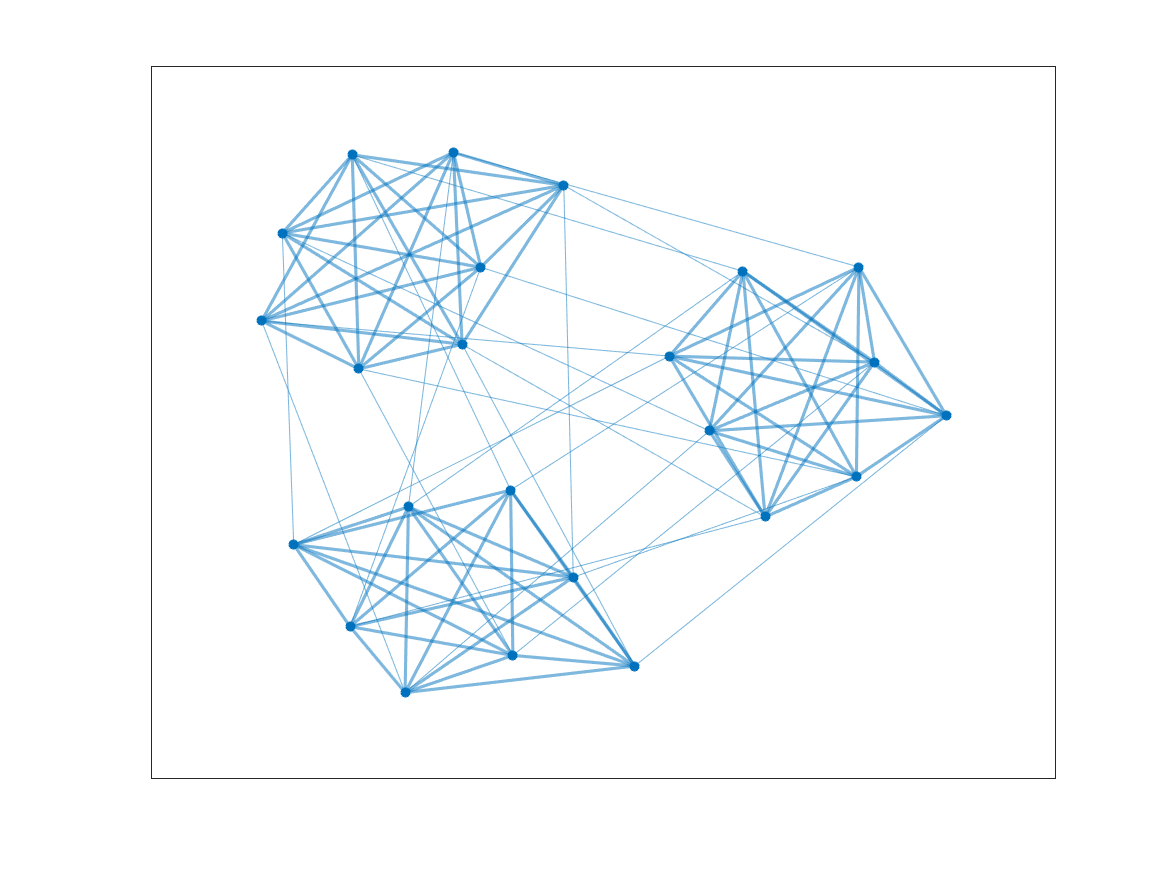}
\caption{$G_1$}
\end{subfigure}
\begin{subfigure}{0.31\linewidth}
\includegraphics[width=\linewidth, trim=3.2cm 2.5cm 2.5cm 1.5cm, clip]{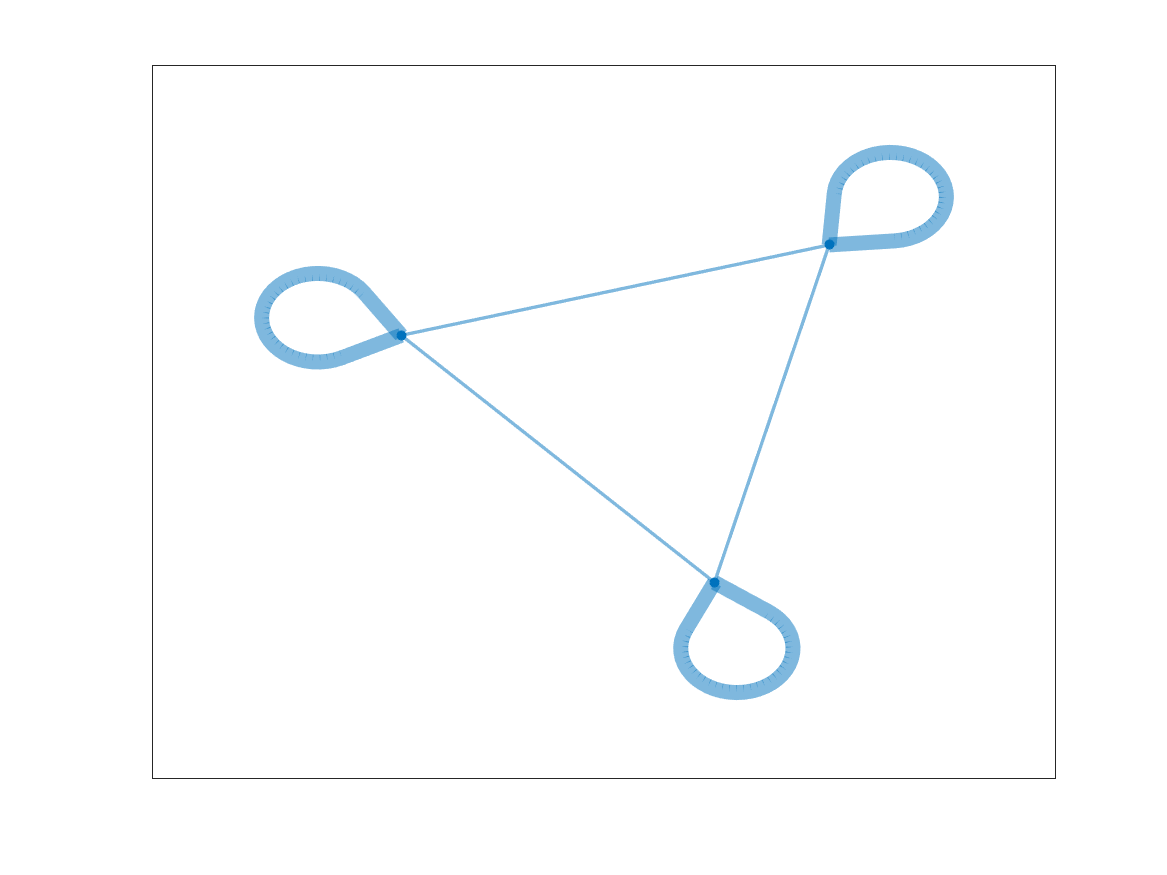}
\caption{$G_2$}
\end{subfigure}
\begin{subfigure}{0.34\linewidth}
\centering
\captionsetup{justification=centering}
\includegraphics[width=\linewidth, trim=1.4cm 0.5cm 0.2cm 0.5cm, clip]{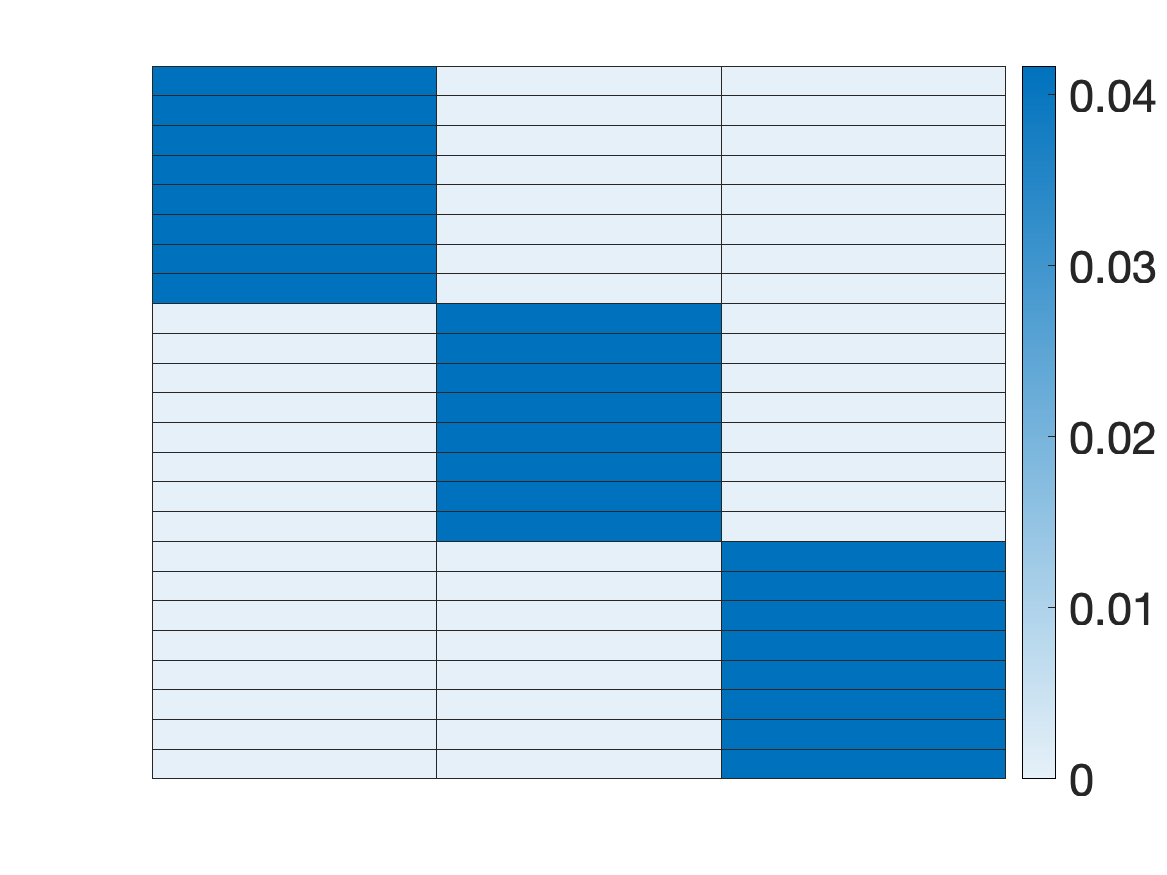}
\caption{Vertex alignment of $G_1$ and $G_2$ from NetOTC.}
\label{fig:factor_gotc}
\end{subfigure}
\caption{An illustration of the relationship between factors and the NetOTC problem.
Under the conditions described in Corollary \ref{cor:factor}, the NetOTC problem aligns vertices according to the factor map relating the two networks.
In this example, $G_2$ is a factor of $G_1$. 
Figure \ref{fig:factor_gotc} illustrates the NetOTC vertex alignment, which is supported on pairs of the form $(u, f(u))$.}
\label{fig:factor}
\end{center}
\end{figure}

Under the conditions of Corollary \ref{cor:factor}, the NetOTC cost and associated vertex and edge alignments will have a special form. 
In particular, the NetOTC cost will satisfy $\rho(G_1,G_2) = \sum_{u \in U} c(u, f(u)) \bp(u)$
where $\bp$ is the stationary distribution of the random walk on $G_1$. 
Furthermore, $\pi_v(u,v) = \mathbb{I}(f(u) = v)$ and
$\pi_e((u,u'),(v,v')) = \mathbb{I}((f(u),f(u')) = (v,v'))$.
Note that, while the deterministic coupling appears as a solution of the NetOTC problem, the NetOTC algorithm itself 
makes no reference to, and does not require prior information about, the factor map $f$.
The next result provides further information about how NetOTC behaves in the presence of factor maps.

\begin{restatable}[]{thm}{twofactor}
\label{thm:two_factor}
Let $G_1$, $G_2$, $H_1$, and $H_2$ be networks with vertex sets $U$, $V$, $A$, and $B$, and associated 
Markov chains $X$, $Y$, $W$, and $Z$, respectively. 
Suppose that $f : U \to A$ and $g : V \to B$ are factor maps from $G_1$ to $H_1$ and $G_2$ to $H_2$, 
and that there are cost functions $c_{ext} : U \times V \to \mathbb{R}_+$ and $c : A \times B \to \mathbb{R}_+$ such that $c_{ext}(u, v) = c( f(u) , g(v))$. 
\begin{enumerate}
\item If $(\tilde{X}, \tilde{Y})$ is an optimal transition coupling of $X$ and $Y$ with respect to $c_{ext}$, then $(f(\tilde{X}), g(\tilde{Y}) )$ is an optimal transition coupling of $W$ and $Z$ with respect to $c$.

\vspace{2mm}

\item If $(\tilde{W},\tilde{Z})$ is an optimal transition coupling of $W$ and $Z$ with respect to $c$, then there exists an optimal transition coupling $(\tilde{X},\tilde{Y})$ of $X$ and $Y$ with respect to $c_{ext}$ such that $(f(\tilde{X}), f(\tilde{Y})) \eqd (\tilde{W},\tilde{Z})$.
\end{enumerate}
\end{restatable}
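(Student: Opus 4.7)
The plan is to prove both parts by establishing a cost-preserving correspondence between transition couplings of $(X,Y)$ and of $(W,Z)$, and invoking this to match the two optimization problems. I will first construct a lift (for Part 2) and a projection (for Part 1), each of which preserves expected cost; together they identify $\rho(G_1,G_2;c_{ext}) = \rho(H_1,H_2;c)$ and then imply Parts 1 and 2 as statements about where the minimizers are mapped.

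For the lift, given a transition coupling $(\tilde{W},\tilde{Z})$ of $W,Z$ with kernel $\tilde{R}$ and stationary distribution $\tilde{\pi}$, I build $(\tilde{X},\tilde{Y})$ hierarchically. First sample a stationary trajectory of $(\tilde{W},\tilde{Z})$; then, conditional on it, independently generate $\tilde{X}_t \in f^{-1}(\tilde{W}_t)$ via the fiber chain with transitions $\bP(u'\mid u)/\tilde{\bP}(\tilde{W}_{t+1}\mid \tilde{W}_t)$ on $f^{-1}(\tilde{W}_{t+1})$ (started from $\bp(\cdot)/\tilde{\bp}(\tilde{W}_0)$ on $f^{-1}(\tilde{W}_0)$), and similarly $\tilde{Y}_t \in g^{-1}(\tilde{Z}_t)$ from the analogous fiber chain of $Y$. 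A telescoping product using the factor relation for $f$ collapses the joint density to $\bp(u_0)\prod_t \bP(u_{t+1}\mid u_t)$, establishing $\tilde{X}\eqd X$, and similarly $\tilde{Y}\eqd Y$. Conditional independence of $\tilde{X}$ and $\tilde{Y}$ given $(\tilde{W},\tilde{Z})$, together with the Markov property of each component, makes $(\tilde{X},\tilde{Y})$ a stationary Markov chain on $U\times V$ with kernel
\[
R(u',v'\mid u,v) \ = \ \tilde{R}\bigl(f(u'),g(v')\mid f(u),g(v)\bigr)\cdot \frac{\bP(u'\mid u)}{\tilde{\bP}(f(u')\mid f(u))}\cdot \frac{\bQ(v'\mid v)}{\tilde{\bQ}(g(v')\mid g(v))}.
\]
The marginal constraints $\sum_{v'} R(u',v'\mid u,v) = \bP(u'\mid u)$ and $\sum_{u'} R(u',v'\mid u,v) = \bQ(v'\mid v)$ follow from the factor property for $g$ (respectively $f$) combined with the transition coupling constraint on $\tilde{R}$, so $(\tilde{X},\tilde{Y})\in\Pitc(X,Y)$. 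By construction $(f(\tilde{X}),g(\tilde{Y}))\eqd(\tilde{W},\tilde{Z})$, and compatibility $c_{ext}(u,v)=c(f(u),g(v))$ yields $\E c_{ext}(\tilde{X}_0,\tilde{Y}_0) = \E c(\tilde{W}_0,\tilde{Z}_0)$.

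For the projection, given a transition coupling $(\tilde{X},\tilde{Y})$ of $X,Y$ with stationary distribution $\pi$ and kernel $R$, I form the one-step Markovization of the process $(f(\tilde{X}),g(\tilde{Y}))$: set $\mu(a,b) = \sum_{u\in f^{-1}(a),\, v\in g^{-1}(b)}\pi(u,v)$ and
\[
\tilde{R}(a',b'\mid a,b) \ = \ \frac{1}{\mu(a,b)}\sum_{u\in f^{-1}(a),\, v\in g^{-1}(b)}\pi(u,v)\sum_{u'\in f^{-1}(a'),\, v'\in g^{-1}(b')} R(u',v'\mid u,v).
\]
The marginal $\sum_{b'}\tilde{R}(a',b'\mid a,b)=\tilde{\bP}(a'\mid a)$ follows from the marginal constraint on $R$ and the factor relation $\sum_{u'\in f^{-1}(a')}\bP(u'\mid u)=\tilde{\bP}(a'\mid a)$ for $u\in f^{-1}(a)$; stationarity of $\mu$ under $\tilde{R}$ is obtained by summing the identity $\pi(u',v')=\sum_{u,v}\pi(u,v)R(u',v'\mid u,v)$ over fibers; and the expected cost matches via $\sum_{a,b} c(a,b)\mu(a,b) = \sum_{u,v} c_{ext}(u,v)\pi(u,v)$.

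Combining the two constructions, the lift gives $\rho(G_1,G_2;c_{ext}) \leq \rho(H_1,H_2;c)$ and the projection gives the reverse inequality, so the two costs are equal. Part 2 follows because the lift of an OTC $(\tilde{W},\tilde{Z})$ of $W,Z$ is a transition coupling of $X,Y$ achieving the common minimum, hence an OTC, while $(f(\tilde{X}),g(\tilde{Y}))\eqd(\tilde{W},\tilde{Z})$ by construction. Part 1 follows symmetrically: the projection of an OTC of $X,Y$ is a transition coupling of $W,Z$ achieving the common minimum, hence an OTC. The main technical obstacle is verifying the marginal property of $R$ in the lift, since it requires the factor relations for $f$ and $g$ to interact cleanly with the transition coupling constraint on $\tilde{R}$; the conditional-independence construction sidesteps a direct stationarity check by producing stationarity, marginals, and the Markov property automatically from the hierarchical sampling scheme.
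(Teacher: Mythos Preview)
Your lift construction is correct and is essentially the paper's argument in explicit form: the paper phrases the lift as iterated relatively independent joinings (their Proposition~\ref{prop:RelInd} on relatively independent transition couplings over a common factor), and unpacking that construction yields exactly your hierarchical fiber-chain kernel
\[
R(u',v'\mid u,v) \ = \ \tilde{R}\bigl(f(u'),g(v')\mid f(u),g(v)\bigr)\cdot \frac{\bP(u'\mid u)}{\tilde{\bP}(f(u')\mid f(u))}\cdot \frac{\bQ(v'\mid v)}{\tilde{\bQ}(g(v')\mid g(v))}.
\]
So Part~2 and the inequality $\rho(G_1,G_2;c_{ext}) \le \rho(H_1,H_2;c)$ are fine, and on this half your approach and the paper's coincide.

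The projection direction is where your argument and the paper's diverge, and where your proof of Part~1 has a gap. Your one-step Markovization $(\mu,\tilde R)$ does produce a transition coupling of $W,Z$ with expected cost $\E c_{ext}(\tilde X_0,\tilde Y_0)$, so it establishes $\rho(H_1,H_2;c) \le \rho(G_1,G_2;c_{ext})$ and hence equality of optimal values. But Part~1 asserts that the \emph{process} $(f(\tilde X),g(\tilde Y))$ itself is an optimal transition coupling, not merely that some transition coupling with the same two-dimensional marginal is. Your Markovized chain agrees with $(f(\tilde X),g(\tilde Y))$ at the level of one- and two-step distributions, but in general need not coincide with it as a process: lumpability of the joint kernel $R$ with respect to the partition $\{f^{-1}(a)\times g^{-1}(b)\}$ does not follow from the transition-coupling marginals on $R$ together with lumpability of $\bP$ and $\bQ$ separately. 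Your closing sentence (``the projection of an OTC of $X,Y$ is \ldots\ an OTC'') is thus a statement about your Markovized object, not about $(f(\tilde X),g(\tilde Y))$, and does not deliver Part~1 as written.

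The paper takes a different route for Part~1: it asserts directly that, because $f$ and $g$ are factor maps, $(f(\tilde X),g(\tilde Y))$ is a transition coupling of $W$ and $Z$ for any transition coupling $(\tilde X,\tilde Y)$, and then combines this with the cost identity $\E c_{ext}(\tilde X_0,\tilde Y_0)=\E c(f(\tilde X_0),g(\tilde Y_0))$ and the surjectivity of the lift. If you want to recover Part~1 in the paper's form, you need to argue that $(f(\tilde X),g(\tilde Y))$ is itself a stationary Markov chain whose kernel satisfies the transition-coupling constraints, rather than pass through a Markovization; your current write-up does not supply that step.
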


A simple illustration of Theorem \ref{thm:two_factor} is given in Figure \ref{fig:two_factor}.
For compatible cost functions, the theorem ensures that an optimal transport plan for the extensions $G_1$ and $G_2$
can be transferred through the maps $f$ and $g$ to an optimal transport plan for the factors $H_1$ and $H_2$; moreover,
every optimal transport plan for the factors can be obtained in this way. 
Thus NetOTC respects factor structure whenever factor structure is present: the NetOTC alignment of the extensions is consistent with the NetOTC alignment of the factors.
This is a fundamental property of the NetOTC procedure, in the sense that the operation of NetOTC on the 
extensions $G_1$ and $G_2$ makes no reference to, and requires no knowledge of,
the factor maps $f$ and $g$ or the factors $H_1$ and $H_2$. From a practical point of view, if one has access to factor graphs $H_1$ and $H_2$, then one could save computational expense by aligning the smaller factor graphs, and Theorem~\ref{thm:two_factor} guarantees that the result would be consistent with the alignment of the larger graphs $G_1$ and $G_2$.

\begin{figure}[t]
\centering
\includegraphics[width=0.95\textwidth, trim=0cm 0cm 0cm 0cm, clip]{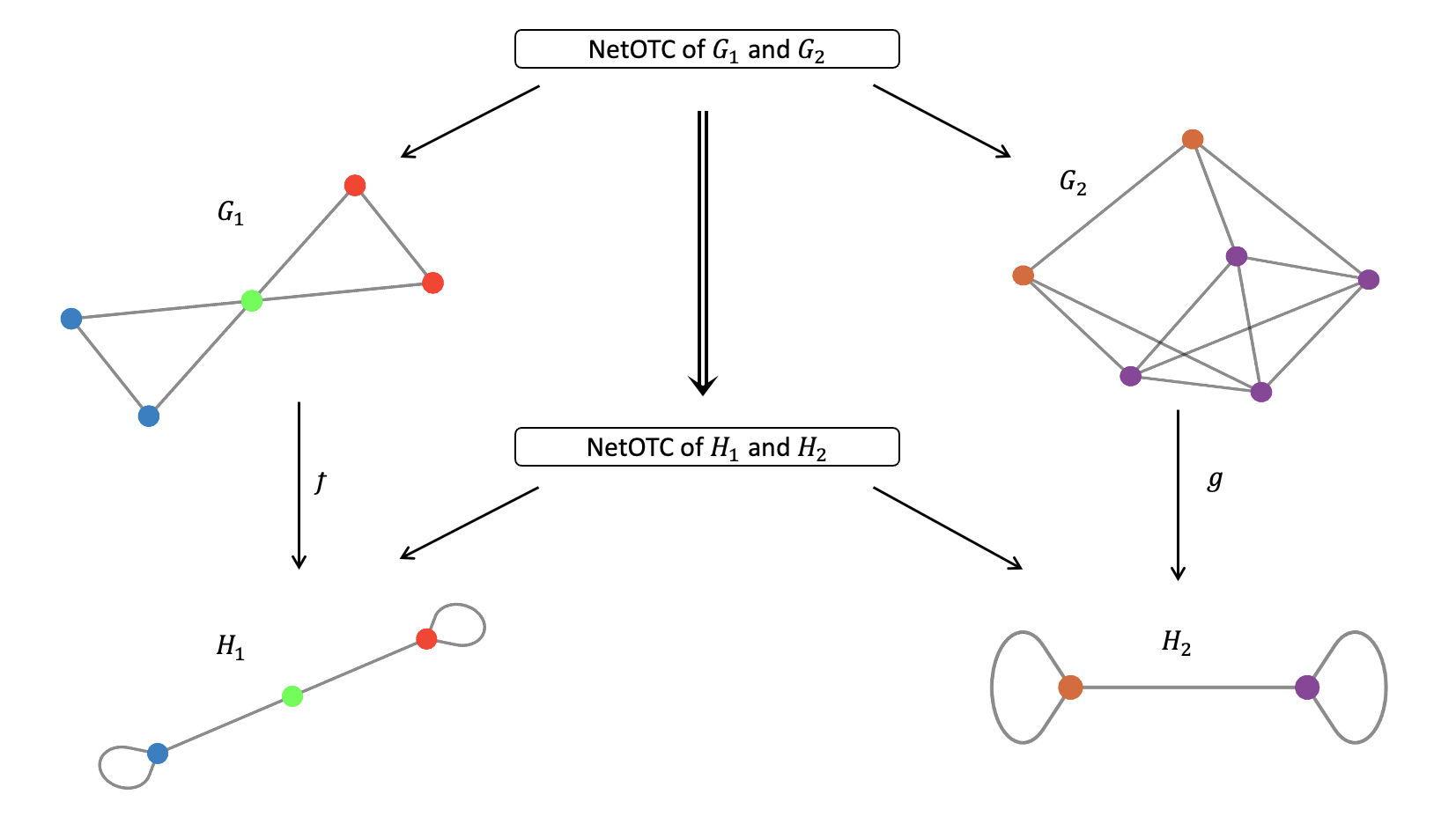}
\caption{An illustration of Theorem \ref{thm:two_factor}. If $G_1$ and $H_1$, and $G_2$ and $H_2$ are related by factor maps $f$ and $g$, respectively, the NetOTC of $H_1$ and $H_2$ can be naturally induced by the NetOTC of $G_1$ and $G_2$ using the maps $f$ and $g$.}
\label{fig:two_factor}
\end{figure}

\vskip.3in

\section{Example and Experiments}
\label{sec:experiments}
In this section, we illustrate the properties of NetOTC through an example 
and several numerical experiments.  The latter include
network classification, network isomorphism, alignment of stochastic block models, and network factors.
Complete experimental details may be found in Appendix \ref{app:exp_details}.
In the example and experiments, we compare NetOTC to several existing, optimal transport-based approaches to
network alignment: marginal optimal transport (OT), Gromov-Wasserstein (GW) \citep{peyre2016gromov}, Fused Gromov-Wasserstein (FGW) \citep{titouan2019optimal,vayer2020fused}, and Coordinated Optimal Transport (COPT) \citep{dong2020copt}.
Here marginal optimal transport refers to the optimal coupling of the stationary distributions of the random 
walks on the given networks. 
When applying the FGW method, following \cite{titouan2019optimal,vayer2020fused}, we use a uniform distribution 
on the vertices of each network.
Code for reproducing the example and experiments may be found at \url{https://github.com/austinyi/NetOTC}.

\subsection{Edge Awareness Example}
\label{sec:circle_example}

We begin with a toy example to demonstrate the edge preservation property of NetOTC (see Proposition \ref{prop:edge_preservation}): 
network $G_1$ is an octagon network, network $G_2$ is a copy of $G_1$ with one edge on the right removed, and 
network $G_3$ is topologically identical to $G_2$, but its vertices are located in the left half plane.  
See Figure \ref{fig:circle_example} below.

\begin{figure}[ht]
\centering
\begin{subfigure}{0.3\textwidth}
\includegraphics[width=\textwidth, trim=2.8cm 1cm 2.8cm 1cm, clip]{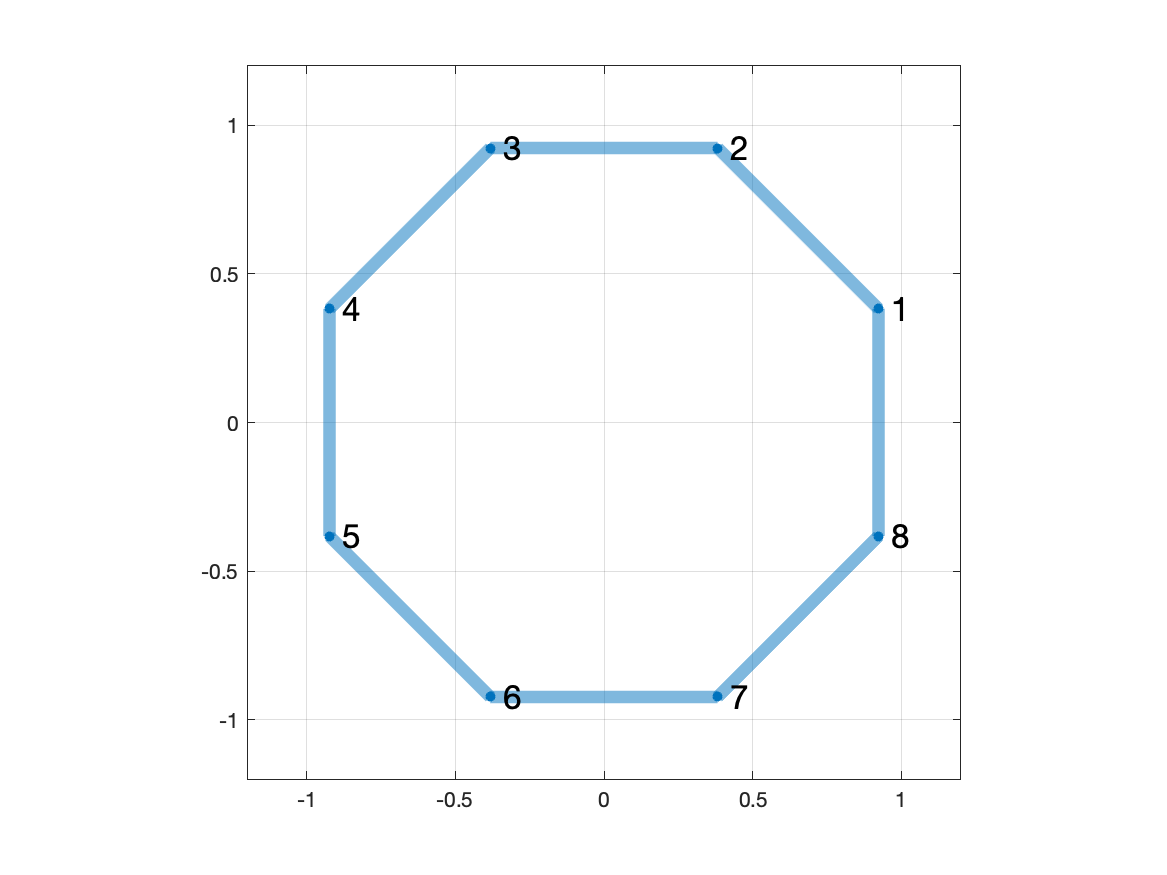}
\caption*{$G_1$}
\end{subfigure}
\begin{subfigure}{0.3\textwidth}
\includegraphics[width=\textwidth, trim=2.8cm 1cm 2.8cm 1cm, clip]{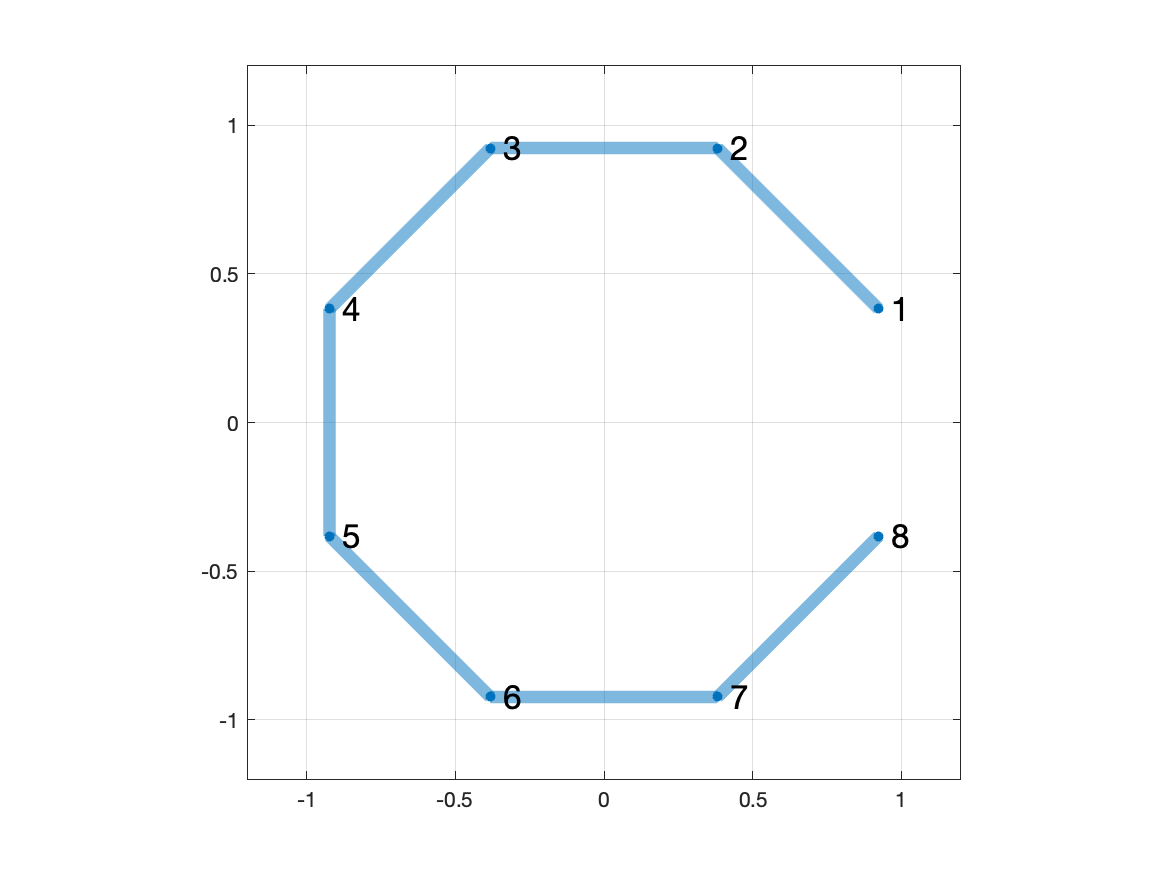}
\caption*{$G_2$}
\end{subfigure}
\begin{subfigure}{0.3\textwidth}
\includegraphics[width=\textwidth, trim=2.8cm 1cm 2.8cm 1cm, clip]{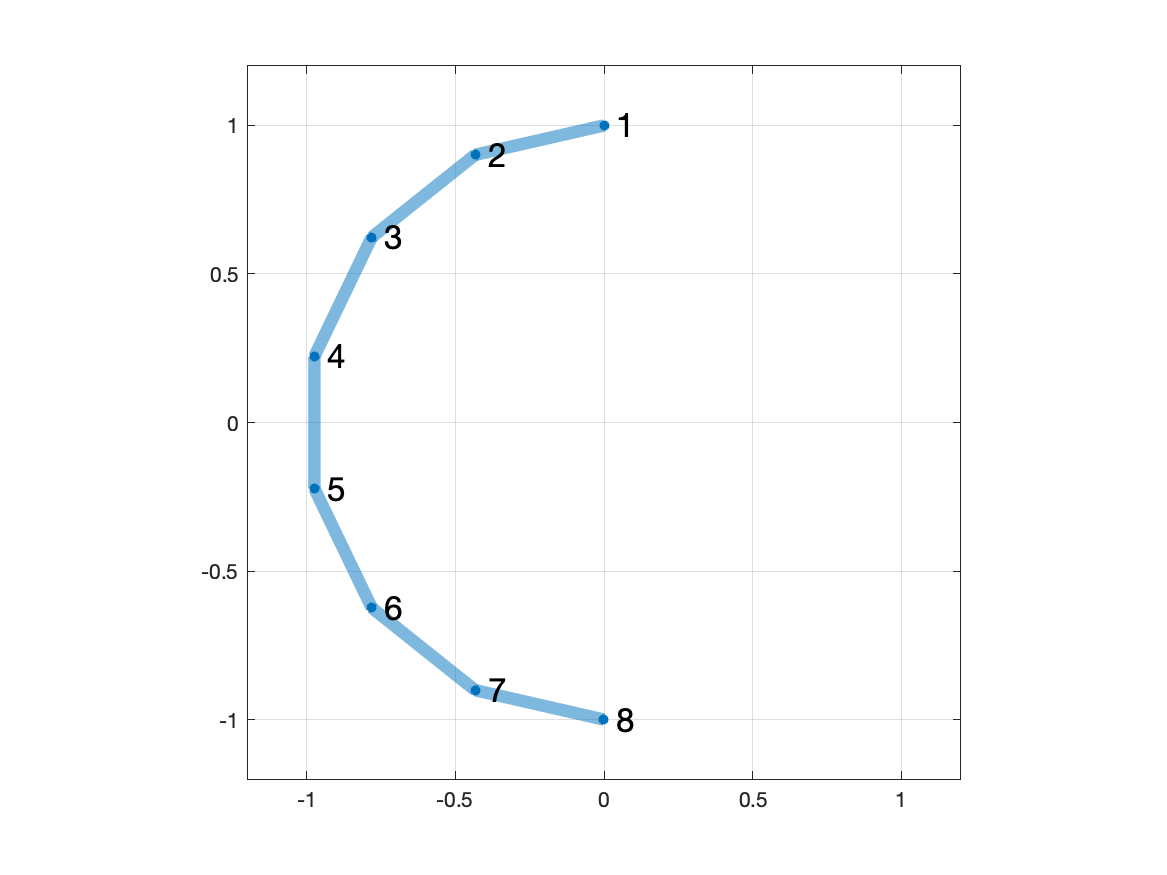}
\caption*{$G_3$}
\end{subfigure}
\caption{Three networks in which all vertices are located on the unit circle in $\mathbb{R}^2$.  $G_1$ is an octagon network. $G_2$ is obtained by removing an edge $G_1$. In $G_3$, the vertices are uniformly distributed in the left semicircle. }
\label{fig:circle_example}
\end{figure}

Table \ref{table:circle_example} shows the ratio of the costs obtained when comparing different pairs of networks
under a cost function equal to the squared Euclidean distance between vertex positions.  
We considered the methods NetOTC, OT, and FGW, as they allow the use of the Euclidean cost function.
The ratio of the cost between $G_2$ and $G_1$ and the cost between $G_2$ and $G_3$
varies greatly between the methods. 
Observe that NetOTC is the only method with a ratio that exceeds 1, that is, 
NetOTC finds $G_2$ to be closer to $G_3$ than $G_1$. 
This example illustrates that NetOTC is sensitive to topological differences between the given networks,
and in particular that topological similarity can dominate differences in the vertex costs.

\begin{table}[ht]
\small
\centering
\begin{tabular}{c c c c}
\hline 
Algorithm & $G_2$ vs. $G_1$ & $G_2$ vs. $G_3$ & Ratio\\ 
\hline
\hline
NetOTC &  0.5714 &  0.4464 & 1.28 \\
\hline 
OT & 0.2857 & 0.4464 & 0.64 \\ 
\hline 
FGW & 0.0313 & 0.2725 & 0.11 \\
\hline 
\end{tabular}
\caption{Comparison of OT-based costs between networks in Figure \ref{fig:circle_example}.}
\label{table:circle_example}
\end{table}

\subsection{Network Classification}\label{sec:network_classification}

In our next experiment, we examine the utility of NetOTC for network classification tasks.
We consider a selection of benchmark network datasets from \cite{KKMMN2016}.  Each dataset contains a 
collection of networks with discrete vertex attributes and class labels.
We considered the datasets AIDS \citep{riesen2008iam}, BZR \citep{sutherland2003spline}, 
Cuneiform \citep{kriege2018recognizing}, MCF-7 \citep{Yan2008MiningS}, MOLT-4 \citep{Yan2008MiningS}, 
MUTAG \citep{debnath1991structure}, and Yeast \citep{Yan2008MiningS}.
For each dataset, we employed an attribute-based cost function, where $c(u,v) =  0$ if vertices $u$ and $v$ have the same attribute and $c(u,v) = 1$ otherwise. 
For each OT-based comparison method, we constructed a 5-nearest neighbor classifier using 
a random training set containing 80\% of the available networks and used this classifier to predict
the labels of the remaining networks.

Table \ref{table:graph_class_acc} shows the average classification accuracy over 5 random samplings of the training and test sets for each comparison method.
As the table demonstrates, NetOTC is competitive with other network OT based methods on the classification tasks, outperforming other methods in several cases,
without the need for tuning or specification of free parameters.

\begin{table}[ht]
\centering
\small
\begin{tabular}{c c c c c c c c c c}
\hline 
Algorithm & AIDS  & BZR  & Cuneiform  & MCF-7  & MOLT-4 & MUTAG & Yeast\\ 
\hline
\hline
NetOTC & $88.0 \pm 4.9$ & \textbf{84.8} $\pm$ \textbf{6.6} & 73.2 $\pm$ 7.8 & $92.8 \pm 4.2$  & \textbf{92.0} $\pm$ \textbf{2.0} & \textbf{85.4} $\pm$ \textbf{7.1} & $90.8 \pm 6.4$\\ 
\hline 
OT & $84.4 \pm 6.1$ & $76.4 \pm 4.6$& $71.3 \pm 7.7$ & \textbf{93.6} $\pm$ \textbf{3.3} & \textbf{92.0} $\pm$ \textbf{2.0} & $63.2 \pm 7.3$ & $91.2 \pm 7.0$ \\ 
\hline
GW & $98.8 \pm 1.8$ & $78.0 \pm 8.5$ & $12.8 \pm 4.6$ & \textbf{93.6} $\pm$ \textbf{3.3}  & $91.6 \pm 2.6$ & $81.6 \pm 7.0$& \textbf{91.6} $\pm$ \textbf{6.2} \\
\hline 
FGW  & \textbf{99.2} $\pm$ \textbf{1.1}  & $80.0 \pm 7.1$  & \textbf{74.8} $\pm$ \textbf{3.6} & $92.8 \pm 4.2$ & $91.6 \pm 2.6$ & $84.3 \pm  8.6$ & $89.2 \pm 6.6$  \\ 
\hline
COPT & $98.0\pm 1.4$ & $73.6 \pm 7.9$ & $16.6 \pm 3.1$ & $92.4 \pm 4.8$ & $91.6 \pm 2.6$ & $80.0 \pm 5.6$ & $90.4 \pm 6.7$   \\
\hline
\end{tabular}
\caption{5-nearest neighbor classification accuracies for networks with discrete vertex attributes.
Average accuracies observed over 5 random samplings of the training and test sets are reported along with their standard deviation.
}
\label{table:graph_class_acc}
\end{table}

\subsection{Network Isomorphism}\label{sec:graph_isomorphism}

\begin{figure}[ht]
\centering
\begin{subfigure}{0.7\textwidth}
\includegraphics[width=\textwidth, trim=0.1cm 0.1cm 0.1cm 0.1cm, clip]{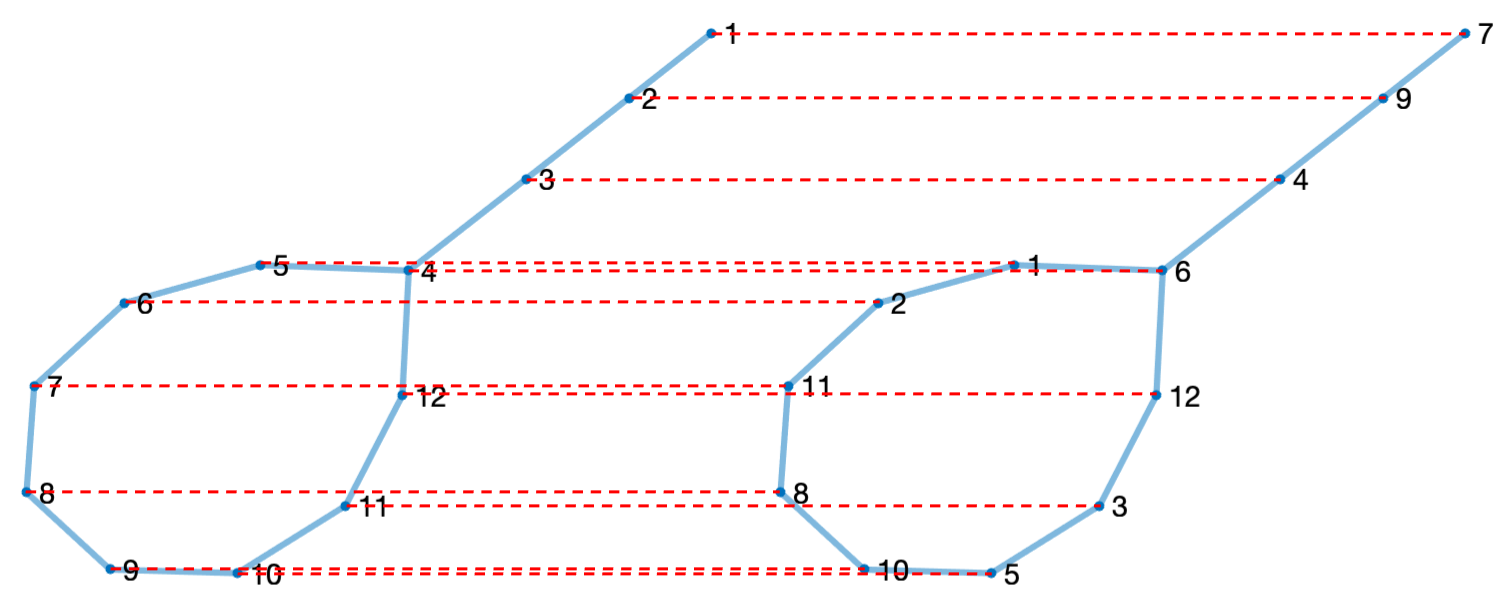}
\caption{NetOTC alignment.}
\end{subfigure}
\begin{subfigure}{0.7\textwidth}
\includegraphics[width=\textwidth, trim=0.1cm 0.1cm 0.1cm 0.1cm, clip]{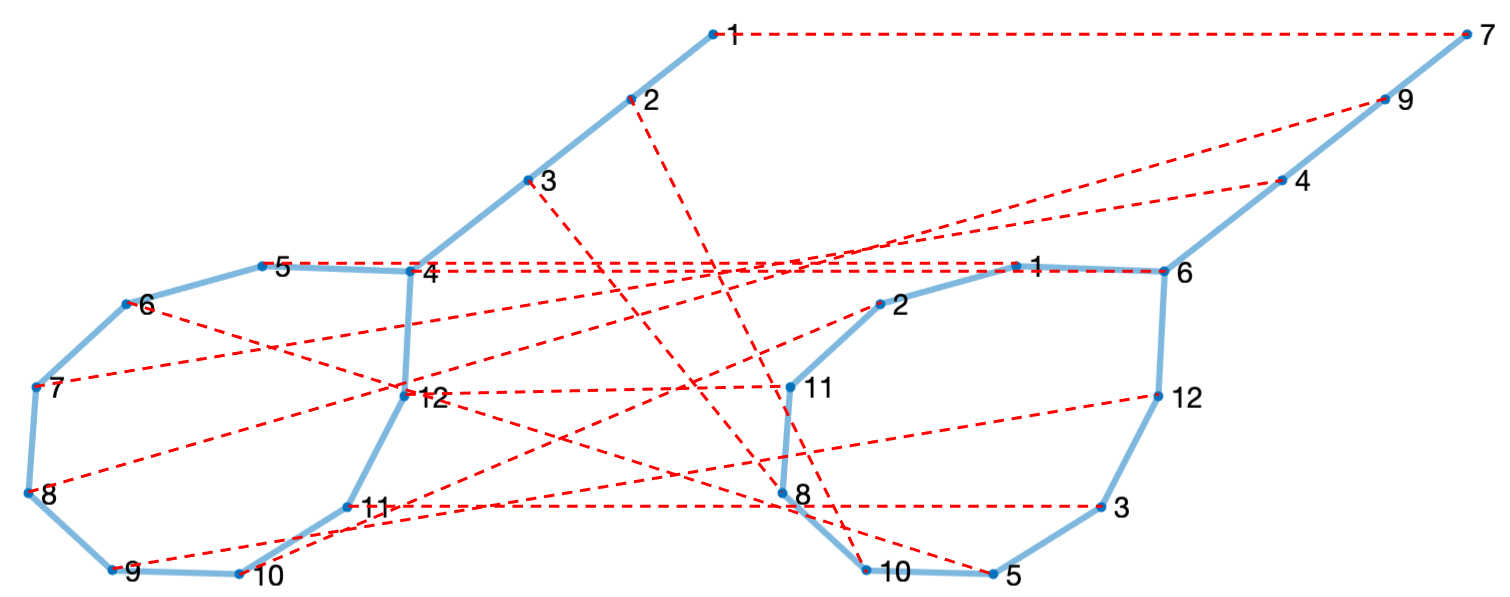}
\caption{OT alignment.}
\end{subfigure}
\begin{subfigure}{0.7\textwidth}
\includegraphics[width=\textwidth, trim=0.1cm 0.1cm 0.1cm 0.1cm, clip]{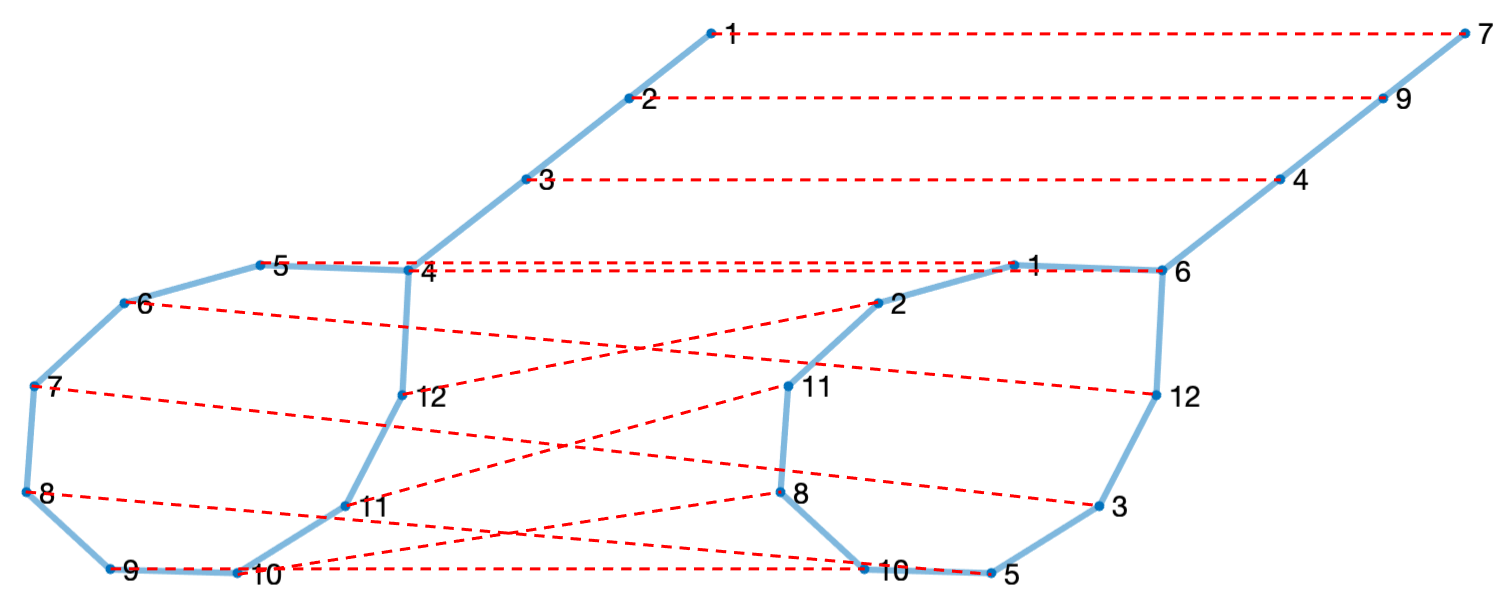}
\caption{FGW alignment.}
\end{subfigure}
\caption{Vertex alignment of two isomorphic lollipop networks obtained by NetOTC, OT, and FGW. NetOTC correctly finds the isomorphism map, while other methods do not.}
\label{fig:isomorphism_example}
\end{figure}

Two undirected, unweighted networks $G_1 = (U,E_1)$ and $G_2 = (V, E_2)$ are isomorphic if there is a bijection $\phi: U \to V$ of their vertex sets that preserves edges in the sense that $(u,u') \in E_1$ if and only if
$(\phi(u), \phi(u')) \in E_2$.  Determining when two networks are isomorphic, and if so identifying an isomorphism, 
are important problems in network theory that have received much attention in the literature \citep{McKay2014PracticalGI, Cordella2004AI}.  
In general, it is challenging to find isomorphisms in an efficient fashion \citep{Babai2015GraphII}.

To evaluate the ability of the alignment methods under study to successfully identify network isomorphisms, we carried out the following experiment. 
Given a network $G_1$, we create an isomorphic copy $G_2$ by applying a permutation $\phi$ to its vertices. 
We then applied NetOTC, FGW, GW, and OT to $G_1$ and $G_2$ with a degree-based cost function.
From each method we obtained a soft vertex alignment $\pi_{\text{v}}: U \times V \rightarrow \mathbb{R}_+$ of the given networks,
and from $\pi_{\text{v}}$ we derived a hard vertex alignment $\psi(u) = \argmax_{v \in V} \pi_{\text{v}}(u,v)$.
If the hard alignment $\psi$ is identical to the isomorphism $\phi$, the algorithm has successfully identified the isomorphism map.
See Appendix \ref{app:iso_exp_details} for more details.
Figure \ref{fig:isomorphism_example} shows an example of isomorphic ``lollipop'' networks 
(another example is shown in Appendix \ref{app:iso_exp_details} Figure \ref{fig:wheel_fgw_ot}).  
NetOTC correctly finds the isomorphism map between two isomorphic networks, but the other methods do not.
Note that the cost function used by each method to align the vertices depends only on their degree, and that the majority of the vertices 
in the lollipop network have degree 2.
This example demonstrates that, while the objective function of the NetOTC problem is univariate (it depends only on the cost between
vertices at time zero), both the NetOTC distance and optimal transition coupling depend critically on the topological properties of 
the given networks.


\begin{table}[ht]
\small
\centering
\begin{tabular}{c c c c c}
\hline 
 & NetOTC & FGW & GW & OT \\ 
\hline
\hline
Erdos-Renyi $(n \in \{ 6,\dots,15\}, \, p=1/3)$ & \textbf{96.73} & 71.50 & 54.67 & 2.80 \\
\hline
Erdos-Renyi $(n\in \{ 6,\dots,15\}, \, p=2/3)$ & \textbf{94.86} & 57.19 & 48.63 & 8.56 \\
\hline 
Erdos-Renyi $(n\in \{ 16,\dots,25\}, \, p=1/4)$ & \textbf{99.64} & 88.45 & 69.68 & 0.00 \\
\hline 
Erdos-Renyi $(n\in \{ 16,\dots,25\}, \, p=3/4)$ & \textbf{100.00} & 71.33 & 50.00 & 0.00 \\
\hline 
SBM $(7,7,7,7)$ & \textbf{100.00} & 84.62 & 54.85 & 0.00 \\
\hline 
SBM $(10,8,6)$ & \textbf{100.00} & 78.33 & 58.67 & 0.00 \\
\hline 
SBM $(7,7,7)$ & \textbf{100.00} & 71.28 & 41.89 & 0.00\\
\hline 
Random weighted adjacency matrix $\{ 0,1,2 \}$ &  \textbf{100.00} &  96.67 & 96.33 & 5.67 \\ 
\hline
Random Lollipop network& \textbf{98.00} & 13.67 & 6.00 & 0.00 \\
\hline 
\end{tabular}
\caption{Isomorphism map identification success rate (\%). We generate 300 random networks in each class. For each random network, we permute its vertices and apply the algorithms to the two isomorphic networks. We report the percentage of times the output alignment of an algorithm yields an isomorphism of the given graphs.}
\label{table:isomorphism}
\end{table}

Further experiments demonstrate the ability of NetOTC to recover isomorphisms in different classes of networks:
random (Erdos-Renyi) networks, stochastic block models (SBMs), networks with a random weighted (0,1,2) adjacency matrix, and random lollipop networks.
Table \ref{table:isomorphism} shows the average performance of each method for 300 random networks of each class.
See Appendix \ref{app:iso_exp_details} for further details of the network generation process.
NetOTC exhibits perfect performance for networks with random adjacency matrices and all types of SBMs, with very high accuracy 
in Erdos-Renyi and random lollipop networks.
On all classes but the random adjacency matrix, the competing methods perform markedly worse.
The poor performance of OT demonstrates the substantial performance gains obtained from coupling the full random walks on $G_1$ and $G_2$,
rather than their stationary distributions.

\subsection{Block Alignment in Stochastic Block Models}
\label{sec:sbm_alignment}

\begin{figure}[ht]
\centering
\begin{subfigure}{0.48\textwidth}
\includegraphics[width=\textwidth, trim=2cm 1cm 1cm 1cm, clip]{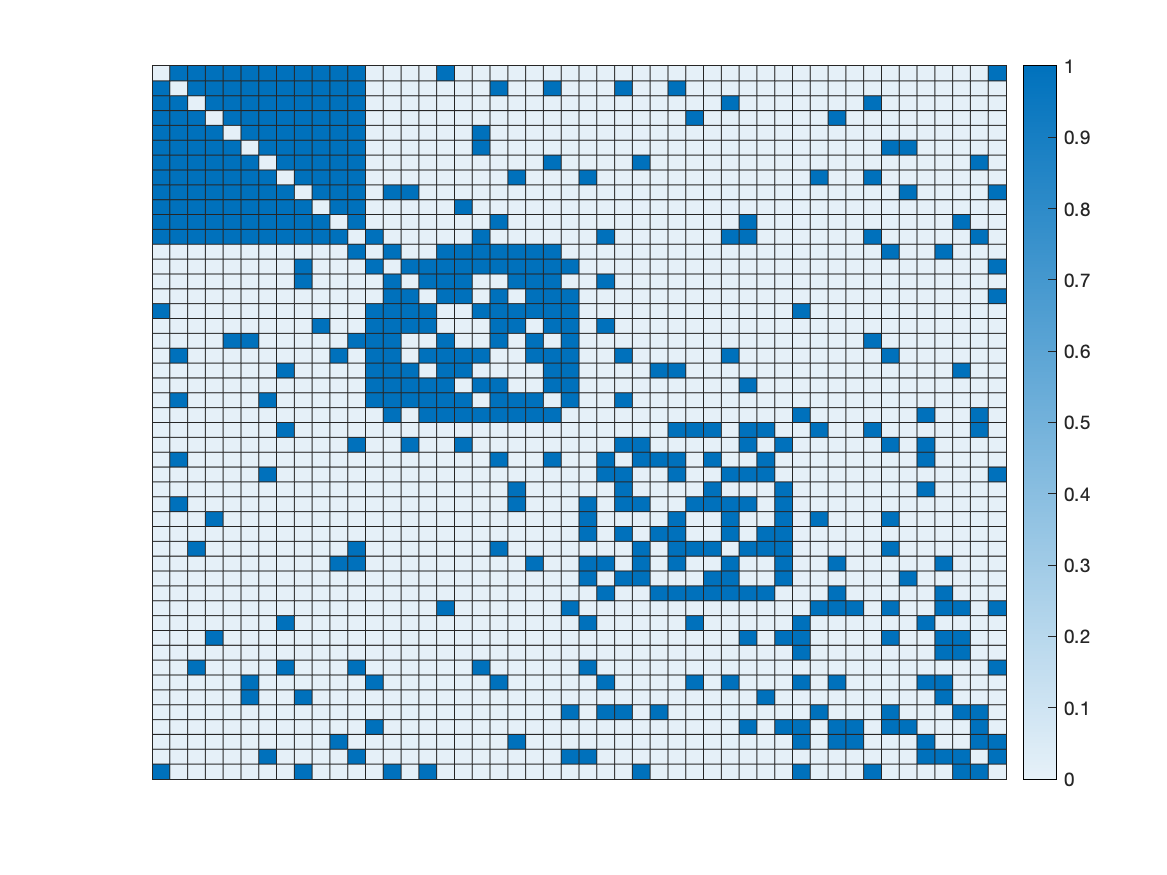}
\caption*{$G_1$}
\end{subfigure}
\begin{subfigure}{0.48\textwidth}
\includegraphics[width=\textwidth, trim=2cm 1cm 1cm 1cm, clip]{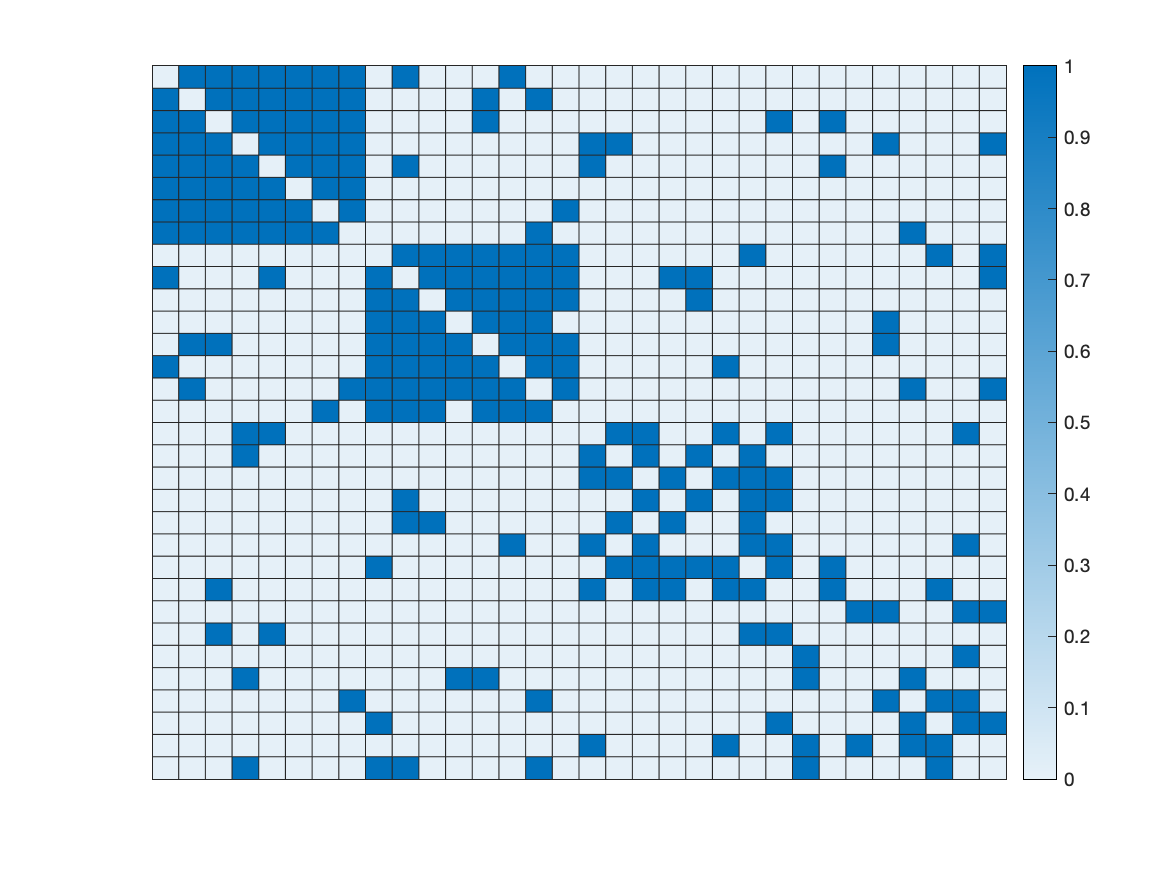}
\caption*{$G_2$}
\end{subfigure}
\caption{The adjacency matrices of random networks $G_1$ and $G_2$ drawn from SBMs.  Block structure arises from 
different connection probabilities within the blocks. Networks $G_1$ and $G_2$ are designed to have the same block structure with a different number of vertices.}
\label{fig:sbm_example}
\end{figure}

Stochastic block models \citep{Holland1983StochasticBF} (SBMs) are frequently used to model random 
networks with community structure.  SBMs have found application in a variety of network problems, 
including community detection and network clustering, see for example
\cite{JMLR:v18:16-480, Lee2019ARO, Abbe2015CommunityDI}.
In an SBM, each vertex is assigned (deterministically or at random) to one of a small number of groups, also
known as blocks.  Once group assignment is complete,
edges are placed between pairs of vertices independently, with the probability of an edge being present
depending on the group assignments of its endpoints.
In most cases, edge probabilities are higher within groups than between groups, so that the 
vertices in a group constitute, informally at least, a community.

We wished to assess the ability of OT-based comparison methods to align the vertices 
and edges of stochastically equivalent blocks in SBMs of different sizes.
To this end, we generated 10 realizations $G_1, G_2$ of SBMs with 4 blocks.
In each case, the network $G_1$ had 12 vertices per block, and $G_2$ had 8 vertices per block.
For each network, the within block connection probabilities were 1, 0.8, 0.6, and 0.4, while the between block connection probability was equal to 0.1.
The adjacency matrix of a typical realization of $G_1$ and $G_2$ is depicted in Figure \ref{fig:sbm_example}.
Note that the networks $G_1$ and $G_2$ are undirected and unweighted.

We applied the five comparison methods under study to each of the 10 realizations of $G_1$ and $G_2$ using the standardized degree-based cost function.
Each method returns a vertex alignment $\pi_{\text{v}}: U \times V \rightarrow \mathbb{R}_+$ associated with
the respective optimal transport plans.
Vertex alignment accuracy was assessed by summing $\pi_{\text{v}}$ over vertex pairs 
in corresponding blocks, i.e., blocks with the same connection probability.

As described above, the NetOTC optimal transport plan also provides a native edge alignment 
$\pi_{\text{e}}: U^2 \times V^2 \rightarrow \mathbb{R}_+$.  
For other methods, we formed an edge alignment by setting 
$\pi_{\text{e}}((u,u'),(v,v')) = \pi_{\text{v}}(u,v) \pi_{\text{v}}(u',v')$.
Edge alignment accuracy was evaluated by summing the alignment probabilities of all pairs of edges 
connecting stochastically equivalent blocks, i.e., summing all $\pi_{\text{e}}((u,u'),(v,v'))$ where $u$ and $v$, and $u'$ and $v'$ are from blocks with equal connection probabilities.

Figure \ref{fig:sbm_align_exp} shows the vertex and edge alignment accuracies for each of the methods tested.
As background, we note that random guessing yields an accuracy of 25\% for vertex alignment 
and 6.25\% for edge alignment.
For vertex alignment, NetOTC, GW, and FGW exhibit similar performance (substantially better than 
random guessing) while OT and COPT do worse.
As indicated by the error bars in Figure \ref{fig:sbm_align_exp}, the vertex alignment accuracy of NetOTC has a lower variance than the accuracies of OT, GW, and FGW.
As expected, NetOTC outperforms other methods from the standpoint of edge alignment.

\begin{figure}[ht]
\centering
\begin{subfigure}{0.37\textwidth}
\includegraphics[width=\textwidth, trim=0cm 0cm 0cm 0cm, clip]{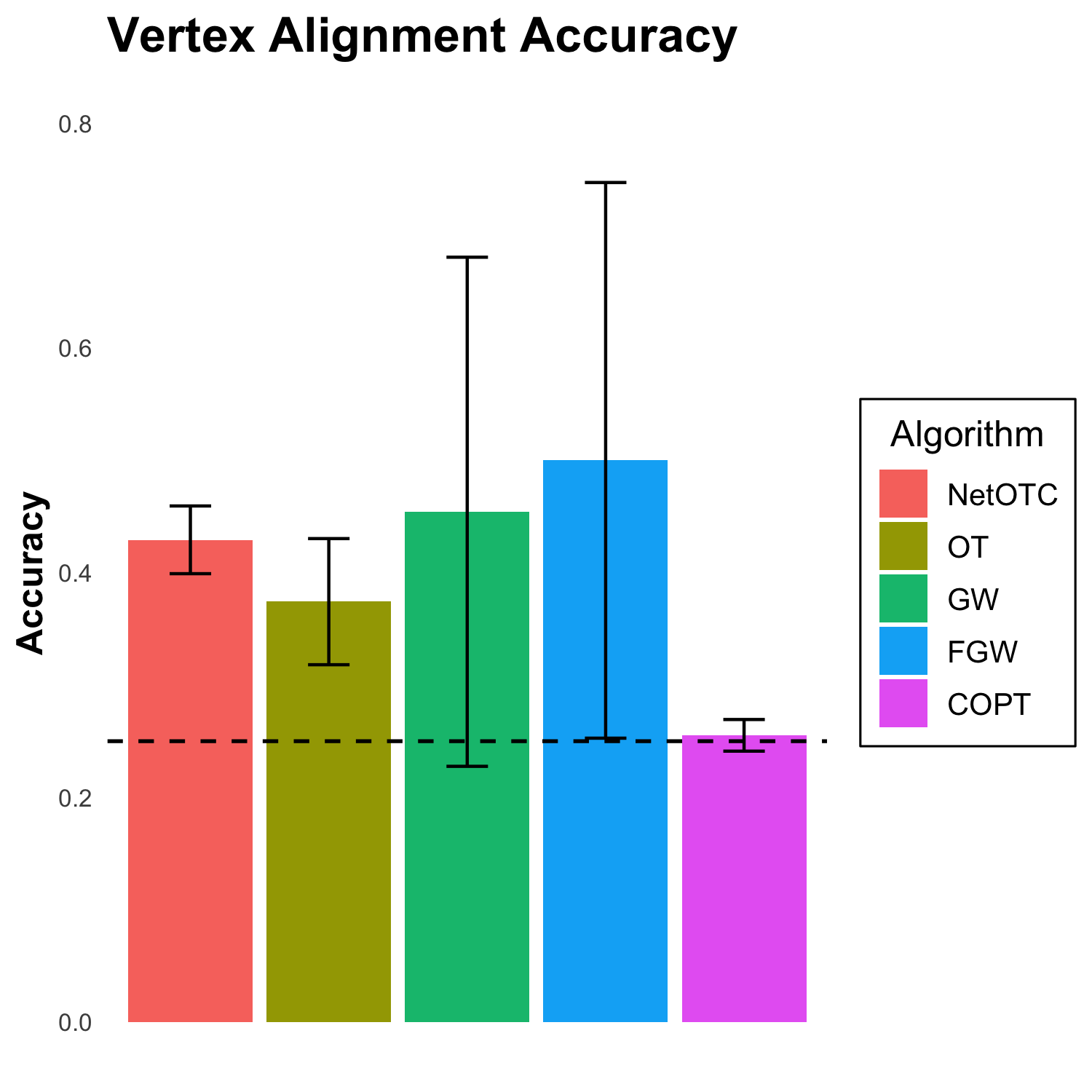}
\end{subfigure}
\begin{subfigure}{0.37\textwidth}
\includegraphics[width=\textwidth, trim=0cm 0cm 0cm 0cm, clip]{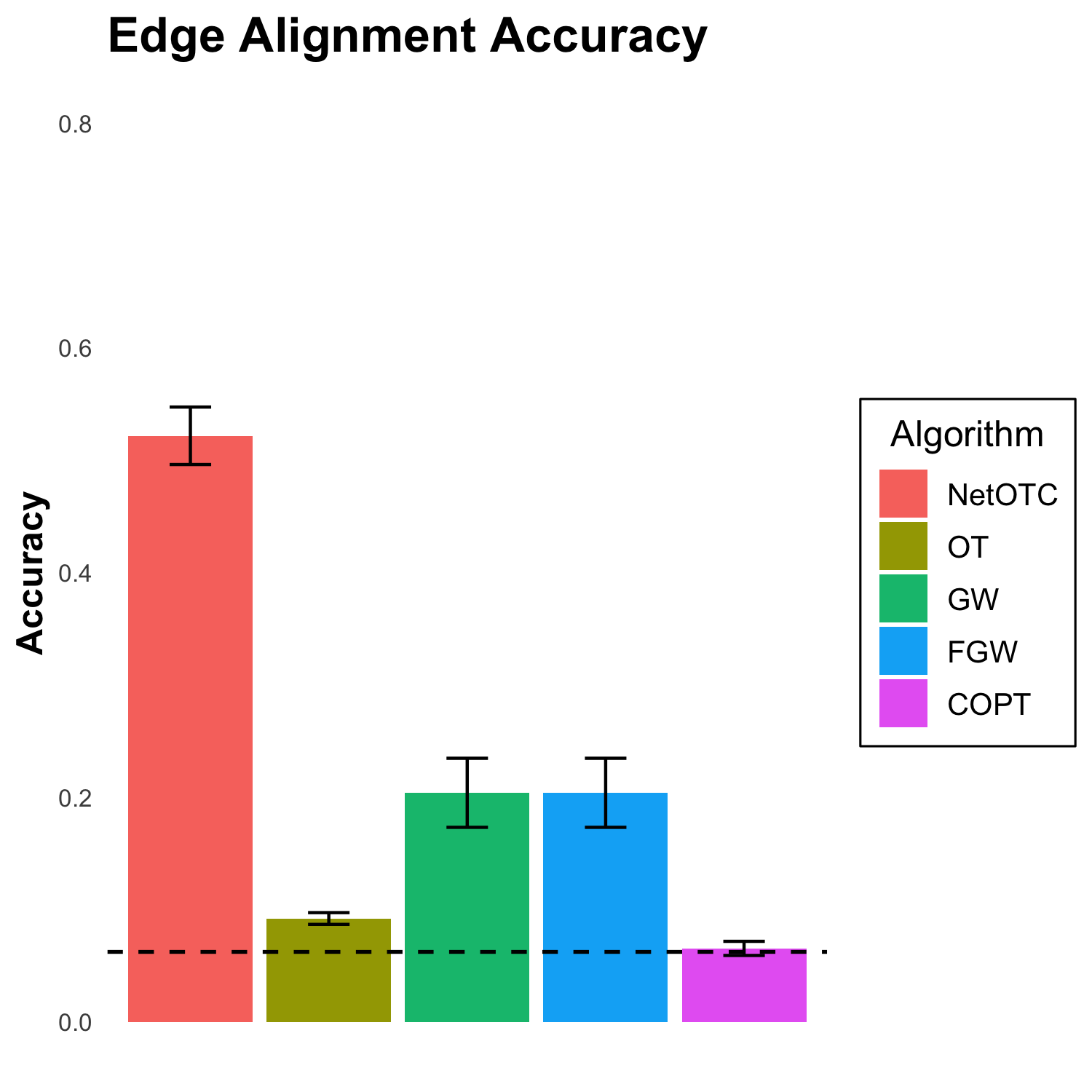}
\end{subfigure}
\caption{SBM alignment accuracies.
Average accuracies observed over 10 random pairs of SBMs are reported along with their standard deviation.
The horizontal dashed line in each plot indicates the accuracy of random guessing.}
\label{fig:sbm_align_exp}
\end{figure}

\subsection{Network Factors}
\label{sec:graph_factors}

Lastly, we considered the task of aligning corresponding vertices when the network $G_2$ is a factor of $G_1$
and the cost is compatible with the factor map (see Section \ref{sec:factors}).
We construct networks $G_1$ and $G_2$ via vertex embeddings in $\mathbb{R}^5$ as follows. 
The network $G_2$ has 6 vertices, each associated with a feature vector generated from a 5-dimensional normal 
distribution with mean zero and variance $\sigma^2 \mathbf{I}$.
The network $G_1$ has 30 vertices, each associated with a feature vector sampled from a 6-component Gaussian mixture model. 
The means of the 6 components correspond to the feature vectors associated with the 6 vertices of $G_2$, while the variances are $\mathbf{I}_5$.
Here, the factor map $f$ is determined by the component from which the feature vector was sampled. 
Next, we randomly set the edge weights of network $G_2$ to an integer between 1 and 10.
Then, the edge weights of network $G_1$ are randomly determined so that equation \eqref{eq:factor_def} holds, 
and therefore $G_2$ is a factor of $G_1$.
The networks considered are undirected, in order to enable comparison with other methods. 
See Table \ref{table:graph_factors_directed} of Appendix \ref{app:factors_exp_details} for results on directed networks;
there was no significant difference in the performance of NetOTC between directed and undirected networks.
The experiment was repeated in a setting where $G_2$ is an approximate factor of $G_1$, that is, when the factor condition only holds approximately.
See Appendix \ref{app:factors_exp_details} for explanations of how we generated an approximate factor.

NetOTC, FGW, and OT were applied to the generated networks $G_1$ and $G_2$ using an embedding-based cost equal to the squared Euclidean distances between the vectors associated with the vertices.
Table \ref{table:graph_factors} reports the vertex alignment accuracy of each method for different values of the variance $\sigma^2$.
Vertex alignment accuracy was assessed by summing the mass of the optimal coupling on factor pairs of the form $(u, f(u))$.  
NetOTC outperforms the other methods, with the performance gap growing as $\sigma$ decreases.
For an exact factor with compatible cost, which occurs when $\sigma=2.5$, 
NetOTC returns a perfect alignment, as guaranteed by Corollary \ref{cor:factor}.
Results from other cases demonstrate that the performance of NetOTC is robust when the factor and cost conditions hold only approximately.
It is also noteworthy that FGW and OT yield nearly identical alignment accuracy in all cases.

\begin{table}[ht]
\small
\centering
\begin{tabularx}{\textwidth}{@{} X *6{c} @{}}
\toprule
  & \multicolumn{3}{c}{Exact factor} & \multicolumn{3}{c}{Approximate factor} \\
  \cmidrule(r){2-4} \cmidrule(l){5-7}
  & NetOTC & FGW & OT & NetOTC & FGW & OT  \\ \midrule

$\sigma = 2.5$ & \textbf{100.00} $\pm$ \textbf{0.00} & $98.17 \pm 3.95$ & $98.38 \pm 3.40$ & \textbf{98.57} $\pm$ \textbf{0.20} & $98.07 \pm 4.55$ & $98.32 \pm 3.96$ \\ 
$\sigma = 2.0$ & \textbf{99.95} $\pm$ \textbf{0.46} & $96.57 \pm 5.22$ & $96.75 \pm 5.08$ & \textbf{98.09} $\pm$ \textbf{1.64}& $95.83 \pm 5.35$ & $96.00 \pm 5.11$\\
$\sigma = 1.5$ & \textbf{97.45} $\pm$ \textbf{5.10} & $90.20 \pm 8.24$ & $90.57 \pm 8.18$ & \textbf{96.43} $\pm$ \textbf{4.40} & $91.23 \pm 7.85$ & $91.65 \pm 7.72$\\
$\sigma = 1.0$ & \textbf{81.60} $\pm$ \textbf{13.88} & $73.23 \pm 12.08$ & $73.85 \pm 12.05$ & \textbf{79.44} $\pm$ \textbf{12.14}& $72.00 \pm 11.74$ & $72.00 \pm 12.06$ \\ \bottomrule
\end{tabularx}
\caption{Undirected networks: alignment accuracies of network factors. Average accuracies observed over 100 random network factors are reported along with their standard deviation.}
\label{table:graph_factors}
\end{table}

\vskip.3in

\section{Conclusion}\label{sec:gotc_conclusion}








In this paper we have introduced the NetOTC method for comparison and alignment of weighted networks. 
This new approach is based on constrained optimal transport of the random walks (Markov chains) associated with each network.
Given two networks and a vertex-related cost function, NetOTC identifies an optimal coupling between their associated 
random walks that minimizes the expected cost at time zero. 
NetOTC applies to both directed and undirected networks, as well as networks with different numbers of nodes. 
Once edge weights and node-cost functions are specified, NetOTC has no free parameters and involves no randomization. 
The expected cost resulting from the optimal coupling serves as a numerical measure of the dissimilarity between the networks.
In addition, the optimal transport plan itself offers interpretable, probabilistic alignments of both vertices and edges between the two networks.

We have demonstrated that NetOTC effectively incorporates global and local structure by focusing on coupling the full random walks, 
rather than stationary or other node-level distributions. 
We established several theoretical properties of NetOTC that support its use, including metric properties of the minimizing cost 
as well as its connection with short- and long-run average cost. 
A key feature of NetOTC is that it respects edges: edges are aligned with positive probability 
only if they are present in the given networks. 
In addition, we introduced a new notion of factor for weighted networks and established a close connection between factors and NetOTC. 
Complementing the theory, we presented simulations and numerical experiments showing that NetOTC is competitive with, 
and sometimes superior to, other optimal transport-based network comparison methods in the literature. 
In particular, NetOTC showed promise in identifying isomorphic networks using a local (degree-based) cost function.
%
\vskip.3in
\section{Proofs}\label{sec:proofs}

In this section, we provide the proof of our results.
Before proceeding, we introduce some necessary background.
Given two stationary processes $X = X_0, X_1, \dots$ and $Y = Y_0, Y_1, \dots$ taking values in finite sets $U$ and $V$, respectively, 
and a cost $c: U \times V \rightarrow \bbR_+$,
the optimal joining distance $\oj(X, Y)$ is the minimum of $\mathbb{E}[c(\tilde{X}_0, \tilde{Y}_0)]$ over stationary couplings 
$(\tilde{X}, \tilde{Y})$ of $X$ and $Y$.
Stationary couplings are referred to as \emph{joinings} in the ergodic theory literature \citep{furstenberg1967disjointness,dejoinings}.
As every transition coupling of stationary Markov chains $X$ and $Y$ is also a joining, we have 
\begin{equation}
\label{eq:oj_otc_bound}
\oj(X, Y) \leq \min_{(\tilde{X}, \tilde{Y}) \in \tc(X,Y)} \E c(\tilde{X}_0, \tilde{Y}_0) = \rho(G_1, G_2) 
\end{equation}
When $c$ is a metric, $\oj(\cdot, \cdot)$ is a metric between stationary 
processes \citep{gray1975generalization}.

Now we introduce some additional notation. For any finite set $S$, let $\Delta_S$ 
denote the set of probability distributions on $S$. We will regard $\lambda \in \Delta_S$ as a row vector, 
and write $\lambda(s)$ for the $\lambda$-probability of $s \in S$. 
If $g : S \to \mathbb{R}$ is any function, let $\langle \lambda, g \rangle = \sum_{s \in S} \lambda(s) g(s)$, 
which is the expectation of $g$ with respect to $\lambda$. 
For stochastic matrices (equivalently, Markov transition kernels) $\bP$ and $\bQ$ let $\Pi_{TC}(\bP,\bQ)$ denote the set 
of stochastic matrices $\bR$ satisfying the transition coupling condition \eqref{def:TC}. 
By Proposition 4 in \cite{o2020optimal} the NetOTC cost can be written as 
$\rho(G_1, G_2) = \min\left\{ \langle \lambda, c \rangle: \bR \in \Pi_{\mbox{\tiny TC}}(\bP, \bQ), 
\lambda \bR = \lambda, \lambda \in \Delta_{U \times V} \right\}$. 

\subsection{Result for NetOTC Edge Preservation}

\edgepreservation*
\begin{proof}
    Let $\bP$ and $\bQ$ be the transition kernels associated with graphs $G_1$ and $G_2$, and let
    \begin{align*}
        \lambda^*, \bR^* = \argmin_{\lambda, \,\bR} \left\{ \langle \lambda, c \rangle: \bR \in \Pi_{\mbox{\tiny TC}}(\bP, \bQ), \lambda \bR = \lambda, \lambda \in \Delta_{U \times V} \right\},
    \end{align*}
    be a solution for the NetOTC problem.
    The edge alignment probability can be expressed in terms of $\lambda^*, \bR^*$ as $\pi_{\text{e}}((u,u'),(v,v')) = \lambda^*(u,v) \, \bR^*(u',v'|u,v)$. Since $\bR^* \in \Pi_{\mbox{\tiny TC}}(\bP, \bQ)$ and $\pi_{\text{e}}((u,u'),(v,v')) > 0$, we have
    \begin{align*}
        \bP(u'|u)  & = \sum_{\tilde{v} \in V} \bR^*(u' ,  \tilde{v}  \mid  u, v) \geq \bR^*(u',v'|u,v) > 0, \\
        \bQ(v'|v) & = \sum_{\tilde{u} \in U} \bR^*(\tilde{u}, v'  \mid  u,v) \geq \bR^*(u',v'|u,v) > 0,
    \end{align*}
which implies $(u,u') \in E_1$ and $(v,v') \in E_2$.

\end{proof}
\subsection{Properties of \texorpdfstring{$\rho(G_1,G_2)$}{rho}}

\gotceq*
\begin{proof}
Let $(\tilde{X},\tilde{Y})$ be a transition coupling of $X$ and $Y$.  As $(\tilde{X},\tilde{Y})$ is stationary,
the ergodic theorem (Theorem C.1 of \cite{levin2017markov}) ensures that the limit 
\begin{equation*}
\hat{c}(\tilde{X},\tilde{Y}) := \lim_k c_k(\tilde{X}_0^{k-1}, \tilde{Y}_0^{k-1})
\end{equation*}
exists almost surely and that
$\mathbb{E} \hat{c}(\tilde{X},\tilde{Y}) = \mathbb{E} c(\tilde{X},\tilde{Y})$.
It then follows from the definition of $\overline{c}$ that
\begin{equation*}
\mathbb{E} \overline{c}(\tilde{X},\tilde{Y}) = \mathbb{E} \hat{c}(\tilde{X},\tilde{Y}) = \mathbb{E} c(\tilde{X},\tilde{Y}).
\end{equation*}
Taking minima over the set of all transition couplings of $X$ and $Y$ yields the result. 
\end{proof}

\gotclowerbound*
\begin{proof}
Every transition coupling of $X$ and $Y$ is also a joining of $X$ and $Y$ and therefore, as noted in (\ref{eq:oj_otc_bound}),
$\rho(G_1, G_2) \geq \oj(X, Y)$.
Let $(\tilde{X}, \tilde{Y})$ be any joining of $X$ and $Y$.  Stationarity of $(\tilde{X}, \tilde{Y})$ implies that
\begin{equation*}
\mathbb{E}c(\tilde{X}_0, \tilde{Y}_0) =  \mathbb{E}c_k(\tilde{X}_0^{k-1}, \tilde{Y}_0^{k-1})
\end{equation*}
and therefore $\oj(X, Y) =  \ojk(X, Y)$.
Moreover, $(\tilde{X}_0^{k-1}, \tilde{Y}_0^{k-1})$ is also a coupling of $X_0^{k-1}$ and $Y_1^k$ so
\begin{equation*}
\ojk(X, Y) \geq \min_{(\tilde{X}_0^{k-1}, \tilde{Y}_0^{k-1}) \in \Pi(X_0^{k-1}, Y_0^{k-1})} \E c_k(\tilde{X}_0^{k-1}, \tilde{Y}_0^{k-1}).
\end{equation*}
Combining these inequalities gives the result.
\end{proof}

\subsection{Results for Undirected Networks with a Common Vertex Set}
Here we consider undirected networks on a common vertex set.  We assume that the networks are connected. 
Throughout this section, we will make use of the well-known fact that the stationary distribution of a 
simple random walk $X = X_0, X_1, ...$ on a connected undirected network $G = (U, E, w)$ satisfies 
$\bp(u) = d(u)/D$, where $d(u)$ is the weighted degree of $u$ and $D$ is the sum of all the weights in the network.

\graphequiv*
\begin{proof}
Let $d_1(u) = \sum_{u' \in U} w_1(u, u')$ and $d_2(u) = \sum_{u' \in U} w_2(u, u')$ be the degree functions for $G_1$ and $G_2$.
Suppose first that $G_1 \sim G_2$ and thus there exists $C > 0$ such that $w_1(u, u') = C w_2(u, u')$ for every $u,u' \in U$. Then the two transition matrices $\bP$ and $\bQ$ associated with $G_1$ and $G_2$ are equal since
\begin{equation*}
    \bP(u' | u) \, = \, \frac{ w_1(u,u') }{ d_1(u) } = \, \frac{  w_2(u,u') }{ d_2(u) } \, = \bQ(u' | u).
\end{equation*}
Suppose now that $\bP$ and $\bQ$ are equal.
Then for every $(u, u') \in E_1$, we have
\begin{equation*}
\frac{w_1(u,u')}{d_1(u)} = \frac{w_2(u,u')}{d_2(u)},
\end{equation*}
or equivalently $w_1(u, u') = C_u w_2(u, u')$ where $C_u = d_1(u)/d_2(u)$.
As $G_1$ and $G_2$ are undirected, it is easy to see that $C_u = C_{u'}$.
As $G_1$ and $G_2$ are connected, there exists a sequence of edges $(u_1, u_2), (u_2,u_3), (u_{n-1}, u_n) \in E$ 
such that $U \subseteq \{u_1, ..., u_n\}$.
Repeating the arguments above for all edges in this sequence we conclude that $C_u = C_{u'}$ for every $u, u' \in U$,
and it follows that $G_1 \sim G_2$.
\end{proof}

\gotcismetric*
\begin{proof}
The symmetry of $\rho$ is clear.
It is established in Proposition 25 of \cite{o2020optimal}
that the optimal transition coupling cost satisfies the triangle inequality for Markov chains when the cost $c$ does, and
therefore $\rho$ satisfies the triangle inequality.
Thus it suffices to show that $\rho(G_1, G_2) = 0$ if and only if $G_1 \sim G_2$.
Let $G_1$ and $G_2$ be networks satisfying $G_1 \sim G_2$ with associated transition matrices $\bP$ and $\bQ$.
By Proposition \ref{prop:equiv}, $\bP$ and $\bQ$ are equal and clearly $\rho(G_1, G_2) = 0$ since $\langle \lambda, c \rangle = 0$ is achieved by $\lambda$ satisfying $\lambda(u,v) = \bp(u) \In(u = v)$,
which is stationary for the transition coupling satisfying 
\begin{equation*}
\def\arraystretch{1.5}
\bR(u',v'|u,v) = \left\{\begin{array}{ll} 
	\bP(u'|u)\In(u' = v') \quad & u = v \\
	\bP(u'|u) \bP(v'|v) \quad & \mbox{otherwise}
\end{array}\right. .
\end{equation*}
Now suppose that $G_1 \nsim G_2$.
By Proposition \ref{prop:equiv}, $\bP$ and $\bQ$ are necessarily distinct, and consequently so are their associated stationary Markov chains.
As it defines a distance on stationary processes $\oj(X, Y) > 0$, and
applying \eqref{eq:oj_otc_bound}, we conclude that $\rho(G_1, G_2) > 0$ as well.
\end{proof}

\lowerbound*
\begin{proof}
The random walks $X$ and $Y$ associated with $G_1$ and $G_2$ have stationary distributions $\bp(u) = d_1(u) / D$ and $\bq(u) = d_2(u) / D$.
Using the well-known connection between total variation distance and optimal transport under the 0-1 
cost (see, e.g., Equation 6.11 of \cite{Villani2008OptimalTO}), we have
\begin{align*}
    \min_{(\tilde{X}_0, \tilde{Y}_0) \in \Pi(X_0, Y_0)} \E c(\tilde{X}_0, \tilde{Y}_0) & = \min_{(\tilde{X}_0, \tilde{Y}_0) \in \Pi(X_0, Y_0)} \E \In(\tilde{X}_0 \neq \tilde{Y}_0)\\
    & = \frac{1}{2} \sum\limits_{u \in U} |\bp(u) - \bq(u)| = \frac{1}{2D} \sum\limits_{u \in U} |d_1(u) - d_2(u)|.
\end{align*}
Applying Proposition \ref{prop:gotclowerbound} yields the bound for $k=1$.
To obtain the bound for $k=2$, let $\delta_2((u,u'),(v,v')) = \In((u,u') \neq (v,v'))$ and note that 
\begin{equation*}
\delta_2((u,u'),(v,v')) \leq \In(u\neq v) + \In(u' \neq v') = 2 c_2((u, v), (u',v')).
\end{equation*}
By Proposition \ref{prop:gotclowerbound},
\begin{equation*}
\rho(G_1, G_2) \geq \min_{(\tilde{X}_0^1, \tilde{Y}_0^1) \in \Pi(X_0^1, Y_0^1)} \E c_2(\tilde{X}_0^1, \tilde{Y}_0^1) \geq \frac{1}{2} \min_{(\tilde{X}_0^1, \tilde{Y}_0^1) \in \Pi(X_0^1, Y_0^1)} \E \delta_2(\tilde{X}_0^1, \tilde{Y}_0^1).
\end{equation*}
Then using the connection between the transport cost with respect to $\delta_2$ and the total variation distance once again, we obtain
\begin{align*}
\rho(G_1, G_2) &\geq \frac{1}{4} \sum\limits_{u, u' \in U} |\mathbb{P}(X_0^1 = (u,u')) - \mathbb{P}(Y_0^1 = (u,u'))| \\
&= \frac{1}{4} \sum\limits_{u, u' \in U} |\bp(u) \mathbb{P}(X_1 = u' | X_0 = u) - \bq(u) \mathbb{P}(Y_1 = u' | Y_0 = u)| \\
&= \frac{1}{4} \sum\limits_{u, u' \in U} \left|\frac{d_1(u)}{D} \frac{w_1(u,u')}{d_1(u)} - \frac{d_2(u)}{D} \frac{w_2(u,u')}{d_2(u)}\right| \\
&= \frac{1}{4D} \sum\limits_{u, u' \in U} |w_1(u,u') - w_2(u,u')|.
\end{align*}
\end{proof}

\subsection{Results Concerning Network Factors}

In this section, we prove the results about factor maps, including Theorems \ref{thm:factortc} and \ref{thm:two_factor}.

\TCinTermsFactors*
\begin{proof}
In this setting, the conditions in Definition \ref{def:TC} for $(\tilde{X},\tilde{Y})$ to be a transition coupling are precisely equivalent to Condition \ref{eq:factor_def} for the restrictions of $\pi_U$ and $\pi_V$ to $W$.
\end{proof}

\factortc*
\begin{proof}

To prove 1., let $f$ be a factor map from $G_1$ to $G_2$. We first show $Y \eqd f(X)$. 
In order to simplify notation, we will let $f(X_0^{n-1}) = f(X_0), \dots, f(X_{n-1})$. Let us prove by induction that for any $v_0^n \in V$, we have 
\begin{equation*}
\mathbb{P} \left( f(X_0^n) = v_0^n \right) = \mathbb{P} \left( Y_0^n = v_0^n \right).
\end{equation*}
The base case ($n=0$) is immediate from Equation (\ref{eq:stat_dist_factor_def}). For the inductive step, we suppose it is true for some $n \geq 0$. Let $v_0^{n+1} \in V$. Then we have
\begin{align*}
\mathbb{P} \left( f(X_0^n) = v_0^{n+1} \right) & = \sum_{u_0^{n+1} \in f^{-1}(v_0^{n+1})} \mathbb{P}( X_0^{n+1} = u_0^{n+1} ) \\
 & = \sum_{u_0^{n+1} \in f^{-1}(v_0^{n+1})} \mathbb{P}( X_0^{n} = u_0^{n} ) \cdot  \mathbb{P}( X_{n+1} = u_{n+1} \mid  X_0^{n} = u_0^{n} )  \\
  & = \sum_{u_0^n \in f^{-1}(v_0^n)} \mathbb{P}( X_0^{n} = u_0^{n} ) \cdot \sum_{u_{n+1} \in f^{-1}(v_{n+1})} \bP( u_{n+1} \mid u_n ) \\
  & = \bQ(v_{n+1} \mid v_n) \cdot  \sum_{u_0^n \in f^{-1}(v_0^n)}  \mathbb{P}( X_0^{n} = u_0^{n} )  \\
  & = \mathbb{P}( Y_{n+1} = v_{n+1} \mid Y_n = v_n) \cdot \mathbb{P}( Y_0^n = v_0^n ) \\
  & = \mathbb{P}( Y_0^{n+1} = v_0^{n+1} ),
\end{align*}
where we have used Equation (\ref{eq:factor_trans_probs}) and the inductive hypothesis.

Next, we show that $(X,f(X))$ is a transition coupling of $X$ and $Y$. We begin by verifying that $(X, f(X))$ is Markov.
Fixing $(u, v) \in U \times V$ and $n \geq 1$, we have 
\begin{align*}
    \mathbb{P}((X_n,f(X_n))=(u, v)|\{(X_i,f(X_i))\}_{i<n}) & =\mathbb{P}((X_n,f(X_n))=(u, v)|\{X_i\}_{i<n}) \\
    & = \mathbb{P}(X_n=u|\{X_i\}_{i<n}) \In(f(u)=v) \\
    & = \mathbb{P}(X_n=u|X_{n-1}) \In(f(u)=v) \\
    & = \mathbb{P}((X_n,f(X_n))=(u, v)|X_{n-1}) \\
    & = \mathbb{P}((X_n,f(X_n))=(u, v)|(X_{n-1},f(X_{n-1}))),
\end{align*}
so the process $(X, f(X))$ is Markov.

This Markov chain clearly has a $U$ marginal that is equal in distribution to $X$ and the $V$ marginal is equal in distribution to $Y$ as we proved above. 
Thus the joint process is a coupling of $X$ and $Y$.
So it suffices to check the transition coupling condition.
Let $\bR \in [0,1]^{|U||V| \times |U||V|}$ denote the transition matrix satisfying
\begin{equation*}
\def\arraystretch{1.5}
\bR(u',v'|u,v) = \left\{	\begin{array}{ll} 
	\bP(u'|u)\In(f(u') = v') \quad & f(u) = v \\
	\bP(u'|u)\bQ(v'|v) \quad & \mbox{otherwise}
\end{array}\right. .
\end{equation*}

Let $\bp$ be the stationary distribution of $X$. Then $\lambda(u,v) = \bp(u) I(f(u)=v)$ is equal in distribution to $(X_0,f(X_0))$.
Observe that for all $(u',v') \in U \times V$, we have
\begin{align*}
\sum\limits_{(u,v) \in U \times V} \lambda(u,v) \bR(u',v'|u,v) &= \sum\limits_{(u,v) \in U \times V} \bp(u) \In(f(u)=v) \bP(u'|u) \In(f(u')=v') \\
&= \In(f(u')=v') \sum\limits_{v \in V} \sum\limits_{u \in f^{-1}(v)} \bp(u) \bP(u'|u) \\
&= \In(f(u')=v') \sum\limits_{u \in U} \bp(u) \bP(u'|u) \\
&= \bp(u') \In(f(u') = v') \\
&= \lambda(u', v'),
\end{align*}
and therefore $\lambda$ is stationary for $\bR$. So lastly, we only need to show that $\bR \in \Pitc (\bP, \bQ)$.
For pairs $(u, v) \in U \times V$ with $f(u) \neq v$, we see that $\bR( \cdot|u,v)$ is the independent coupling of $\bP(\cdot|u)$ and $\bQ(\cdot|v)$, which clearly satisfies the transition coupling condition. Thus we need only check the transition coupling condition for pairs $(u,v)$ satisfying $f(u) = v$.
Let $u, u' \in U$ and $v\in V$ and suppose that $f(u)=v$.
Then
\begin{equation*}
    \sum\limits_{v' \in V} \bR(u', v'|u,v) = \sum_{v' \in V} \bP(u'|u) \In(f(u')=v') = \bP(u'|u).
\end{equation*}
Now checking the other half of the transition coupling condition, let $u \in U$ and $v, v'\in V$ be such that $f(u)=v$.
Then by Equation (\ref{eq:factor_trans_probs}), we have
\begin{equation*}
    \sum_{u' \in U} \bR(u',v'|u,v) = \sum\limits_{u' \in U} \bP(u'|u) \In(f(u')=v') = \sum\limits_{u' \in f^{-1}(v')} \bP(u'|u) = \bQ( v'|v).
\end{equation*}
Thus $(X, f(X))$ is a transition coupling of $X$ and $Y$. It is forward-deterministic by construction.

Now we prove 2. To that end, suppose $(\tilde{X},\tilde{Y})$ is a forward-deterministic coupling of $X$ and $Y$, and let $f : U \to V$ be the induced map. For notation, let $R$ be the transition matrix associated to the joint Markov chain $(\tilde{X},\tilde{Y})$. To verify that $f$ is a factor map, let $v,v' \in V$ and $u \in f^{-1}(v)$. Since $(\tilde{X},\tilde{Y})$ is forward deterministic, for every $u' \in f^{-1}(v')$ we have that $\bP(u' \mid u) = R((u',v') \mid (u,v))$. Then, also using the transition coupling property of $R$, we see that
\begin{align*}
\sum_{u' \in f^{-1}(v')} \bP(u' \mid u) = \sum_{u' \in f^{-1}(v')} R( (u',v') \mid (u,v) ) = \sum_{u' \in U} R( (u',v') \mid (u,v) )  = \bQ(v',v).
\end{align*}
\end{proof}

\factor*
\begin{proof}
By part 1. of Theorem \ref{thm:factortc}, we have that $(X,f(X))$ is a transition coupling of $X$ and $Y$.
Let $(\tilde{X}, \tilde{Y})$ be any transition coupling of $X$ and $Y$.
Then,
\begin{align*}
    \mathbb{E}c(\tilde{X}_0,\tilde{Y}_0) \geq \mathbb{E}c(\tilde{X}_0,f(\tilde{X}_0)) = \mathbb{E}c(X_0, f(X_0)).
\end{align*}
Taking an infimum over all transition couplings $(\tilde{X}, \tilde{Y})$ of $X$ and $Y$, we conclude that $(X, f(X))$ is an optimal transition coupling of $X$ and $Y$,
as desired.
\end{proof}

In the following proof of our two-factor result (Theorem \ref{thm:two_factor}), we will use the notion of relatively independent couplings. Suppose $X$, $Y$, and $Z$ are random variables and there are maps $f$ and $g$ such that $f(X) \eqd Z$ and $g(Y) \eqd Z$. The main property that we need is that there exists a coupling $(\tilde{X}, \tilde{Y})$ of $X$ and $Y$ such that $f(\tilde{X}) = g(\tilde{Y})$ almost surely. The existence of such a coupling is usually demonstrated by constructing the \textit{relatively independent coupling} of $X$ and $Y$ over $Z$, which is defined by the property that
\begin{equation*}
\mathbb{P}( \tilde{X} \in A, \tilde{Y} \in B) = \mathbb{E} \bigl[ \mathbb{P}(X \in A \mid Z) \cdot \mathbb{P}( Y \in B \mid Z) \bigr],
\end{equation*}
where the expectation is taken with respect to $Z$.
In words, the relatively independent coupling makes $\tilde{X}$ and $\tilde{Y}$ conditionally independent given $Z$.
This construction is useful in optimal transport for proving the triangle inequality. In the context of ergodic theory, when the random variables are replaced by stationary processes, it is called the relatively independent joining of $X$ and $Y$ \citep{de2005introduction}. We note if the stationary processes are the random walks on some strongly connected networks and the maps $f$ and $g$ are factor maps in the sense of Section \ref{sec:factors}, then the relatively independent joinings are in fact transition couplings.
\begin{prop} \label{prop:RelInd}
Suppose $G_1 = (U,E_1,w_1)$, $G_2 = (V, E_2, w_2)$, and $G_3 = (W, E_3, w_3)$ are strongly connected weighted directed networks with associated random walks $X$, $Y$, and $Z$. Further suppose that there are factor maps $f : U \to W$ from $G_1$ to $G_3$ and $g : V \to W$ from $G_2$ to $G_3$. Then there is a transition coupling $(\tilde{X},\tilde{Y})$ of $X$ and $Y$  such that $f(\tilde{X}) = g(\tilde{Y})$ holds almost surely.
\end{prop}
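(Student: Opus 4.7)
The plan is to construct the required coupling explicitly via a relatively independent construction over $Z$. Let $\bP, \bQ, \bR$ denote the transition kernels of $X, Y, Z$ with stationary distributions $\bp, \bq, \br$, and set $\Delta = \{(u,v) \in U \times V : f(u) = g(v)\}$. For $(u,v) \in \Delta$ I would define
\[
\mathbf{S}((u',v') \mid (u,v)) \; = \; \frac{\bP(u' \mid u) \, \bQ(v' \mid v)}{\bR(f(u') \mid f(u))} \, \In(f(u') = g(v')),
\]
and $\lambda(u,v) = \bp(u)\bq(v)/\br(f(u))$ for $(u,v) \in \Delta$, zero elsewhere. The goal is to show that the stationary Markov chain $(\tilde{X}, \tilde{Y})$ with transition kernel $\mathbf{S}$ and initial distribution $\lambda$ is the required transition coupling.

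Several properties are straightforward consequences of the factor conditions. Equation \eqref{eq:stat_dist_factor_def} applied to $f$ and $g$ implies that $\lambda$ is a probability distribution on $\Delta$ with $U$-marginal $\bp$ and $V$-marginal $\bq$. The factor condition \eqref{eq:factor_trans_probs} applied to $g$ gives $\sum_{v' \in g^{-1}(f(u'))} \bQ(v' \mid v) = \bR(f(u') \mid f(u))$ on $\Delta$, from which $\sum_{v'} \mathbf{S}((u',v') \mid (u,v)) = \bP(u' \mid u)$; the symmetric computation with $f$ yields $\sum_{u'} \mathbf{S}((u',v') \mid (u,v)) = \bQ(v' \mid v)$, establishing the transition coupling condition \eqref{defn:TC}. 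Since $\lambda$ is supported on $\Delta$ and $\mathbf{S}$ assigns zero mass off $\Delta$, the chain stays on $\Delta$, so $f(\tilde{X}_n) = g(\tilde{Y}_n)$ for every $n$ almost surely.

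The crux is verifying stationarity of $\lambda$ under $\mathbf{S}$. For $(u',v') \in \Delta$ with $w' = f(u') = g(v')$, a direct computation gives
\[
\sum_{(u,v) \in \Delta} \lambda(u,v) \, \mathbf{S}((u',v') \mid (u,v)) \; = \; \sum_{w \in W} \frac{A_w(u') \, B_w(v')}{\br(w) \, \bR(w' \mid w)},
\]
where $A_w(u') = \sum_{u \in f^{-1}(w)} \bp(u) \bP(u' \mid u)$ and $B_w(v') = \sum_{v \in g^{-1}(w)} \bq(v) \bQ(v' \mid v)$. Stationarity will follow from the key identity
\[
A_w(u') \; = \; \frac{\bp(u') \, \br(w) \, \bR(w' \mid w)}{\br(w')}, \qquad u' \in f^{-1}(w'),
\]
together with its analogue for $B_w(v')$: substituting these and invoking $\sum_w \br(w) \bR(w' \mid w) = \br(w')$ collapses the sum to $\bp(u') \bq(v')/\br(w') = \lambda(u',v')$.

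The principal obstacle is therefore this identity, which is equivalent to $f$ being a factor map for the time-reversed kernel $\hat{\bP}(u \mid u') := \bp(u)\bP(u' \mid u)/\bp(u')$ of $X$. I plan to deduce it by invoking both parts of Theorem \ref{thm:factortc}. Part (1) yields that $(X, f(X))$ is a deterministic transition coupling of $X$ and $Z$; time-reversing this stationary Markov chain preserves stationarity, the Markov property, and the pointwise relation $Z_n = f(X_n)$, so $(\hat{X}, f(\hat{X}))$ is a deterministic transition coupling of the reverse chains $\hat{X}$ and $\hat{Z}$. Part (2) applied to this reversed coupling then gives that $f$ is a factor map with respect to the kernels $\hat{\bP}$ and $\hat{\bR}$, i.e., $\sum_{u \in f^{-1}(w)} \hat{\bP}(u \mid u') = \hat{\bR}(w \mid w')$ for $u' \in f^{-1}(w')$. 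Unpacking the definitions of $\hat{\bP}$ and $\hat{\bR}$ via detailed balance rearranges this factor condition into the desired identity.
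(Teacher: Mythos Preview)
Your construction---the kernel $\mathbf{S}$ and the product-form distribution $\lambda$---is exactly the one the paper writes down; the paper simply declares $\lambda$ to be ``the stationary distribution'' of $\mathbf{S}$ without further argument. You are right to flag stationarity as the crux, but your proposed justification is circular, and in fact the key identity is false. The identity $A_w(u')=\bp(u')\,\br(w)\,\bR(w'\mid w)/\br(w')$ is equivalent, as you note, to $f$ being a factor map for the \emph{reversed} kernels $\hat{\bP},\hat{\bR}$. You argue for this by time-reversing the deterministic transition coupling $(X,f(X))$ and invoking Theorem~\ref{thm:factortc}(2). But Part~(2) requires its input to already be a transition coupling, and checking the second marginal half of condition~\eqref{defn:TC} for the reversed chain $(\hat X,f(\hat X))$ is precisely the statement $\sum_{u\in f^{-1}(w)}\hat{\bP}(u\mid u')=\hat{\bR}(w\mid w')$ that you are trying to prove. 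Worse, the identity genuinely fails: take $W=\{1,2\}$ with all four $\bR$-entries equal to $\tfrac12$; $U=\{a,b,c\}$, $f(a)=f(b)=1$, $f(c)=2$; and $\bP(\cdot\mid a)=(\tfrac14,\tfrac14,\tfrac12)$, $\bP(\cdot\mid b)=(\tfrac{3}{20},\tfrac{7}{20},\tfrac12)$, $\bP(\cdot\mid c)=(\tfrac15,\tfrac{3}{10},\tfrac12)$. Then $f$ is a factor map and $\bp=(\tfrac{7}{36},\tfrac{11}{36},\tfrac12)$, but $\hat{\bP}(c\mid a)=\tfrac{18}{35}\neq\tfrac12=\hat{\bR}(2\mid 1)$. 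Taking $G_2=G_1$ and $g=f$, one checks directly that $(\lambda\mathbf{S})(a,a)=\tfrac{19616}{259200}\neq\tfrac{19600}{259200}=\lambda(a,a)$, so $\lambda$ is not stationary for $\mathbf{S}$.

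The proposition itself is still correct, and the repair is easy: abandon the explicit $\lambda$. The kernel $\mathbf{S}$ restricted to $\Delta$ is a stochastic matrix on a finite set and therefore admits \emph{some} stationary distribution $\mu$ supported on $\Delta$. Since you have already verified the marginal conditions~\eqref{defn:TC} for $\mathbf{S}$, any $\mu$ stationary for $\mathbf{S}$ necessarily has $U$-marginal $\bp$ and $V$-marginal $\bq$ (by uniqueness of the stationary distributions of $\bP$ and $\bQ$). The chain with initial law $\mu$ and kernel $\mathbf{S}$ is then the desired transition coupling, and its support in $\Delta$ gives $f(\tilde X)=g(\tilde Y)$ almost surely.
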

\begin{proof}
Define the coupling $(\tilde{X},\tilde{Y})$ to be the Markov chain with stationary distribution $r$ and transition kernel $\bR$ given as follows. For $w \in W$ with $\mathbb{P}(Z_0 = w)>0$ and $(u,v) \in U \times V$ such that $f(u) = g(v) = w$, let
\begin{equation*}
r(u,v) = \frac{\bp(u) \cdot \bq(v)}{ \mathbb{P}(Z_0 = w)},
\end{equation*}
and otherwise let $r(u,v) = 0$.
Furthermore, for $(w,w') \in E_3$ and $(u,v), (u',v') \in U \times V$ such that $f(u) = g(v) = w$ and $f(u') = g(v') = w'$, let
\begin{equation*}
R( (u',v') \mid (u,v) ) = 
                    \frac{\bP(u' \mid u) \cdot \bQ(v' \mid v) }{ \mathbb{P}( Z_1 = w' \mid Z_0 = w)}.
                    \end{equation*}
 If $f(u) = g(v) = w$ while $(u',v')$ does not satisfy $f(u') = g(v') = w'$, then let $R( (u',v') \mid (u,v) ) = 0$. Finally, if $f(u) = g(v) = w$ does not hold, then let $R( (u',v') \mid (u,v) ) = \bP(u' \mid u) \cdot \bQ(v \mid v')$. Using this definition, one may immediately verify Condition \eqref{def:TC}, and thus $(\tilde{X},\tilde{Y})$ is a transition coupling of $X$ and $Y$. Furthermore, by construction we have $f(\tilde{X}) = g(\tilde{Y})$ almost surely.
\end{proof}

With this result in hand, we may now proceed to our second main result concerning factors. 

\twofactor*
\begin{proof}
Let $(\tilde{X},\tilde{Y})$ be a transition coupling of $X$ and $Y$. Since $f$ and $g$ are factor maps, we see that $(f(\tilde{X}), g(\tilde{Y}))$ is a transition coupling of $W$ and $Z$. Then by the compatibility condition on the cost function we have
\begin{equation} \label{eqn:2factor}
\mathbb{E} c_{ext}(\tilde{X},\tilde{Y}) = \mathbb{E} c( f(\tilde{X}), f(\tilde{Y})).
\end{equation}

To ease the notational burden of the following argument, we do not distinguish between different couplings of the same random variables. In particular, the notation $\tilde{X}$ may represent formally distinct random variables (defined on different probability spaces) from one instance to the next, although it always denotes a random variable that is equal in distribution to $X$.

Now let $(\tilde{W},\tilde{Z})$ be a transition coupling of $W$ and $Z$. We claim that there exists a transition coupling $(\tilde{X},\tilde{Y})$ such that $(f(\tilde{X}), g(\tilde{Y})) \eqd (\tilde{W},\tilde{Z})$. To exhibit the desired transition coupling, we repeatedly use Proposition \ref{prop:RelInd}. Let $(\tilde{X},\tilde{W},\tilde{Z})$ denote the relatively independent joining of $(X,f(X))$ and $(\tilde{W},\tilde{Z})$ over $W$ (with the factor maps given by the natural projections of $(X,f(X))$ onto $f(X) \eqd W$ and of $(\tilde{W},\tilde{Z})$ onto $\tilde{W} \eqd W$, respectively). Similarly, let $(\tilde{W}, \tilde{Z},\tilde{Y})$ be the relatively independent joining of $(\tilde{W},\tilde{Z})$ and $(g(Y), Y)$ over $Z$. Now let $(\tilde{X}, \tilde{Y}, \tilde{W}, \tilde{Z})$ be the relatively independent joining of $(\tilde{X},\tilde{W},\tilde{Z})$ and $(\tilde{W}, \tilde{Z},\tilde{Y})$ over $(\tilde{W}, \tilde{Z})$. Then the projection of $(\tilde{X}, \tilde{Y}, \tilde{W}, \tilde{Z})$ onto the first two coordinates gives a transition coupling of $X$ and $Y$ with the property that $(f(\tilde{X}), g(\tilde{Y})) \eqd (\tilde{W},\tilde{Z})$. We have thus established that every transition coupling $(\tilde{W},\tilde{Z})$ of $W$ and $Z$ can be written as $(f(\tilde{X}), g(\tilde{Y}))$ for some transition coupling $(\tilde{X},\tilde{Y})$ of $X$ and $Y$.

The two conclusions of the theorem are immediate consequences of (\ref{eqn:2factor}) and the result of the previous paragraph.
\end{proof}

\vskip.3in

\section*{Acknowledgments}
KO and ABN were supported in part by NSF grants DMS-1613072 and DMS-1613261.
KM gratefully acknowledges the support of NSF CAREER grant DMS-1847144.
BY, KM, and ABN would like to acknowledge the support of NSF DMS-2113676.

\vskip.3in

\bibliographystyle{rss}
\bibliography{references}

\newpage
\begin{appendices}
\section{Experimental Details}
\label{app:exp_details}
In this appendix, we provide further details for the experiments discussed in Section \ref{sec:experiments}. We refer to \texttt{ExactOTC} as the exact implementation of NetOTC, while \texttt{EntropicOTC} specifically represents the regularized version of the NetOTC algorithm.

\subsection{Network Classification}
\label{app:class_exp_details}

In order to compute approximate solutions to the NetOTC problem, we used the \texttt{EntropicOTC} algorithm with $L = 10$, $T = 50$, $\xi = 100$, and $50$ Sinkhorn iterations.
The FGW cost was computed with a default parameter choice of $\alpha = 0.5$.
The experiment was run on Matlab and a 24-core node in a university-owned computing cluster.

\subsection{Network Isomorphism}
\label{app:iso_exp_details}

The classes of networks we dealt with in this experiment are as follows. We give details on how we generated the random networks.
\begin{itemize}
    \item SBM: A description of an SBM is provided in Section \ref{sec:sbm_alignment}. The given set informs the number of vertices in each block. For example, SBM (7,7,7,7) indicates an SBM having 4 blocks with 7 vertices in each block. SBM (10,8,6) has 3 blocks, and each block has 10, 8, and 6 vertices. The connection probabilities within the block were fixed to 0.7, and the probabilities between blocks were 0.1.
    \item Erdos-Renyi network: Erdos-Renyi network is a random network in which every pair of vertices is connected with an independent probability $p$ by an unweighted edge.
    Remark that the Erdos-Renyi network is equivalent to an SBM with a single block. 
    For each network generation, the number of vertices was randomly determined from the given set. The connection probability $p$ was fixed as given in each class.
    \item Random weighted adjacency matrix: Each element of the adjacency matrix is randomly sampled from the given set. For example, random weighted adjacency matrix $\{0,1,2\}$ indicates a network that its adjacency matrix elements are all sampled from the set $\{0,1,2\}$. In order to restrict the network to an undirected network, we only sample the upper triangular matrix and the lower triangular matrix is symmetrically filled. Note that the number of vertices is randomly determined between 6 and 20.
    \item Random Lollipop network: An example of a lollipop network is presented in Figure \ref{fig:isomorphism_example}. 
    A lollipop consists of a candy part and a stick part. The number of vertices in the candy part is randomly chosen between 7 and 15. The number of vertices in the stick part is also determined between 7 and 15. We also add edges inside the candy to vary the lollipop. With a probability of 0.5, we connect an edge between pair of vertices in the candy.
\end{itemize}
 
We discuss how we establish if the algorithm successfully identified the isomorphism map. For given two isomorphic networks $G_1 = (U, E_1, w_1)$ and $G_2 = (V, E_2, w_2)$, each algorithm returns an vertex alignment $\pi_{\text{v}}: (U,V) \rightarrow [0,1]$. Define a hard alignment function $\psi(\cdot) = \argmax_{v\in V} \pi(\cdot, v)$, where $\psi:U\rightarrow V$ returns the most aligned vertex in $G_2$ for each vertex in $G_1$.
If $\psi$ satisfies the following conditions, the algorithm correctly detects the isomorphism.
\begin{enumerate}
    \item $\psi$ is bijective.
    \item For every $(u_1,u_2)\in E_1$, $(\psi(u_1),\psi(u_2)) \in E_2$ and $w_2(\psi(u_1),\psi(u_2)) = w_1(u_1,u_2)$.
    \item For every $(v_1,v_2)\in E_2$, $(\psi^{-1}(v_1),\psi^{-1}(v_2)) \in E_1$ and  $w_1(\psi^{-1}(v_1),\psi^{-1}(v_2)) = w_2(v_1,v_2)$.
\end{enumerate}

Figure \ref{fig:wheel_fgw_ot} shows an additional example of detecting network isomorphism. As in Figure \ref{fig:isomorphism_example}, NetOTC successfully detects isomorphism, while OT and FGW do not.

\begin{figure}[ht]
\centering
\begin{subfigure}{0.7\textwidth}
\includegraphics[width=\textwidth, trim=0.15cm 0.15cm 0.15cm 0.15cm, clip]{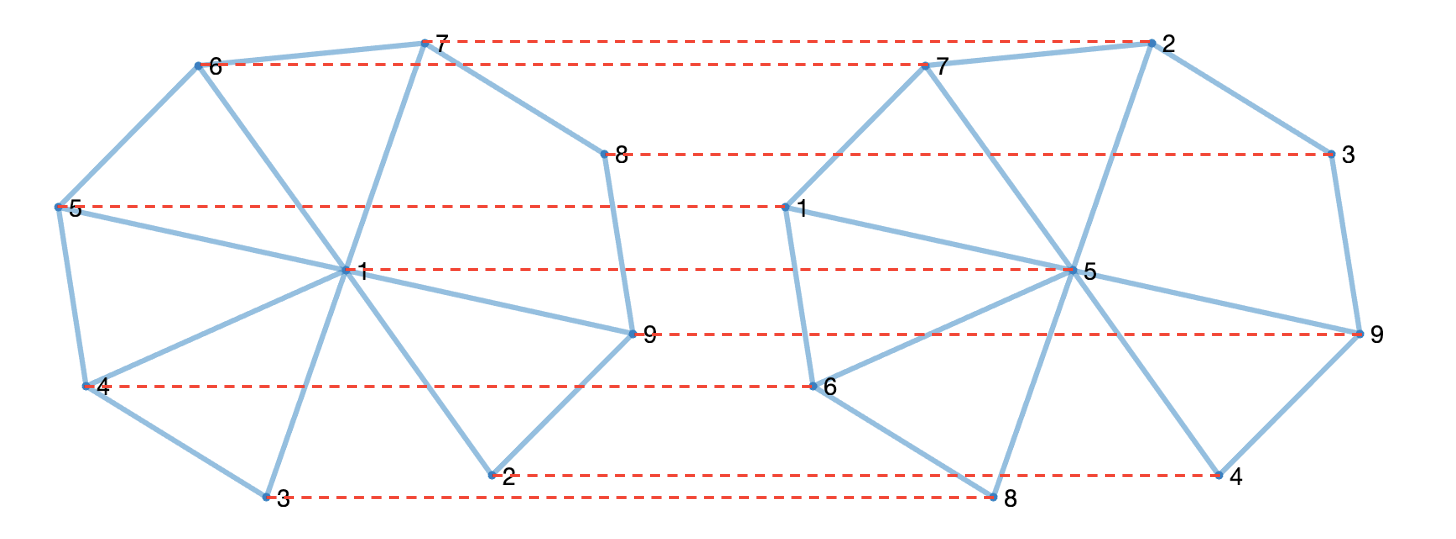}
\caption{NetOTC alignment.}
\end{subfigure}
\begin{subfigure}{0.7\textwidth}
\includegraphics[width=\textwidth, trim=0.15cm 0.15cm 0.15cm 0.15cm, clip]{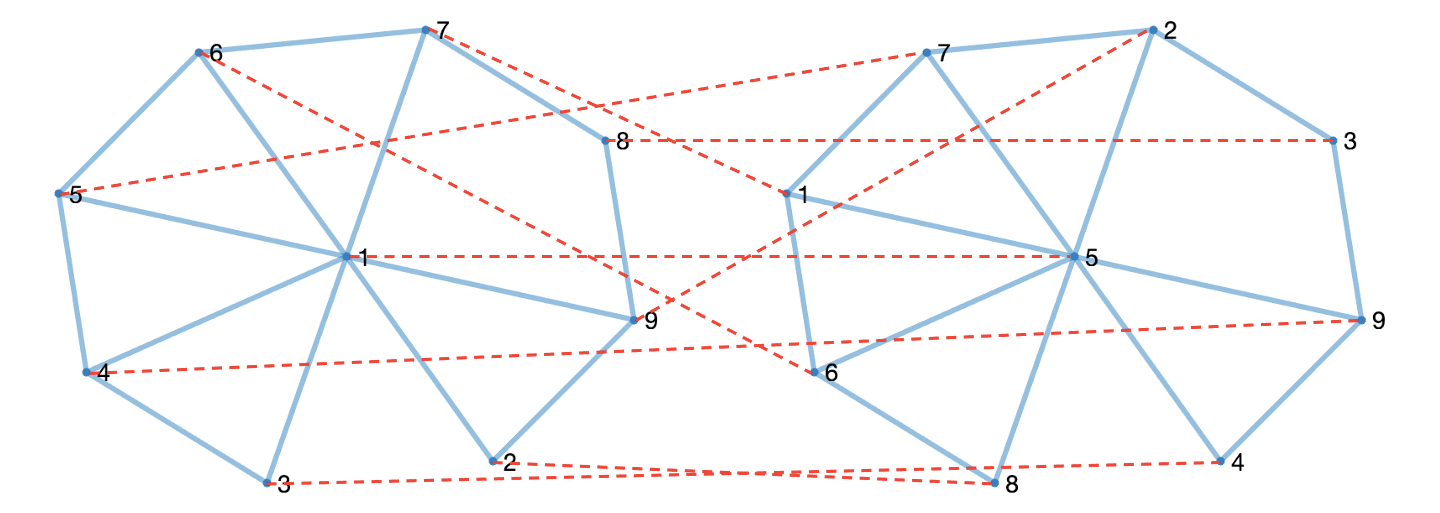}
\caption{OT alignment.}
\end{subfigure}
\begin{subfigure}{0.7\textwidth}
\includegraphics[width=\textwidth, trim=0.15cm 0.15cm 0.15cm 0.15cm, clip]{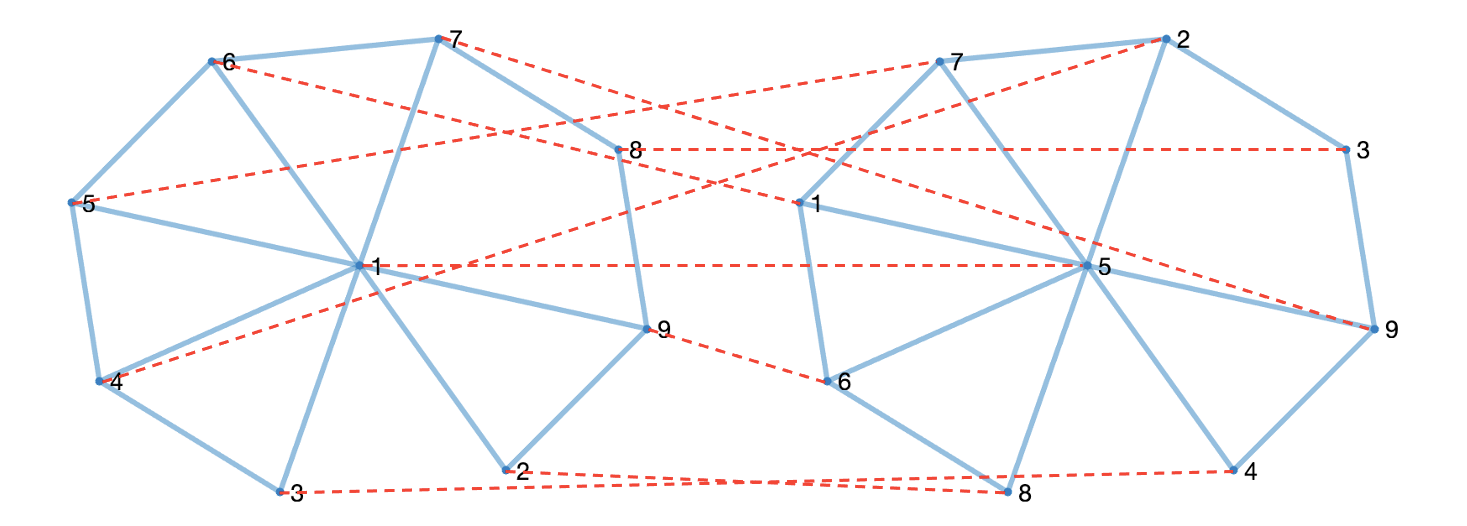}
\caption{FGW alignment.}
\end{subfigure}
\caption{Alignment of two isomorphic wheel networks obtained by NetOTC, OT, and FGW. To make the task challenging, two edges were removed.}
\label{fig:wheel_fgw_ot}
\end{figure}

The \texttt{ExactOTC} algorithm was utilized to solve the NetOTC problem, and a fixed $\alpha = 0.5$ was used for FGW. 
Note that the algorithms were applied only to the connected network among the generated networks. 95.48\% of the generated networks were connected on average.
The experiment was developed in Matlab and run on a 24-core node in a university-owned computing cluster.

\subsection{Stochastic Block Model Alignment}
\label{app:sbm_exp_details}

Cross validation was performed for FGW (to select $\alpha \in \{0, 0.1, ..., 1\}$) by randomly generating 10 pairs of SBM networks and computing alignments of vertices and edges.
Parameters that yielded the highest average alignment accuracy were selected, where separate parameters were chosen for optimizing vertex and edge alignment.
0.1 and 1 were selected for vertex and edge alignment, respectively. We note that this implies GW and FGW were equivalent for edge alignment in our experiment.
The \texttt{ExactOTC} algorithm was used to compute solutions to the NetOTC problem.
The experiment was developed and run in Matlab on a personal machine.

\subsection{Network Factors}
\label{app:factors_exp_details}

Network $G_2$ has $b$ vertices embedded in $\mathbb{R}^5$ where the vertices are sampled from $\mathcal{N}_5(0, \sigma^2 \mathbf{I}_5)$. We denote the vertices of $G_2$ as $V_1, \cdots, V_b$. 
Then, we sample $m$ points from $\mathcal{N}_5(V_i, \mathbf{I}_5)$ for each $i=1,...,b$ and the points will be the vertices of $G_1$. The total number of vertices of $G_1$ is $bm$. We set $b=6$ and $m=5$ in this experiment. 
We used $\alpha=0.5$ as a default trade-off parameter when applying FGW. We note that the choice of $\alpha \in \{0,0.1,\dots,1\}$ doesn't affect the alignment result much.

This experiment was conducted not only in exact factor situations but also when we have approximate factors. We call it an approximate factor when Definition \ref{def:factor} approximately holds as follows. For every $v, v' \in V$ and a given error rate $\epsilon > 0$,
$$\sum \limits_{u' \in f^{-1}(v')} w_1(u, u') \in \left[ (1-\epsilon) \frac{d_1(u)}{d_2(v)}w_2(v, v'), (1+\epsilon) \frac{d_1(u)}{d_2(v)}w_2(v, v') \right], \quad \forall u \in f^{-1}(v),$$
and 
$$ \sum_{u \in f^{-1}(v)} \sum_{u' \in f^{-1}(v')} w_1(u,u') = \sum_{u \in f^{-1}(v)} \frac{d_1(u)}{d_2(v)} w_2(v,v').$$
In particular, we allowed 5\% error ($\epsilon=0.05$) for the second condition in this experiment.

\begin{table}[t]
\small
\centering
\begin{tabularx}{0.5\textwidth}{@{} X *6{c} @{}}
\toprule
  & \multicolumn{1}{c}{Exact factor} & \multicolumn{1}{c}{Approximate factor} \\
  \cmidrule(r){2-2} \cmidrule(l){3-3}
  & NetOTC & NetOTC  \\ \midrule
$\sigma = 2.5$ & \textbf{100.00} $\pm$ \textbf{0.00}  & \textbf{97.94} $\pm$ \textbf{0.26}\\ 
$\sigma = 2.0$ & \textbf{99.91} $\pm$ \textbf{0.65} & \textbf{97.57} $\pm$ \textbf{1.03}\\
$\sigma = 1.5$ & \textbf{97.46} $\pm$ \textbf{4.84} & \textbf{95.87} $\pm$ \textbf{3.37} \\
$\sigma = 1.0$ & \textbf{84.09} $\pm$ \textbf{11.02} & \textbf{84.06} $\pm$ \textbf{11.07}\\ \bottomrule
\end{tabularx}
\caption{Directed networks: alignment accuracies of network factors. Average accuracies observed over 100 random network factors are reported along with their standard deviation.}
\label{table:graph_factors_directed}
\end{table}

Table \ref{table:graph_factors_directed} reports the vertex alignment accuracy of NetOTC for directed networks at various variance settings. We may check that the performances are similar to applying NetOTC to undirected factor network pairs.
Similar to the Stochastic Block Model Alignment experiment, the \texttt{ExactOTC} algorithm was used to obtain solutions to the NetOTC problem and was run in Matlab on a personal machine.

\subsection{Runtime Analysis of NetOTC}

\begin{table}[t!]
\centering
\begin{tabular}{cccccccc} \toprule
  & AIDS & BZR & Cuneiform & MCF-7 & MOLT-4 & MUTAG & Yeast \\  \midrule \midrule
 \# of Vertices & 13.25  & 34.65  & 20.2 & 26.65  & 26.7  &  17.9 & 20.43 \\  \midrule
ExactOTC & 4.35  & 37.83  & 10.52  & 22.78 & 24.47 &   9.12 & 10.82\\  \midrule
 EntropicOTC & 0.43 & 5.57  & 1.04 & 2.91 & 3.07  &  0.77  & 1.12 \\ 
\bottomrule
\end{tabular}
\caption{Average Runtimes for datasets investigated in Section~\ref{sec:network_classification} (sec). We also report the average number of vertices for the randomly selected 20 pairs of graphs.}
\label{tab:classification_time}
\end{table}

\begin{table}[t!]
\centering
\begin{tabular}{ccc} \toprule
 & ExactOTC & EntropicOTC  \\  \midrule \midrule
 SBM-48-32 & 41.92 & 6.26  \\\midrule
 SBM-96-64 & 334.53 & 156.58 \\\midrule
  SBM-128-96 & 2143.42 & 1042.46 \\
\bottomrule
\end{tabular}
\caption{Runtimes for pairs of SBM (sec). SBM-$a$-$b$ denotes a comparison of two SBMs with a total of $a$ vertices and $b$ vertices. 
}
\label{tab:SBM_time}
\end{table}

We report the runtime of NetOTC for the experiments discussed in Sections~\ref{sec:network_classification} and \ref{sec:sbm_alignment}. The runtime metrics are presented for both \texttt{ExactOTC} and \texttt{EntropicOTC}, employing our Matlab implementation of these methods.
First, we randomly selected 20 pairs of graphs from each of the seven datasets used in Section~\ref{sec:network_classification}. The average computation time for each pair is detailed in Table~\ref{tab:classification_time}.
Next, we investigated the computation time for comparing two stochastic block models (SBMs). Following the procedure detailed in Section~\ref{sec:sbm_alignment}, we generated SBMs with 4 blocks. The within-block connection probabilities were set to 1, 0.8, 0.6, and 0.4, while the between-block connection probability was fixed at 0.1. The runtime results for comparing these SBMs are presented in Table~\ref{tab:SBM_time}.

\end{appendices}

\end{document}